\pgfplotsset{compat=newest}
\definecolor{color1}{RGB}{230, 159, 0}
\definecolor{color2}{RGB}{86, 180, 233}
\definecolor{color3}{RGB}{0, 158, 115}
\definecolor{color4}{RGB}{240, 228, 66}
\definecolor{color5}{RGB}{0, 114, 178}
\definecolor{color6}{RGB}{213, 94, 0}
\definecolor{color7}{RGB}{204, 121, 167}
\definecolor{mg}{RGB}{199,246,182}
\definecolor{pa}{RGB}{255,252,209}
\definecolor{cs}{RGB}{209,252,255}
\newcommand{\cmark}{\ding{51}}
\newcommand{\xmark}{\ding{55}}
\newcommand{\CABB}{\textsc{cabb}\xspace}
\newcommand{\Crop}{\textsc{crop}\xspace}
\newcommand{\Full}{\textsc{full}\xspace}
\newcommand{\Su}{\textsc{isus}\xspace}
\newcommand{\Cu}{\textsc{cus}\xspace}
\newcommand{\vct}[1]{\ensuremath{\boldsymbol{#1}}}
\newcommand{\argmin}{\operatornamewithlimits{\arg\,\min}}
\newtheorem{proposition}{Proposition}
\newcommand{\aka}{\emph{a.k.a.}\xspace}
\begin{document}

%%%%%%%%% TITLE
\title{Improving Panoptic Segmentation at All Scales}

\author{Lorenzo Porzi, Samuel Rota Bul\`o, Peter Kontschieder\\
Facebook\\
{\tt\small \{porzi,rotabulo,pkontschieder\}@fb.com}
}

\twocolumn[{%
\renewcommand\twocolumn[1][]{#1}%
\maketitle

\vspace{-20pt}
\begin{center}
  \includegraphics[width=\textwidth]{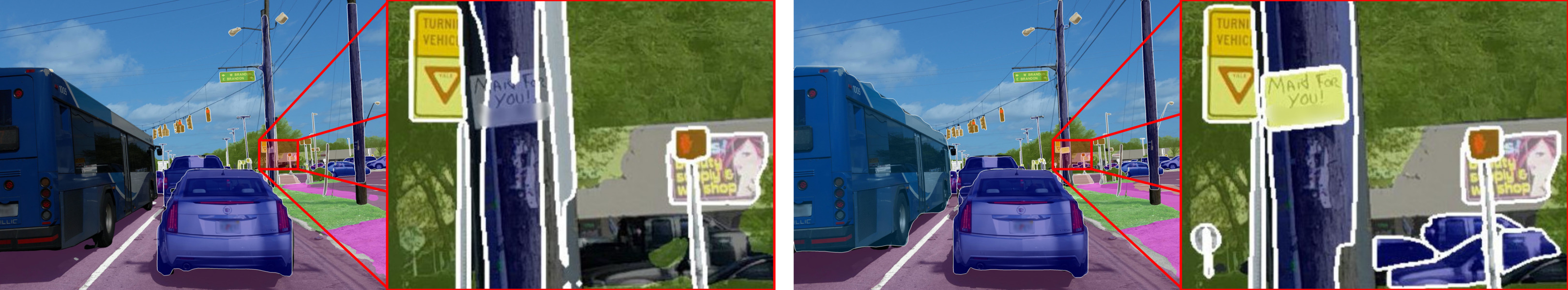}
  \captionof{figure}{Panoptic segmentation on high-resolution natural images is challenged with recognizing objects at a wide range of scales. Standard approaches (left) can struggle when dealing with very small (zoomed detail) or very large objects (bus on the left). By introducing a novel instance scale-uniform sampling strategy and a crop-aware bounding box loss, we are able to improve panoptic segmentation results at all scales (right).}
  \label{fig:teaser}
\end{center}%
}]

\maketitle

%%%%%%%%% ABSTRACT
\begin{abstract}
Crop-based training strategies decouple training resolution from GPU memory consumption, allowing the use of large-capacity panoptic segmentation networks on multi-megapixel images.
Using crops, however, can introduce a bias towards truncating or missing large objects.
To address this, we propose a novel crop-aware bounding box regression loss (\CABB loss), which promotes predictions to be consistent with the visible parts of the cropped objects, while not over-penalizing them for extending outside of the crop.
We further introduce a novel data sampling and augmentation strategy which improves generalization across scales by counteracting the imbalanced distribution of object sizes.
Combining these two contributions with a carefully designed, top-down panoptic segmentation architecture, we obtain new state-of-the-art results on the challenging Mapillary Vistas (MVD), Indian Driving and Cityscapes datasets, surpassing the previously best approach on MVD by +4.5\% PQ and +5.2\% mAP.
\end{abstract}

%%%%%%%%% BODY TEXT
\section{Introduction}
\label{sec:intro}

Panoptic segmentation~\cite{Kirillov18} is the task of generating per-pixel, semantic labels for an image, together with object-specific segmentation masks.
It is thus a combination of semantic segmentation and instance segmentation, \ie two long-standing tasks in computer vision that have been traditionally tackled separately.
Due to its importance for tasks like autonomous driving or scene understanding it has recently attracted a lot of interest in the research community.

The majority of deep-learning based panoptic segmentation architectures~\cite{Kirillov19,Por+19_cvpr,Li2018,Xiong_UBER_2019,mohan2020efficientps} proposed a combination of specialized segmentation branches -- one for conventional semantic segmentation and another one for instance segmentation -- followed by a combination strategy to generate a final panoptic segmentation result.
Instance segmentation branches in top-down panoptic architectures are dominantly designed on top of Mask R-CNN~\cite{He2017}, \ie a segmentation extension of Faster R-CNN~\cite{Ren+15} generating state-of-the-art mask predictions for given bounding boxes.
In contrast and more recently, bottom-up panoptic architectures~\cite{cheng2020panoptic,sofiiuk2019adaptis} have emerged but still lag behind in terms of instance segmentation performance.

Panoptic segmentation networks are typically solving multiple tasks (object detection, instance segmentation and semantic segmentation), and are trained on batches of full-sized images.
However, with increasing complexity of tasks and growing capacity of the network backbone, full-image training is quickly inhibited by available GPU memory, despite availability of memory-saving strategies during training like~\cite{RotPorKon18a,micikevicius2018mixed,Gomez2017,mlsys2020_196}.
Obvious mitigation strategies include a reduction of training batch size, downsizing of high-resolution training images, or building on backbones with lower capacity.
These workarounds unfortunately introduce other limitations:
i) Small batch sizes can lead to higher variance in the gradients which will reduce the effectiveness of Batch Normalization~\cite{IofSze15} and consequently the performance of the resulting model.
ii) Reducing the image resolution leads to a loss of fine structures which are known to strongly correlate with objects belonging to the long tail of the label distribution.
Downsampling the images is consequently amplifying already existing performance issues on small and usually underrepresented classes.
iii) A number of recent works~\cite{WangSCJDZLMTWLX19,cheng2019panopticdeeplab,YuanCW20} have shown that larger backbones with sophisticated strategies of maintaining high-resolution features are boosting panoptic segmentation results in comparison to those with reduced capacity.

A possible strategy to overcome the aforementioned issues is to move from full-image-based training to crop-based training.
This was successfully used for conventional semantic segmentation~\cite{RotPorKon18a,Chen2018ECCV,Chen2016}, which is however an easier problem as the task is limited to a per-pixel classification problem.
By fixing a certain crop size the details of fine structures can be preserved, and at a given memory budget, multiple crops can be stacked to form reasonably sized training batches.
For more complex tasks like panoptic segmentation, the simple cropping strategy also affects the performance on object detection and consequently on instance segmentation.
In particular, extracting fixed-size crops from images during training introduces a bias towards truncating large objects, with the likely consequence of underestimating their actual bounding box sizes during inference on full images (see, \eg Fig.~\ref{fig:teaser} left).
Indeed, Fig.~\ref{fig:mvd-areas} (left) shows that the distribution of box sizes during crop-based training on the high-resolution Mapillary Vistas~\cite{Neuhold2017} dataset does not match with the one derived from full-image training data.
In addition, Fig.~\ref{fig:mvd-areas} (right) shows that large objects (based on \# pixels) are drastically underrepresented, which may lead to over-fitting and thus further harming generalization.

In this paper we overcome these issue by introducing two novel contributions: 1) A crop-based training strategy exploiting a \textit{crop-aware loss} function (\CABB) to address the problem of cropping large objects, and 2) Instance scale-uniform (\Su) sampling as data augmentation strategy to combat the imbalance of object scales in the training data.
Our solution enjoys all benefits from crop-based training as discussed above.
In addition, our crop-aware loss incentivizes the model to predict bounding boxes to be consistent with the visible parts of cropped objects, while not over-penalizing predictions outside of the crop.
The underlying intuition is simple: Even if an object bounding box size was modified through cropping, the actual object bounding boxes may be larger than what is visible to the network during training.
By not penalizing hypothetical predictions beyond the visible area of a crop but still within their actual sizes, we can better model the bounding box size distribution given by the original training data.
With \Su we introduce an effective data augmentation strategy to improve feature-pyramid like representations as used for object detection at multiple scales.
It aims at more evenly distributing supervision of object instances during training across pyramid scales, leading to improved recognition performance of instances at all scales during inference.
In the experimental analyses we find that our crop-aware loss function is particularly effective on high-resolution images as available in the challenging Mapillary Vistas~\cite{Neuhold2017}, Indian Driving~\cite{Varma19}, or Cityscapes~\cite{Cordts2016} datasets.

\paragraph{Contributions.} We summarize our contributions to the panoptic segmentation research community as follows.
\begin{itemize}
  \item We introduce a novel, crop-aware training loss applicable to improving bounding box detection in panoptic segmentation networks when training them in a crop-based way.
    At negligible computational overhead ($\sim$10ms per batch) we show how our new loss addresses issues of crop-based training, considerably improving the performance on disproportionately often truncated bounding boxes.
    \item We describe a novel Instance Scale-Uniform Sampling approach to smooth the distribution of object sizes observed by a network at training time, improving its generalization across scales.
    \item We significantly push the state-of-the-art results on the high-resolution Mapillary Vistas dataset, improving on multiple evaluation metrics like Panoptic Quality~\cite{Kirillov18} (+4.5\%) and mean average precision (mAP) for mask segmentation (+5.2\%). We also obtain remarkable performance gains on IDD and Cityscapes, improving PQ by +0.6\% and mAP by +4.1\% and +1.5\%, respectively.
\end{itemize}

\section{Technical Contributions}
\label{sec:method}

In this section we present our main methodological contributions.
In particular, in Sec.~\ref{sec:su} we describe a novel Instance-Scale Uniform Sampling (\Su) approach aimed at reducing the object scale imbalance inherent in high-resolution panoptic datasets.
Sections~\ref{sec:regression},~\ref{sec:cabb} and~\ref{sec:comp_aspects} describe the Crop-Aware Bounding Box (\CABB) loss, which we propose as a mitigation to the bias imposed by crop-based training on the detection of large objects.

\subsection{Instance Scale-Uniform Sampling (\Su)}
\label{sec:su}

\begin{figure*}
\begin{tikzpicture}[baseline]
\pgfplotstableread[col sep=comma, header=false]{data/mvd_crop_iou_by_size.dat}\datatable
\begin{axis}[
    title={Cropped vs. original IoU},
    ylabel=IoU,
    width=0.5\textwidth,
    height=3cm,
    ybar, ymin=0.2, ymax=1,
    ymajorgrids=true,
    ytick={0.2,0.6,1},
    xticklabels from table={\datatable}{0},
    x tick label style={
        rotate=45,
        anchor=east},
    xtick=data,
    xtick pos=left,
    enlargelimits=0.1,
    cycle list name=colorblind-friendly
]
\addplot table [x expr=\coordindex, y=1] {\datatable};
\end{axis}
\end{tikzpicture}
\quad
\begin{tikzpicture}[baseline]
\pgfplotstableread[col sep=comma]{data/mvd_object_sizes.dat}\datatable
\begin{axis}[
    title=Number of objects by scale,
    ylabel=\# objects,
    ytick={0,2e5,4e5},
    xlabel=scale (px),
    xticklabels from table={\datatable}{scale},
    xtick=data,
    ybar,
    ymajorgrids=true,
    enlargelimits=.15,
    width=0.5\textwidth,
    height=3cm,
    cycle list name=colorblind-friendly
    ]
\addplot table [x expr=\coordindex, y expr=\thisrow{number}] {\datatable};
\end{axis}
\end{tikzpicture}
\caption{Left: average intersection over union of cropped bounding boxes \wrt their original extent, computed using the Mapillary Vistas training settings in Sec.~\ref{sec:exp-setup}. Right: distribution of object scales in the Mapillary Vistas training set.}
\label{fig:mvd-areas}
\end{figure*}
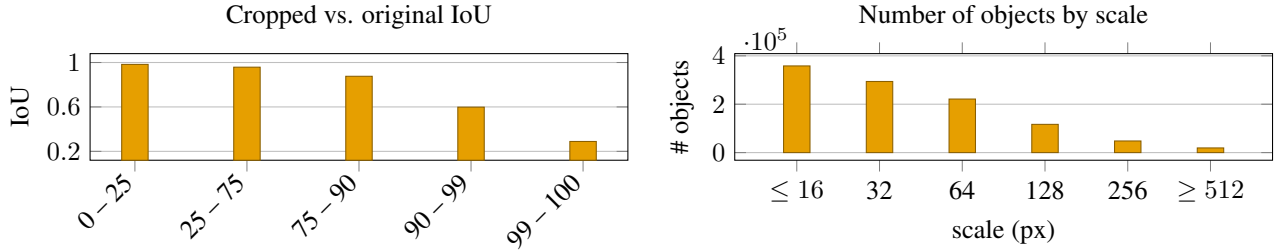

Most top-down panoptic segmentation networks build on top of backbones that produce a ``pyramid'' of features at multiple scales.
At training time, some heuristic rule~\cite{Kirillov19} is applied to split the ground truth instances across the available scales, such that the network is trained to detect small objects using high-resolution features and large objects using low-resolution features.
By sharing the parameters of the prediction modules (\eg the RPN and ROI heads of~\cite{Por+19_cvpr}) across all scales, the network is incentivized to learn scale-invariant features.
When dealing with high-resolution images, however, this approach encounters two major issues: i) the range of object scales can greatly exceed the range of scales available in the feature pyramid, and ii) the distribution of object scales is markedly non-uniform (see Fig.~\ref{fig:mvd-areas}).
While (i) can be partially addressed by adding more feature scales, at the cost of increased memory and computation, (ii) will lead to a strong imbalance in the amount of supervision received by each level of the feature pyramid.

In order to mitigate this imbalance, we propose an extension to the Class-Uniform Sampling (\Cu) approach introduced in~\cite{RotPorKon18a} we coin Instance Scale-Uniform Sampling (\Su).
The standard \Cu data preparation process follows four steps: 1) sample a semantic class with uniform probability; 2) load an image that contains that class and re-scale it such that its shortest side matches a predefined size $s_0$; 3) apply any data augmentation (\eg flipping, random scaling); and 4) produce a random crop from an area of the image where the selected class is visible.
In \Su, we follow the same steps as in \Cu, except that the scale augmentation procedure is made instance-aware.
In particular, when a ``thing'' class is selected in step 1 and after completing step 2, we also sample a random instance of that class from the image and a random feature pyramid level.
Then, in step 3 we compute a scaling factor $\sigma$ such that the selected instance will be assigned to the selected level according to the heuristic adopted by the network being trained.
In order to avoid excessively large or small scale factors, we clamp $\sigma$ to a limited range $r_\text{th}$.
Conversely, when a ``stuff'' class is selected in step 1, we follow the standard scale augmentation procedure, \ie uniformly sample $\sigma$ from a range $r_\text{st}$.
In the long run, \Su will have the effect of smoothing out the object scale distribution, providing more uniform supervision across all scales.

\subsection{Bounding box regression}
\label{sec:regression}

Most top-down panoptic segmentation approaches encode object bounding boxes in terms of offsets with respect to a set of reference boxes~\cite{Li2018,Por+19_cvpr,mohan2020efficientps}.
These reference boxes can be fixed, e.g. the ``anchors'' in the region proposal stage, or be the output of a different network section, e.g. the ``proposals'' in the detection stage.
The goal of a network component that predicts bounding boxes is to regress these offset values given the input image (or derived features thereof).

A ground-truth bounding box $\mathtt G$ is encoded in terms of a center $\vct c_\mathtt G\in\mathbb R^2$ and dimensions $\vct d_\mathtt G\in\mathbb R^2$.
Each ground-truth box is assigned a reference (or anchor) bounding box $\mathtt A$ with center $\vct c_\mathtt A\in\mathbb R^2$ and dimensions $\vct d_\mathtt A\in\mathbb R^2$. The ground truth for the training procedure is then encoded in relative terms and specifically given by $\Delta_\mathtt G=(\vct \delta_\mathtt G, \vct \omega_\mathtt G)$ where
\[
	\vct \delta_\mathtt G = \frac{\vct c_\mathtt G-\vct c_\mathtt A}{\vct d_\mathtt A}\in\mathbb R^2\qquad\text{and}\qquad \vct \omega_\mathtt G = \frac{\vct d_\mathtt G}{\vct d_\mathtt A}\in\mathbb R^2\,.
\]
Here and later, we implicitly assume for notational convenience that operations and functions applied to vectors work element-wise unless otherwise stated.
We will also use the notation $\ominus$ to denote the operation above that returns $\Delta_\mathtt G$ given bounding boxes $\mathtt G$ and $\mathtt A$, \ie $\Delta_\mathtt G=\mathtt G\ominus\mathtt A$.

Similarly, given an anchor bounding box $\mathtt A$ and $\Delta_\mathtt P=(\vct\delta_\mathtt P,\vct\omega_\mathtt P)$, we can recover the predicted bounding box $\mathtt P$ with center $\vct c_\mathtt P$ and dimensions $\vct d_\mathtt P$ as
\[
	\vct c_\mathtt P = \vct c_\mathtt A + \vct \delta_\mathtt P \vct d_\mathtt A\qquad\text{and}\qquad\quad \vct d_\mathtt P = \vct \omega_\mathtt P \vct d_\mathtt A\,.
\]

\paragraph{Standard bounding box loss ~\cite{Ren+15}.}
To train the network, the following per-box loss is minimized over the training dataset:
\begin{equation}
\label{eqn:base-loss}
L_\mathtt{BB}(\Delta_\mathtt P; \Delta_\mathtt G) = \Vert\ell_\beta(\vct \delta_\mathtt P - \vct \delta_\mathtt G) + \ell_\beta(\log \vct \omega_\mathtt P - \log \vct \omega_\mathtt G)\Vert_1\,,
\end{equation}
where $\Vert\cdot\Vert_1$ is the 1-norm and $\ell_\beta$ denotes the Huber (\emph{a.k.a.} smooth-L1) norm with parameter $\beta>0$, \ie
\begin{equation*}
  \ell_\beta(z) = \begin{cases}
    \frac{1}{2\beta} z^2 & |z| \leq \beta \\
    |z| - \frac{\beta}{2} & \text{otherwise,}
  \end{cases}
\end{equation*}
and $|z|$ gives the absolute value of $z$.

\subsection{Crop-Aware Bounding Box (\CABB)}
\label{sec:cabb}

In a standard crop-based training, a ground-truth bounding box $\mathtt G$ from the original image that overlaps with the cropping area $\mathtt C$ is typically cropped yielding a new bounding box denoted by $\mathtt G|_\mathtt C$.
\footnote{When masks are available like in instance or panoptic segmentation, the cropping operation is performed at the mask level and the bounding box is recomputed a posteriori. We implicitly assume that this is the case if a ground-truth mask is available for $\mathtt G$.}
Accordingly, the actual ground-truth $\Delta_\mathtt G$ that is used in the loss \eqref{eqn:base-loss} is the result of $\Delta_\mathtt G=\mathtt G|_\mathtt C \ominus \mathtt A$.
Training with this modified ground-truth, however, poses some issues, namely a bias towards cutting or missing big objects at inference time (see, \eg, Fig.~\ref{fig:teaser}~and~\ref{fig:qual-results}).

The solution we propose in this work consists in relaxing the notion of ground-truth bounding box $\mathtt G$ into a set of ground-truth boxes that coincide with $\mathtt G|_\mathtt C$ after the cropping operation. We denote by $\rho(\mathtt G,\mathtt C)$ the function that computes this set for given ground-truth box $\mathtt G$ and cropping area $\mathtt C$, \ie
\[
	\rho(\mathtt G,\mathtt C)=\{\mathtt X\in\mathcal B\,:\,\mathtt X|_\mathtt C = \mathtt G|_\mathtt C\}\,,
\]
where $\mathtt X$ runs over all possible bounding boxes $\mathcal B$. We refer to $\rho(\mathtt G,\mathtt C)$ as a Crop-Aware Bounding Box (\CABB) that in fact is a set of bounding boxes (see also Fig.~\ref{fig:cabb}).
If the ground-truth bounding box $\mathtt G$ is strictly contained in the crop area then our \CABB boils down to the original ground truth, for $\rho(\mathtt G,\mathtt C)=\{\mathtt G\}$ in that case.%
\footnote{
To simplify the description, we deliberately neglect the fact that a bounding box strictly contained in the original image and touching the boundary of the crop area should not be extended beyond the crop. However, our approach can be easily adapted to address these edge cases.
}
Since we will use a representation for bounding boxes relative to some anchor box $\mathtt A$ we introduce also the notation $\rho_\mathtt A(\mathtt G,\mathtt C)$, which returns the same set as above but with elements expressed relative to $\mathtt A$, \ie $\rho_\mathtt A(\mathtt G,\mathtt C)=\{\mathtt X\ominus\mathtt A\,:\,\,\mathtt X\in\rho(\mathtt G,\mathtt C)\}$.

\begin{figure}[t!]
	\centering
	\includegraphics[width=.6\columnwidth]{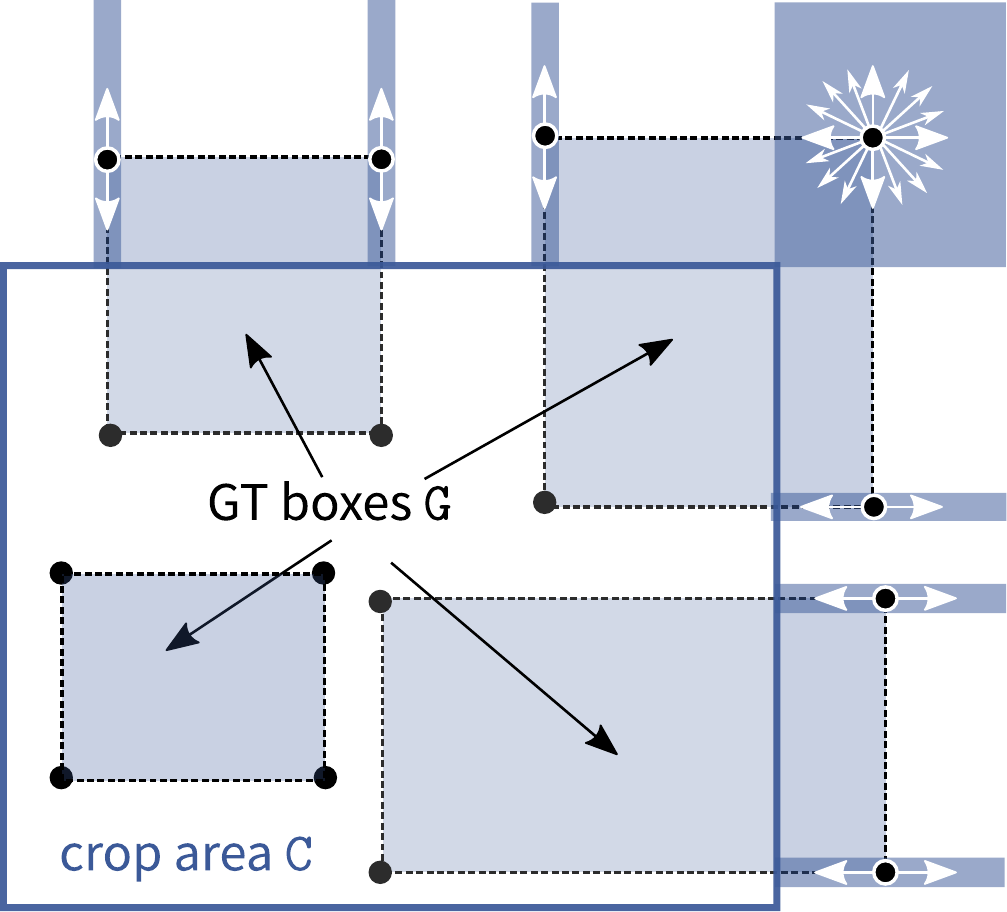}
	\caption{Example of Crop-Aware Bounding Boxes (\CABB). We show $4$ ground-truth boxes, three of which fall partially outside the crop area. The corresponding set $\rho(\mathtt G, \mathtt C)$, \aka \CABB, consists of all rectangular bounding boxes that can be formed by moving the white-bordered corners within the feasible areas (depicted in blue). Note that the areas extend to infinity but are truncated here.}
  \label{fig:cabb}
\end{figure}

\paragraph{Crop-aware bounding box loss.} In order to exploit the proposed, relaxed notion of ground-truth bounding box, we introduce the following new loss function for a given ground-truth box $\mathtt G$, anchor box $\mathtt A$ and crop area $\mathtt C$:
\begin{equation}\label{eq:new_loss}
\begin{aligned}
	L_\mathtt{CABB}({\Delta_\mathtt P}) = & \min_{{\Delta}} L_\mathtt{BB}({\Delta_\mathtt P}; {\Delta})\,, \\
 \text{s.t}\quad & \Delta\in\rho_\mathtt A(\mathtt G,\mathtt C)\,.
\end{aligned}
\end{equation}
Any bounding box in $\rho(\mathtt G,\mathtt C)$ is compatible with the cropped ground-truth box we observe and thus could be potentially a valid prediction. To disambiguate, our new loss favours the solution closer to the actual prediction from the network in order to enforce a smoother training dynamic.
Since the ground-truth box that is typically adopted for the standard loss in \eqref{eqn:base-loss} belongs to the feasible set of the minimization in our new loss, we have that $L_\mathtt{CABB}$ lower bounds $L_\mathtt{BB}$.

\subsection{Computational Aspects}\label{sec:comp_aspects}
This section focuses on the computational aspects of our new loss. In particular, we will address the problem of evaluating it by solving the internal minimization as well as computing the gradient.

The minimization problem that is nested into our new loss has no straightforward solution, since it is
neither convex nor quasi-convex and in general, local, non-global solutions might exist.
Its feasible set is convex in $\Delta=(\vct \delta,\vct \omega)$ since it can be written in terms of linear equalities and inequalities.
Each dimension gives rise to an independent set of constraints and since also the objective function is separable with respect to dimension-specific variables, we have that the whole minimization problem
can be separated into two independent minimization problems involving only dimension-specific variables.

\paragraph{Feasible set.}
Assume without loss of generality that the cropping area $\mathtt C$ is a box with top-left coordinate $(0,0)$ and bottom-right coordinate $\vct d_\mathtt C\in\mathbb R^2$.
Then the feasible set of each dimension-specific minimization problem can be written as:
\begin{itemize}
	\item $\delta-\frac{\omega}{2}\leq -\frac{c_\mathtt A}{d_\mathtt A}$ if $c_\mathtt G\leq\frac{d_\mathtt G}{2}$ else $\delta-\frac{\omega}{2}=\delta_\mathtt G-\frac{\omega_\mathtt G}{2}$ and
	\item $\delta+\frac{\omega}{2}\geq \frac{d_\mathtt C-c_\mathtt A}{d_\mathtt A}$ if $c_\mathtt G\geq d_\mathtt C-\frac{d_\mathtt G}{2}$ else $\delta+\frac{\omega}{2}=\delta_\mathtt G+\frac{\omega_\mathtt G}{2}$,
\end{itemize}
where we dropped the boldface style from the vector-valued variables to emphasize that the constraint is specified for a single dimension.

\paragraph{Optimization problem.}
We will now enumerate the different cases characterizing the feasible set and for each of them we will provide the dimension-specific optimization problem that should be solved. Akin to the feasible set above, all variables involved from here on refer implicitly to a single dimension.
\begin{itemize}
	\item If $\frac{d_\mathtt G}{2}<c_\mathtt G<d_\mathtt C-\frac{d_\mathtt G}{2}$ then $\Delta^\star=(\delta_\mathtt G,\omega_\mathtt G)$ is the solution to the minimization problem in \eqref{eq:new_loss} for the dimension under consideration, since the feasible set is singleton in this case.
\item If $c_\mathtt G>\frac{d_\mathtt G}{2}$ and $c_\mathtt G\geq d_\mathtt C-\frac{d_\mathtt G}{2}$, we obtain an optimization problem in the variable $\omega$ of the form
	\begin{equation}\label{eq:prob_con}\tag{$O_1$}
	\begin{aligned}
		\min_{\omega}\quad&\ell_\beta\left(\frac{\omega-\hat\omega}{2}\right) + \ell_\beta(\log(\omega)-\log(\omega_\mathtt P))\\
		\text{s.t.}\quad&\omega\geq b_1-a_1\,,
	\end{aligned}
\end{equation}
where $a_1=\delta_\mathtt G-\frac{\omega_\mathtt G}{2}$, $b_1=\frac{d_\mathtt C-c_\mathtt A}{d_\mathtt A}$ and $\hat \omega=2(\delta_\mathtt P-a_1)$.
If $w^\star$ is a solution to \eqref{eq:prob_con} then $\Delta^\star=(a_1+\frac{\omega^\star}{2},\omega^\star)$ is a solution to the minimization problem in \eqref{eq:new_loss} for the dimension under consideration.
\item If $c_\mathtt G\leq\frac{d_\mathtt G}{2}$ and $c_\mathtt G< d_\mathtt C-\frac{d_\mathtt G}{2}$, we obtain an optimization problem like \eqref{eq:prob_con} but with
	$a_1=-\frac{c_\mathtt A}{d_\mathtt A}$, $b_1=\delta_\mathtt G+\frac{\omega_\mathtt G}{2}$ and $\hat \omega=2(b_1-\delta_\mathtt P)$.
If $w^\star$ is a solution to \eqref{eq:prob_con} under this parametrization then $\Delta^\star=(b_1-\frac{\omega^\star}{2},\omega^\star)$ is a solution to the minimization problem in \eqref{eq:new_loss} for the dimension under consideration.
\item
If $d_\mathtt C-\frac{d_\mathtt G}{2}\leq c_\mathtt G\leq\frac{d_\mathtt G}{2}$ then we obtain an optimization problem of the form
\begin{equation}\label{eq:prob}\tag{$O_2$}
	\begin{aligned}
		\min_{\delta,\omega}\quad&\ell_\beta(\delta-\delta_\mathtt P) + \ell_\beta(\log(\omega)-\log(\omega_\mathtt P))\\
		\text{s.t.}\quad&\delta-\frac{\omega}{2}\leq a_2\,,\quad \delta+\frac{\omega}{2}\geq b_2\,,
	\end{aligned}
\end{equation}
where $a_2=-\frac{c_\mathtt A}{d_\mathtt A}$ and $b_2=\frac{d_\mathtt C-c_\mathtt A}{d_\mathtt A}$. Solutions to \eqref{eq:prob} map directly to solutions to \eqref{eq:new_loss} for the dimension under consideration.
\end{itemize}

We focus now on finding the solution to the optimization problems \eqref{eq:prob_con} and \eqref{eq:prob}.

\paragraph{Solution to \eqref{eq:prob_con}.} As mentioned before, the optimization problem in \eqref{eq:new_loss} is in general non-convex and might have multiple local minima. The same holds true for the problem in \eqref{eq:prob_con} despite having a single variable.
Nonetheless, we devised an ad-hoc solver for this problem that allows to quickly converge to a global solution under the desired precision.
We provide the details in Appendix~\ref{sec:algos}.

\paragraph{Solution to \eqref{eq:prob}.}
To solve this problem we break it down into cases.
We start by noting that the solution to the unconstrained optimization problem is trivially given by $\delta^\star = \delta_\mathtt P$ and $\omega^\star=\omega_\mathtt P$, because $0$ is the minimizer of $\ell_\beta$. The solution $\Delta^\star=(\delta^\star,\omega^\star)$ is valid for \eqref{eq:prob} if it satisfies the constraints, but this is easy to check by substitution. If this is the case, we found the solution, otherwise
no solution exists in the interior of the feasible set (see Prop.~\ref{prop:unconstrained} in Appendix~\ref{sec:proofs}), but lies along the boundary of the feasible set.
Accordingly, we start by forcing the first constraint to be active. This yields an instance of \eqref{eq:prob_con} with $a_1=a_2$, $b_1=b_2$ and $\hat \omega=2(\delta_\mathtt P-a_2)$, which can be solved using the algorithm from Appendix~\ref{sec:algos}, yielding $\omega_1^\star$. By substituting it into the activated constraint we obtain the other variable $\delta_1^\star=a_2+\frac{\omega_1^\star}{2}$.
Next, we move to activating the second constraint. This yields again an instance of the same optimization problem with the only difference being $\hat \omega=2(b_2-\delta_\mathtt P)$. Again we solve it obtaining $\omega_2^\star$ and by substitution into the activated constraint we get $\delta_2^\star=b_2-\frac{\omega_2^\star}{2}$. We finally retain the solution among $(\delta_1^\star,\omega_1^\star)$ and $(\delta_2^\star,\omega_2^\star)$ yielding the lowest objective. See Alg.~\ref{alg:prob} for further details.

\paragraph{Gradient.}
For the sake of training a neural network, we are interested in computing gradients of the new loss function, which exhibits a nested optimization problem.
The following result shows that the derivative of the new loss function is equivalent to the derivative of the original one, with the ground-truth box replaced (as a constant) by the solution to the internal minimization problem.
In general the solution to the internal minimization problem is a function of $\Delta_\mathtt P$ but the following result states that no gradient term is originated from this dependency.
This is indeed a direct consequence of the envelope theorem~\cite{Afr71}.

\begin{proposition}
	Let $\phi$ be a function returning the minimizer in \eqref{eq:new_loss} given $\Delta_\mathtt P$, \ie $L_\mathtt{CABB}(\Delta_\mathtt P)=L_\mathtt{BB}(\Delta_\mathtt P,\phi(\Delta_\mathtt P))$ holds for any $\Delta_\mathtt P$. Then
	\[
		\frac{d}{d\Delta_\mathtt P}L_\mathtt{CABB}(\Delta_\mathtt P)=\left.\frac{\partial}{\partial\Delta_\mathtt P}L_\mathtt{BB} (\Delta_{\mathtt P},\Delta)\right|_{\Delta=\phi(\Delta_\mathtt P)}\,.
	\]
\end{proposition}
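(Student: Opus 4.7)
The claim is a direct application of the envelope theorem (hence the citation of \cite{Afr71}), so my plan is to give a short first-principles derivation that makes the two hypotheses of that theorem explicit in the notation of Sec.~\ref{sec:cabb}.

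The first observation I would open with is the crucial structural fact that the feasible set $\rho_\mathtt A(\mathtt G,\mathtt C)$ depends only on the ground-truth box $\mathtt G$, the crop $\mathtt C$ and the anchor $\mathtt A$, and \emph{not} on the network prediction $\Delta_\mathtt P$. In the parametric optimization problem defining $L_\mathtt{CABB}$, the parameter $\Delta_\mathtt P$ therefore enters only through the objective $L_\mathtt{BB}(\Delta_\mathtt P;\Delta)$. This is exactly the setting in which the envelope theorem applies.

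Assuming $\phi$ is differentiable at $\Delta_\mathtt P$ (which is the case generically; see remark below), I would then apply the chain rule directly to the identity $L_\mathtt{CABB}(\Delta_\mathtt P)=L_\mathtt{BB}(\Delta_\mathtt P,\phi(\Delta_\mathtt P))$:
\[
\frac{d}{d\Delta_\mathtt P}L_\mathtt{CABB}(\Delta_\mathtt P)
=\left.\frac{\partial L_\mathtt{BB}}{\partial \Delta_\mathtt P}\right|_{\Delta=\phi(\Delta_\mathtt P)}
+\left.\frac{\partial L_\mathtt{BB}}{\partial \Delta}\right|_{\Delta=\phi(\Delta_\mathtt P)}\cdot\frac{d\phi}{d\Delta_\mathtt P}(\Delta_\mathtt P).
\]
The entire argument then reduces to showing that the second summand vanishes. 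Since the feasible set is fixed, the curve $t\mapsto\phi(\Delta_\mathtt P+t\,v)$ stays inside $\rho_\mathtt A(\mathtt G,\mathtt C)$ for every direction $v$ and every small $t$; differentiating at $t=0$ shows that $\frac{d\phi}{d\Delta_\mathtt P}(\Delta_\mathtt P)\,v$ is a feasible tangent direction at $\phi(\Delta_\mathtt P)$. First-order optimality of $\phi(\Delta_\mathtt P)$ for the minimization in \eqref{eq:new_loss} with the given $\Delta_\mathtt P$ then forces the directional derivative of $L_\mathtt{BB}(\Delta_\mathtt P,\cdot)$ in that direction to be non-negative; applying the same argument to $-v$ gives the reverse inequality, so the inner product is zero and the second summand cancels.

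The main subtlety, which I would flag rather than grind through, is that the minimizer may lie on the boundary of the feasible set and that $\phi$ may fail to be differentiable exactly at points where the active set changes. In such cases one cannot appeal to an unconstrained first-order condition $\partial_\Delta L_\mathtt{BB}=0$. The clean way around this is to invoke the envelope theorem for constrained problems with parameter-independent feasible sets, which only requires $L_\mathtt{BB}$ to be $C^1$ in $\Delta_\mathtt P$ (which it is, being a sum of Huber terms in $\Delta_\mathtt P$) and $\phi$ to be a selection of minimizers; the conclusion then holds at every point where $L_\mathtt{CABB}$ is differentiable, which in particular includes all points where the active constraints are stable. Since the claim as stated is an equality of derivatives, this is all that is needed and the rest can be swept under the usual ``almost everywhere'' rug standard in analogous proofs for Huber and smooth-L1 losses.
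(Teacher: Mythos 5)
Your proposal is correct and follows essentially the same route as the paper, which proves this proposition simply by appealing to the envelope theorem of Afriat~\cite{Afr71}: the key points you make explicit -- that the feasible set $\rho_\mathtt A(\mathtt G,\mathtt C)$ is independent of $\Delta_\mathtt P$, and that first-order optimality of the inner minimizer annihilates the chain-rule term through $\phi$ -- are exactly the content of that theorem, and your caveats about boundary minimizers and points where $\phi$ (or the value function) fails to be differentiable are technicalities the paper likewise leaves implicit.
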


\section{Related Works}
\label{sec:related}

After scrutinizing the literature, we have found no other work directly addressing the specific challenges of training panoptic segmentation networks on high-resolution data, nor the bias introduced by crop-based training.
Indeed, to our knowledge, we are tackling these issues for the first time.
In the literature we find several methods for panoptic segmentation that are architecture-wise compatible with our \CABB loss and \Su, among which we have EfficientPS~\cite{mohan2020efficientps}, AUNet~\cite{Li+2019}, TASCNet~\cite{Li2018}, Panoptic-FPN~\cite{Kirillov19}, UPSNet~\cite{Xiong_UBER_2019} and Seamless Scene Segmentation~\cite{Por+19_cvpr}, to mention a few.
Indeed, those approaches rely on the computation of bounding boxes at some stage, and employ network backbones that produce multi-scale feature pyramids.
Among them, only the first two report crop-based training results in the original work, while the remaining ones report full-image training results.
This however does not mean that the latter approaches would not benefit from crop-based trainings.
Indeed, in this work, we perform experiments using Seamless Scene Segmentation as baseline and show that there is significant improvements deriving from a crop-based training protocol.
Other panoptic segmentation methods that benefit from crop-based training are AdaptIS~\cite{sofiiuk2019adaptis}, DeeperLab~\cite{Yang_Google_2019} SSAP~\cite{Gao+19} and Panoptic-Deeplab~\cite{cheng2020panoptic}.
The latter approaches however are neither based on bounding boxes nor employ feature pyramids, thus our contributions do not directly apply to them.
More broadly, recent works dealing with high-resolution image data include RefineNet~\cite{Lin_2017_CVPR} or CascadePSP~\cite{CascadePSP2020}, which however address the task of conventional semantic segmentation rather than Panoptic segmentation.

\section{Experimental Results}
\label{sec:exp}

\begin{table*}[tbh]
\centering
\begin{tabular}{l|c|c|ccc|cc|ccc|c}
\toprule
  Network & C & Pre-training & PQ & PQ$^\text{th}$ & PQ$^\text{st}$ & mAP & mIoU & PC & PC$^\text{th}$ & PC$^\text{st}$ & PQ$^\dagger$ \\
% VISTAS
\midrule
  \rowcolor{mg} TASCNet~\cite{Li2018}
    & \xmark & I   & 32.6 & 31.1 & 34.4 & 18.6 & --   & --   & --   & --   & --   \\
  \rowcolor{mg} AdaptIS~\cite{sofiiuk2019adaptis}
    & \cmark & I   & 35.9 & 31.5 & --   & --   & --   & --   & --   & --   & --   \\
  \rowcolor{mg} Seamless~\cite{Por+19_cvpr}
    & \xmark & I   & 37.7 & 33.8 & 42.9 & 16.4 & 50.4 & --   & --   & --   & --   \\
  \rowcolor{mg} Deeplab, X71~\cite{cheng2020panoptic}
    & \cmark & I   & 37.7 & 30.4 & \textbf{47.4} & 14.9 & 55.3 & --   & --   & --   & --   \\
  \rowcolor{mg} EfficientPS~\cite{mohan2020efficientps}
    & \cmark & I   & 38.3 & 33.9 & 44.2 & 18.7 & 52.6 & --   & --   & --   & --   \\
  \rowcolor{mg} Deeplab, HR48~\cite{cheng2020panoptic}
    & \cmark & I   & 40.6 & --   & --   & 17.8 & \textbf{57.6} & --   & --   & --   & --   \\
\midrule
  \rowcolor{mg} Seamless~\cite{Por+19_cvpr} + \Crop
    & \cmark & I   & 39.2 & 36.5 & 42.8 & 19.0 & 50.8 & 48.8 & 41.2 & 59.0 & 41.5 \\
  \rowcolor{mg} Seamless~\cite{Por+19_cvpr} + \CABB + \Su
    & \cmark & I   & 40.5 & 38.0 & 43.7 & 19.4 & 51.0 & 50.7 & 43.1 & 60.8 & 42.9 \\
\midrule
  \rowcolor{mg} \Full
    & \xmark & I   & 39.4 & 34.0 & 46.5 & 16.2 & 54.4 & 55.2 & 49.7 & 62.4 & 39.5 \\
  \rowcolor{mg} \Crop
    & \cmark & I   & 43.6 & 41.9 & 45.9 & 22.3 & 54.9 & 56.2 & 52.4 & 61.2 & 45.7 \\
  \rowcolor{mg} \Crop + \CABB
    & \cmark & I   & 44.5 & 42.5 & 47.0 & 23.0 & 55.4 & 57.4 & 54.2 & 61.6 & 46.3 \\
  \rowcolor{mg} \Crop + \Su
    & \cmark & I   & 44.7 & 43.1 & 46.9 & 23.0 & 56.3 & 59.4 & 56.1 & 63.7 & 46.9 \\
  \rowcolor{mg} \Crop + \CABB + \Su
    & \cmark & I   & \textbf{45.1} & \textbf{43.4} & \textbf{47.4} & \textbf{23.9} & 56.3 & \textbf{60.4} & \textbf{57.2} & \textbf{64.6} & \textbf{47.2} \\
% IDD
\midrule
  \rowcolor{pa} Seamless~\cite{Por+19_cvpr}
    & \xmark & I   & 47.7 & 48.9 & 47.1 & 30.1 & 69.6 & --   & --   & --   & --   \\
  \rowcolor{pa} EfficientPS~\cite{mohan2020efficientps}
    & \cmark & I   & 50.1 & 50.7 & \textbf{49.8} & 31.6 & \textbf{71.3} & -- & -- & -- & -- \\
\midrule
  \rowcolor{pa} \Full
    & \xmark & I   & 49.1 & 51.0 & 48.1 & 32.3 & 69.0 & 71.0 & 76.2 & 68.3 & 50.5 \\
  \rowcolor{pa} \Crop
    & \cmark & I   & 50.3 & 52.5 & 49.1 & 35.3 & 69.7 & 70.8 & 73.8 & 69.2 & 51.4 \\
  \rowcolor{pa} \Crop + \CABB + \Su
    & \cmark & I   & \textbf{50.7} & \textbf{52.9} & 49.5 & \textbf{35.7} & 70.4 & \textbf{72.8} & \textbf{78.1} & \textbf{70.0} & \textbf{51.9} \\
% CITYSCAPES
\midrule
  \rowcolor{cs} Seamless~\cite{Por+19_cvpr}
    & \xmark & I, V & 65.0 & 60.7 & 68.0 & --   & 80.7 & --   & --   & --   & --   \\
  \rowcolor{cs} Deeplab, X71~\cite{cheng2020panoptic}
    & \cmark & I, V & 65.3 & --   & --   & 38.8 & 82.5 & --   & --   & --   & --   \\
  \rowcolor{cs} EfficientPS~\cite{mohan2020efficientps}
    & \cmark & I, V & 66.1 & \textbf{62.7} & 68.5 & 41.9 & 81.0 & --   & --   & --   & --   \\
\midrule
  \rowcolor{cs} \Full
    & \xmark & I, V & 66.0 & 61.7 & 69.1 & 39.5 & 64.2 & 80.8 & 79.9 & 81.4 & 64.2 \\
  \rowcolor{cs} \Crop
    & \cmark & I, V & 66.6 & 61.1 & 69.5 & 42.2 & 81.7 & 81.3 & 80.0 & 82.3 & 64.4 \\
  \rowcolor{cs} \Crop + \CABB + \Su
    & \cmark & I, V & \textbf{66.7} & 62.4 & \textbf{69.9} & \textbf{43.4} & \textbf{82.6} & \textbf{82.6} & \textbf{82.4} & \textbf{82.7} & \textbf{65.1} \\
\bottomrule
\end{tabular}
\caption{State of the art results on \colorbox{mg}{Mapillary Vistas (top)}, the \colorbox{pa}{Indian Driving Dataset (middle)}, and \colorbox{cs}{Cityscapes (bottom)} compared with variants of our network. A \cmark symbol in column ``C'' indicates crop-based training. ``Deeplab'' abbreviates Panoptic Deeplab~\cite{cheng2020panoptic}. ``I'' and ``V'' are used to indicate pre-training on ImageNet and Mapillary Vistas, respectively.}
\label{tab:mvd-detail1}
\end{table*}

We evaluate our proposed \CABB loss on the three largest publicly available, high-resolution panoptic segmentation datasets: Mapillary Vistas~\cite{Neuhold2017} (MVD), the Indian Driving Dataset~\cite{Varma19} (IDD) and Cityscapes~\cite{cordts2016cityscapes} (CS).
MVD comprises 18k training, 2k validation, and 5k testing images, with resolutions ranging from 2 to 22 Mpixels and averaging $8.8$ Mpixels, and annotations covering 65 semantic classes, 37 of which instance-specific.
IDD comprises 7k training, 1k validation, and 2k testing images, most captured at a $2$ Mpixels resolution and annotated with 26 semantic classes, 9 of them instance-specific.
Cityscapes comprises 3k training, 500 validation, and 1.5k testing images, captured at $2$ Mpixels resolutions and annotated with 19 classes, 8 of which instance-specific.
Next, we present detailed ablation studies and a comparison with recent state-of-the-art panoptic segmentation approaches.

\subsection{Network and Training Details}
\label{sec:exp-setup}

\begin{figure}
\centering
\includegraphics[width=\columnwidth]{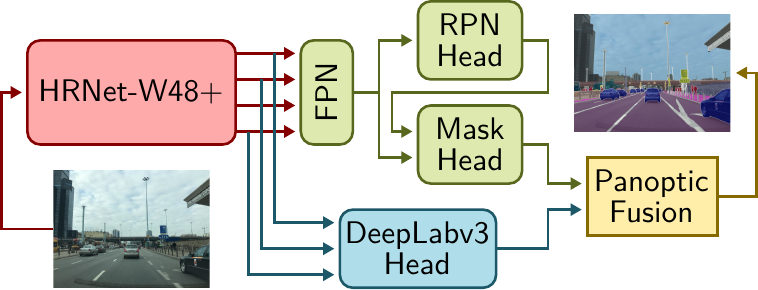}
\caption{Overview of the main functional blocks of our network.
  Red: network body, \ie HRNet-W48+.
  Green: instance segmentation section, composed of an FPN module followed by a Region Proposal Head (RPH) and a mask segmentation head.
  Blue: semantic segmentation section, \ie DeepLabv3 head.
  Yellow: final panoptic fusion step.}
\label{fig:network}
\end{figure}

Our \CABB loss and \Su, described in Sec.~\ref{sec:cabb}, can be used in most top-down panoptic segmentation networks.
To evaluate their effects, however, we focus our attention on a specific architecture, carefully crafted to achieve state-of-the-art performance on high-resolution datasets already \emph{without} using either.
In particular, we follow the general framework of Seamless-Scene-Segmentation~\cite{Por+19_cvpr}, with several modifications described below (see Fig.~\ref{fig:network}).
First, we replace the ResNet-50 ``body'' with HRNetV2-W48+~\cite{WangSCJDZLMTWLX19,cheng2020panoptic}, a specialized backbone which preserves high-resolution information from the image to the final stages of the network.
Second, we replace the ``Mini-DL'' segmentation head from~\cite{Por+19_cvpr} with a DeepLabV3+~\cite{chen2018encoderdecoder} module, connected to the HRNetV2-W48+ body as described in~\cite{cheng2020panoptic}.
As in~\cite{Por+19_cvpr}, we apply synchronized InPlace-ABN~\cite{RotPorKon18a} throughout the network.
Finally, \CABB loss is used to replace the standard bounding box regression loss both in the region proposal and object detection modules.

We train our networks with stochastic gradient descent on 8 NVidia V100 GPUs with 32GB of memory.
The HRNetV2-W48+ backbone is initialized from an ImageNet pre-training in the MVD and IDD experiments, while the Cityscapes networks are fine-tuned from their MVD-trained counterparts.
We fix the crop size to $1024\times 1024$ for MVD, and to $512\times 512$ for IDD and Cityscapes due to their lower resolution, while inference is always performed on full images.
Average inference time on MVD is $\sim1.2$s per image.
To reduce inter-run variability and obtain more comparable results, we fix all sources of randomness that can be easily controlled, resulting in the same sequence of images and initial network weights across all our trainings.
For a detailed breakdown of the training hyper-parameters refer to Appendix~\ref{sec:params}.

\subsection{Comparison with State of the Art}

\label{sec:exp-sota}
We provide a comparison of results in Table~\ref{tab:mvd-detail1}, with baselines including methods trained on full images (TASCNet~\cite{Li2018}, Seamless~\cite{Por+19_cvpr}) and crops (AdaptIS~\cite{sofiiuk2019adaptis}, EfficientPS~\cite{mohan2020efficientps}, Panoptic Deeplab~\cite{cheng2020panoptic}), as well as multiple different backbones (EfficientNet in EffcientPS, ResNet-50 in Seamless and TASCNet, ResNeXt-101 in AdaptIS, Xception-71 and HRNet-W48+ in Panoptic Deeplab).
We consider several different variants of our network:
(i) one using the standard bounding box regression loss and \Cu, trained either on full images (\Full) or crops (\Crop);
(ii) one using our \CABB loss and \Cu, trained on crops (\Crop + \CABB);
(iii) one using the standard bounding box regression loss and \Su, trained on crops (\Crop + \Su); and finally
(iv) one using both our \CABB loss and \Su, trained on crops (\Crop + \CABB + \Su).

The MVD results on top in Table~\ref{tab:mvd-detail1} show that \Crop outperforms \Full on all metrics, attesting to the advantages of crop-based training.
Both our \CABB loss and \Su separately lead to consistent improvements w.r.t. \Crop on all aggregate and pure recognition metrics.
The effects of \CABB and \Su will be explored in more detail in Sec.~\ref{sec:exp-details}.
We also see that even the weakest among our network variants surpasses all PQ baselines, the only exception being the HRNet-W48-based version of Panoptic Deeplab.
After introducing all of our contributions in \Crop + \CABB + \Su, we establish a new state of the art on Mapillary Vistas, surpassing existing approaches by very wide margins (\eg +4.5\% PQ, +5.2\% mAP).

The IDD experiments in the middle of Table~\ref{tab:mvd-detail1} show similar results: \Crop outperforms \Full in most metrics, while \CABB + \Su bring further improvements, most pronounced in PC.
Compared to prior works, we observe much improved mAP scores and state of the art PQ, while segmentation metrics lag a bit behind.
One possible explanation could be the advanced panoptic fusion strategy adopted in EfficientPS, which particularly aims at improving instance segmentation.
We observe the same trends in the Cityscapes results reported in the bottom of Table~\ref{tab:mvd-detail1}, although with reduced margins.
While Cityscapes is smaller than IDD and MVD, and some metrics are already quite saturated, we still get notable +1.5\% gain for mAP in our \Crop + \CABB + \Su setting over previous state-of-the-art.

\subsection{Detailed Analysis}
\label{sec:exp-details}

\begin{figure*}[thb]
\centering
\resizebox{\textwidth}{!}{
\begin{tikzpicture}[baseline]
\pgfplotstableread[col sep=comma]{data/mvd_box_map_by_size.dat}\datatable
\begin{axis}[
    title={Box mAP by size},
    ylabel=mAP,
    width=10cm,
    height=5cm,
    ybar, bar width=.13,
    ymajorgrids=true,
    xticklabels from table={\datatable}{size},
    xtick=data,
    xtick pos=left,
    enlargelimits=0.15,
    legend style={at={(0.01, 0.82),font=\footnotesize},anchor=west},
    cycle list name=colorblind-friendly
]
\addplot table [x expr=\coordindex, y expr=\thisrow{full}*100] {\datatable};
\addplot table [x expr=\coordindex, y expr=\thisrow{baseline}*100] {\datatable};
\addplot table [x expr=\coordindex, y expr=\thisrow{cabb}*100] {\datatable};
\addplot table [x expr=\coordindex, y expr=\thisrow{su}*100] {\datatable};
\addplot table [x expr=\coordindex, y expr=\thisrow{all}*100] {\datatable};
\legend{\Full,\Crop,\Crop + \CABB,\Crop + \Su,\Crop + \CABB + \Su}
\end{axis}
\end{tikzpicture}
\hspace{1em}
\begin{tikzpicture}[baseline]
\pgfplotstableread[col sep=comma]{data/mvd_msk_map_by_size.dat}\datatable
\begin{axis}[
    title={Mask mAP by size},
    ylabel=mAP,
    width=10cm,
    height=5cm,
    ybar, bar width=.13,
    ymajorgrids=true,
    xticklabels from table={\datatable}{size},
    xtick=data,
    xtick pos=left,
    enlargelimits=0.15,
    legend style={at={(0.01, 0.82),font=\footnotesize},anchor=west},
    cycle list name=colorblind-friendly
]
\addplot table [x expr=\coordindex, y expr=\thisrow{full}*100] {\datatable};
\addplot table [x expr=\coordindex, y expr=\thisrow{baseline}*100] {\datatable};
\addplot table [x expr=\coordindex, y expr=\thisrow{cabb}*100] {\datatable};
\addplot table [x expr=\coordindex, y expr=\thisrow{su}*100] {\datatable};
\addplot table [x expr=\coordindex, y expr=\thisrow{all}*100] {\datatable};
\legend{\Full,\Crop,\Crop + \CABB,\Crop + \Su,\Crop + \CABB + \Su}
\end{axis}
\end{tikzpicture}
}
\caption{Mean Average Precision results on Mapillary Vistas, averaged over different size-based subdivisions of the validation instances. The reported ranges are percentiles of the distribution of instance areas in the validation set.}
\label{fig:ap}
\end{figure*}
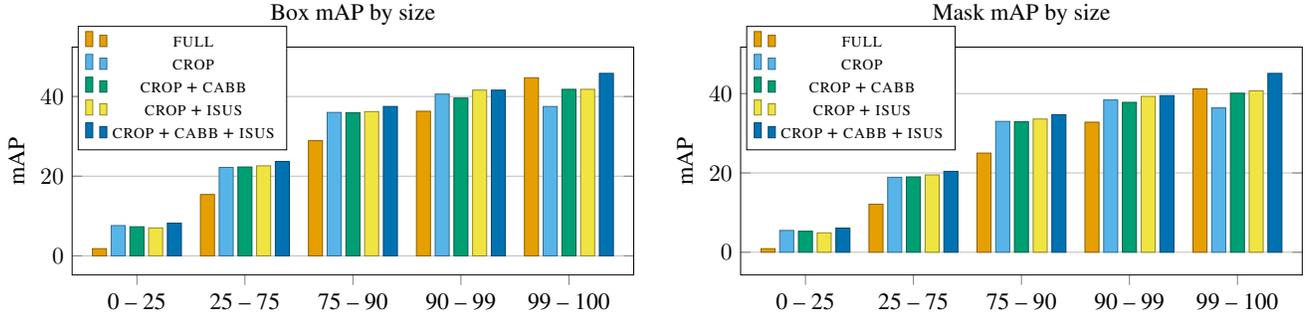

\begin{figure*}
\centering
\includegraphics[width=\textwidth]{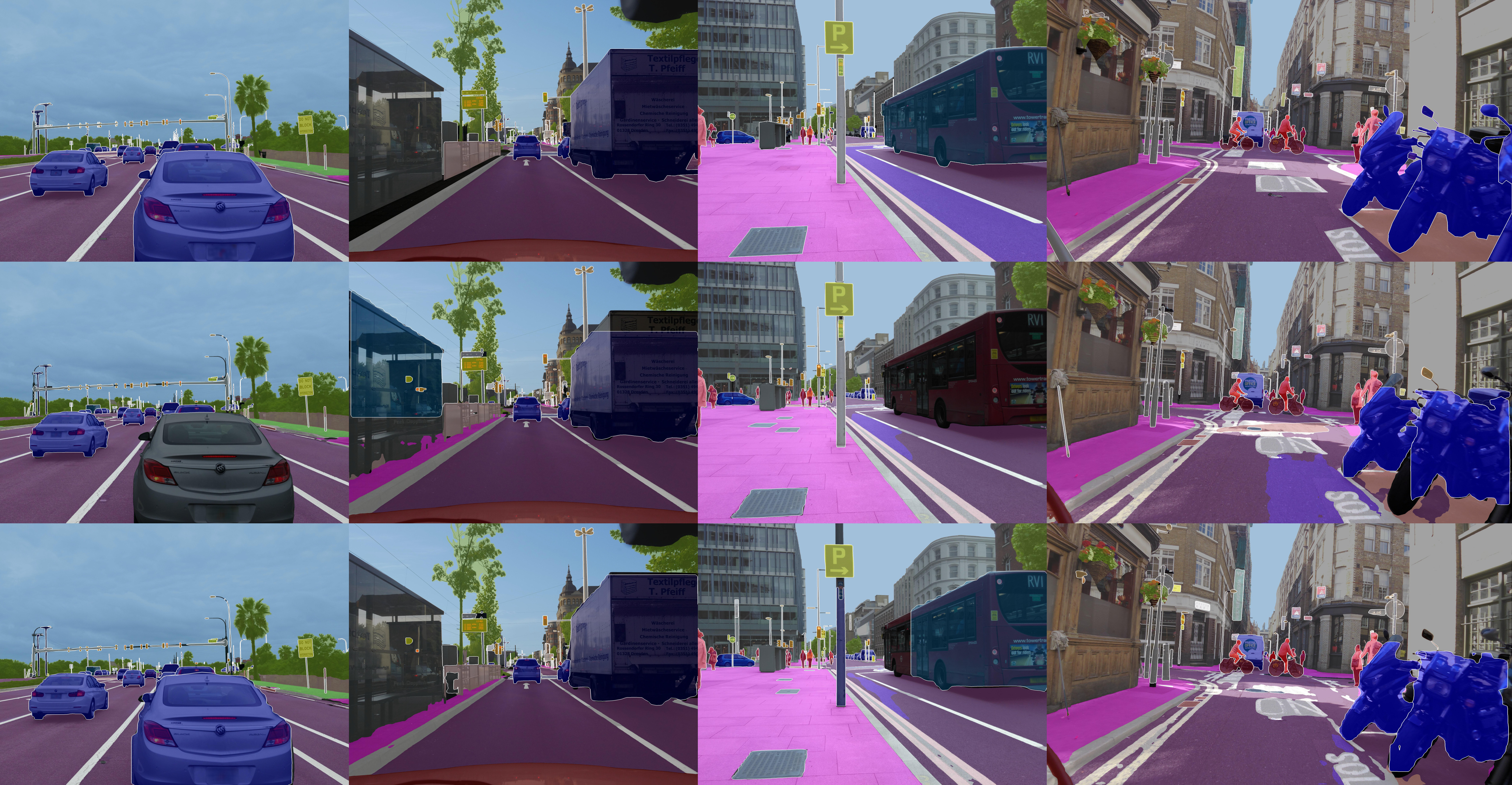}
\caption{Ground truth (first row) and panoptic segmentation results on Mapillary Vistas's validation set obtained with \Crop (second row) and \Crop + \CABB + \Su (third row). Notice how \Crop + \CABB + \Su is able to detect very big instances which are completely missed by \Crop. This figure is best viewed on screen and at magnification.}
\label{fig:qual-results}
\end{figure*}

After showing our new high-scores for MVD, IDD and Cityscapes in the previous section we provde in-depth analyses for \CABB and \Su next.
First, to validate the generality of our proposals, we evaluate crop-based training, our \CABB loss, and \Su when applied to the approach of Porzi \etal~\cite{Por+19_cvpr}.
We report the results in Table~\ref{tab:mvd-detail1} under two settings, both trained on $1024\times 1024$ crops: the unmodified network from~\cite{Por+19_cvpr}, reproduced from their original code (Seamless + \Crop), and the same network combined with our \CABB loss and \Su (Seamless + \CABB + \Su).
Consistent with our other results, the introduction of crop-based training brings consistent improvements over the baseline, particularly in detection metrics, while the \CABB loss and \Su further boost the scores achieving a +2.8\% improvement in PQ \wrt Seamelss.
Further ablations on \Su are reported in Sec.~\ref{sec:su_ablation}.

As discussed in Sec.~\ref{sec:intro} and~\ref{sec:method}, we expect crop-based training to have a negative impact on large objects, which we aim to mitigate with our \CABB loss, while our \Su should bring improvements across all scales by smoothing out the object size imbalance.
To verify this, in Fig.~\ref{fig:ap} we plot box (left) and mask (right) mAP scores as a function of object size (\ie area), splitting the validation instances into five categories according to size percentiles.
As expected, \Crop outperforms \Full by a wide margin on smaller objects, as it is able to work on almost double the input resolution.
On the other hand, the gap between \Crop and \Full shrinks as object size increases, with \Full finally surpassing \Crop on the largest objects.
By adding \CABB the crop-based network is able to fill the gap with \Full when dealing with objects in the 99th size percentile, while maintaining strong performance in all other size categories.
\Su brings generalized improvements over \Crop at most scales, with the exception of the smallest one.
More surprisingly, \Su seems to be similarly beneficial as \CABB on the largest objects.
A possible explanation is that, by increasing generalization across scales, \Su allows the network to properly infer the sizes and positions of objects that are bigger than the training crop.
Finally, when \CABB and \Su are combined, we observe consistent improvements on all sizes.

In Table~\ref{tab:mvd-detail1} we report additional comparisons between our network variants, based on PC and PQ$^\dagger$ (see Appendix.~\ref{sec:metrics}).
In all datasets, we observe a clear improvement in these metrics when the \CABB loss and \Su are introduced in the network.
In particular, the gap between \Crop and \Crop + \CABB + \Su in PC$^\text{th}$ is markedly larger than in PQ$^\text{th}$.
This is unsurprising, as the PC metrics weight image segments proportionally to size, clearly highlighting how the \CABB loss is able to boost the network's accuracy on large instances.
This is also visible from the qualitative results in Fig.~\ref{fig:qual-results}, showing a comparison between the outputs of \Crop and \Crop + \CABB + \Su on 12Mpixels Mapillary Vistas validation images featuring large objects.

\section{Conclusions}
\label{sec:conclusions}

In this paper we have tackled the problem of training panoptic segmentation networks on high resolution images, using crop-based training strategies to enable the use of modern, high-capacity architectures.
Training on crops has a negative impact on the detection of large objects, which we addressed by introducing a novel crop-aware bounding box regression loss.
To counteract the imbalanced distribution of objects sizes, we further proposed a novel data sampling and augmentation strategy which we have shown to improve generalization across scales.
By combining these with a state-of-the-art panoptic segmentation architecture we achieved new top scores on the Mapillary Vistas dataset, surpassing the previous best performing approaches by +4.5\% PQ and +5.2\% mAP.
We also showed state of the art results on the Indian Driving and Cityscapes datasets on multiple detection and segmentation metrics.

\appendix
\section{Proof of Results}\label{sec:proofs}
Let $\omega_0=b_1-a_1$ and let $\xi(\omega)$ be the objective of \eqref{eq:prob_con} with first-order derivative
\begin{equation}\label{eq:deriv1}
\begin{aligned}
	\xi'(\omega)=\frac{1}{2}\ell_\beta'\left(\frac{\omega-\hat\omega}{2}\right)+\frac{1}{\omega}\ell_\beta'(\log(\omega)-\log(\omega_\mathtt P))\,.
\end{aligned}
\end{equation}
The first-order derivative of the Huber loss is given by $\ell_\beta'(x)=\max(\min(\beta^{-1}x,1),-1)$. %$\langle\beta^{-1}x\rangle_{-1}^1$. 
We assume that $\omega_\mathtt P>0$ and $\omega_0>0$.

\begin{proposition}\label{prop:unconstrained}
	If a strictly feasible local solution $(\delta^\star, \omega^\star)$ of \eqref{eq:prob} exists, then $\delta^\star=\delta_\mathtt P$ and $\omega^\star=\omega_\mathtt P$.
\end{proposition}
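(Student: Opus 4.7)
The plan is to exploit the first-order necessary conditions for a local minimum of a differentiable objective at an interior point. Since the point $(\delta^\star,\omega^\star)$ is assumed to be strictly feasible, both inequality constraints $\delta-\tfrac{\omega}{2}\leq a_2$ and $\delta+\tfrac{\omega}{2}\geq b_2$ hold strictly at $(\delta^\star,\omega^\star)$, so in a small enough neighborhood of this point the feasible set agrees with an open set in $\mathbb R^2$. Consequently, $(\delta^\star,\omega^\star)$ is a local minimum of the unconstrained problem $\min_{\delta,\omega}\ell_\beta(\delta-\delta_\mathtt P)+\ell_\beta(\log(\omega)-\log(\omega_\mathtt P))$ (restricted to $\omega>0$).

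Next, I would compute the partial derivatives. The Huber function $\ell_\beta$ is continuously differentiable everywhere, and the composition with $\log$ and the affine maps gives a continuously differentiable objective for $\omega>0$. The partials are
\[
\tfrac{\partial}{\partial\delta}L=\ell_\beta'(\delta-\delta_\mathtt P),\qquad
\tfrac{\partial}{\partial\omega}L=\tfrac{1}{\omega}\,\ell_\beta'(\log(\omega)-\log(\omega_\mathtt P)).
\]
The first-order necessary condition at the interior local minimum forces both of these to vanish at $(\delta^\star,\omega^\star)$.

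The final step uses the key fact that $\ell_\beta'(z)=\max(\min(\beta^{-1}z,1),-1)$ equals zero if and only if $z=0$. Applied to the first equation this yields $\delta^\star-\delta_\mathtt P=0$; applied to the second equation, together with $\omega^\star>0$ (so the factor $1/\omega^\star$ is finite and nonzero), it yields $\log(\omega^\star)-\log(\omega_\mathtt P)=0$, hence $\omega^\star=\omega_\mathtt P$.

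There is no real obstacle here; the only subtlety is to make sure that the standing assumption $\omega_\mathtt P>0$ stated just before the proposition, combined with the positivity of $\omega^\star$ implied by strict feasibility (dimensions being positive), justifies dividing by $\omega^\star$ and taking $\log$. Everything else is bookkeeping with the definition of the Huber derivative.
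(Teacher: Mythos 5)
Your proof is correct and rests on the same ingredients as the paper's: first-order optimality at a strictly feasible (hence interior) point together with the fact that $\ell_\beta'(z)$ vanishes only at $z=0$ and $\omega^\star>0$. The paper merely packages this as a contradiction via the directional derivative along the segment toward $(\delta_\mathtt P,\omega_\mathtt P)$, whereas you impose stationarity of the gradient componentwise; the substance is the same.
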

\begin{proof}
Let $\delta_\lambda=\lambda \delta_\mathtt P + (1-\lambda) \delta^\star$ and $\omega_\lambda=\lambda \omega_\mathtt P + (1-\lambda) \omega^\star$.
By contradiction, assume that a strictly feasible local solution $(\delta^\star, \omega^\star)$ exists such that $(\delta^\star,\omega^\star)\neq(\delta_\mathtt P,\omega_\mathtt P)$. Then, we expect $\frac{d}{d\lambda}\varphi(\delta_\lambda,\omega_\lambda)\big|_{\lambda=0}=0$, where $\varphi(\delta,\omega)$ denotes the objective of \eqref{eq:prob}.
However
\[
\begin{aligned}
&\frac{d}{d\lambda}\varphi(\delta_\lambda,\omega_\lambda)\Big|_{\lambda=0}\\
\,&= \frac{d}{d\lambda}\ell_\beta(\delta_\lambda-\delta_\mathtt P)\Big|_{\lambda=0} + \frac{d}{d\lambda}\ell_\beta(\log(\omega_\lambda)-\log(\omega_\mathtt P))\Big|_{\lambda=0}\\
\,&=(\delta_\mathtt P-\delta^\star)\ell_\beta'(\delta^\star-\delta_\mathtt P) + \frac{\omega_\mathtt P-\omega^\star}{\omega^\star}\ell_\beta'(\log(\omega^\star)-\log(\omega_\mathtt P))\\
\end{aligned}
\]
is negative because $\ell_\beta'(x)<0$ if $x<0$ and $\ell_\beta'(x)>0$ if $x>0$, the logarithm is an ordering-preserving mapping and $\omega^\star>0$. This yields a contradiction thus proving the result.
\end{proof}

\begin{proposition}\label{prop:outer1}
	$\xi'(\omega)<0$ for all $0<\omega<\min\{\hat \omega,\omega_\mathtt P\}$ and $\xi'(\omega)>0$ for all $\omega>\max\{\hat \omega,\omega_\mathtt P\}$.
\end{proposition}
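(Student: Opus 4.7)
The plan is a direct sign-chasing argument, relying on the elementary fact that the Huber derivative $\ell_\beta'(x)=\max(\min(\beta^{-1}x,1),-1)$ is sign-preserving: strictly negative on $(-\infty,0)$, zero at $0$, and strictly positive on $(0,\infty)$. Since $\xi'(\omega)$ is a positively-weighted sum of two such derivative terms, the result will follow as soon as we can show that, in each regime described, both arguments of $\ell_\beta'$ carry the same sign.

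For the first claim, I would fix $0<\omega<\min\{\hat\omega,\omega_\mathtt P\}$ and observe that $\omega<\hat\omega$ yields $\tfrac{\omega-\hat\omega}{2}<0$, while $\omega<\omega_\mathtt P$ together with the strict monotonicity of the logarithm yields $\log(\omega)-\log(\omega_\mathtt P)<0$. By the sign property of $\ell_\beta'$, both $\ell_\beta'\!\left(\tfrac{\omega-\hat\omega}{2}\right)$ and $\ell_\beta'(\log(\omega)-\log(\omega_\mathtt P))$ are strictly negative. The coefficients $\tfrac{1}{2}$ and $\tfrac{1}{\omega}$ are both strictly positive (using the standing assumption $\omega>0$ and the restriction to $\omega>0$ here), so $\xi'(\omega)$, being a sum of two strictly negative quantities, is itself strictly negative.

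The second claim is entirely symmetric: for $\omega>\max\{\hat\omega,\omega_\mathtt P\}$ both $\tfrac{\omega-\hat\omega}{2}$ and $\log(\omega)-\log(\omega_\mathtt P)$ are strictly positive, so both Huber-derivative terms in \eqref{eq:deriv1} are strictly positive, and positivity of $\xi'(\omega)$ follows from the same positive-coefficient argument.

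There is essentially no obstacle here; the only care needed is to keep $\omega>0$ so that $\tfrac{1}{\omega}$ is a legitimate positive weight and so that $\log(\omega)$ is defined, both of which are guaranteed by hypothesis. Notably, the statement is silent about the intermediate region where $\omega$ lies between $\hat\omega$ and $\omega_\mathtt P$, which is exactly where the two Huber-derivative terms can have opposite signs and where $\xi'$ can vanish—this is the region the dedicated solver in Appendix~\ref{sec:algos} will need to handle, while the proposition at hand merely localises the global minimiser to the interval $[\min\{\hat\omega,\omega_\mathtt P\},\max\{\hat\omega,\omega_\mathtt P\}]$.
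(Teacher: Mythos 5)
Your proof is correct and follows essentially the same route as the paper's: in each regime both arguments of $\ell_\beta'$ have the same strict sign, and since the weights $\tfrac12$ and $\tfrac1\omega$ are positive, the sign of $\xi'(\omega)$ follows immediately. The closing remark about the intermediate interval is accurate context but not part of the proof obligation.
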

\begin{proof}
For $0<\omega<\hat \omega$ we have that $\ell_\beta'\left(\frac{\omega-\hat \omega}{2}\right)<0$ and for $0<\omega<\omega_\mathtt P$ we have that $\ell_\beta'(\log(\omega)-\log(\omega_\mathtt P))<0$. Accordingly, $\xi'(\omega)<0$
	for $0<\omega<\min\{\hat\omega,\omega_\mathtt P\}$.

	Similarly, for $\omega>\hat\omega$, we have that $\ell_\beta'\left(\frac{\omega-\hat\omega}{2}\right)>0$ and for $\omega>\omega_\mathtt P$ we have that $\ell_\beta'(\log(\omega)-\log(\omega_\mathtt P))>0$. Accordingly, $\xi'(\omega)>0$ for $\omega>\max\{\hat\omega,\omega_\mathtt P\}$.
\end{proof}

\begin{proposition}\label{prop:trivial}
	If $\max\{\hat \omega,\omega_\mathtt P\}\leq \omega_0$ then $\omega_0$ is the solution to \eqref{eq:prob_con}.
\end{proposition}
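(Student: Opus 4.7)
The plan is to derive this as an immediate consequence of Proposition~\ref{prop:outer1}. The feasible set of \eqref{eq:prob_con} is the half-line $\{\omega : \omega \geq \omega_0\}$, and the goal is to certify that the objective $\xi$ attains its constrained minimum at the left endpoint $\omega_0$, which by the problem's monotonic structure reduces to showing $\xi$ is nondecreasing on this half-line.

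First I would unpack the hypothesis: $\max\{\hat\omega,\omega_\mathtt P\} \leq \omega_0$ means that for every $\omega > \omega_0$ one automatically has $\omega > \max\{\hat\omega, \omega_\mathtt P\}$. Plugging this into the second claim of Proposition~\ref{prop:outer1} gives $\xi'(\omega) > 0$ for all $\omega > \omega_0$. Hence $\xi$ is strictly increasing on the open ray $(\omega_0, \infty)$, and by continuity of $\xi$ at the boundary we conclude that $\xi(\omega) > \xi(\omega_0)$ for every feasible $\omega > \omega_0$, so $\omega_0$ is the (unique) minimizer on $[\omega_0, \infty)$.

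There is essentially no obstacle here since the work has already been done in Proposition~\ref{prop:outer1}; the only mild point to be careful about is the borderline case $\omega_0 = \max\{\hat\omega, \omega_\mathtt P\}$, where the sign assertion of Proposition~\ref{prop:outer1} only applies strictly to the open ray $\omega > \omega_0$ and not at the boundary itself. This is harmless because strict positivity of $\xi'$ on $(\omega_0, \infty)$ together with continuity of $\xi$ on the closed half-line already forces $\xi(\omega_0) < \xi(\omega)$ for any $\omega > \omega_0$, which is all that is needed to identify $\omega_0$ as the optimizer of \eqref{eq:prob_con}.
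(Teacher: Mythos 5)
Your proof is correct and follows essentially the same route as the paper: invoke Proposition~\ref{prop:outer1} to get $\xi'>0$ on the feasible ray beyond $\omega_0$ and conclude the minimum is attained at the left endpoint. Your extra remark about the borderline case $\omega_0=\max\{\hat\omega,\omega_\mathtt P\}$ is a slightly more careful phrasing of the same argument (the paper glosses over it with ``the same holds true for all $\omega\geq\omega_0$''), but it is not a different approach.
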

\begin{proof}
	By Prop.~\ref{prop:outer1}, $\xi'(\omega)>0$ for all $\omega>\max\{\hat\omega,\omega_\mathtt P\}$. Accordingly, the same holds true for all $\omega\geq \omega_0$, which implies that $\xi(\omega_0)$ yields the lowest feasible objective value.
\end{proof}

\begin{proposition}\label{prop:valid}
	A solution to  \eqref{eq:prob_con} exists in $[\max\{\omega_0,\min\{\hat\omega,\omega_\mathtt P\}\},\max\{\hat\omega, \omega_\mathtt P\}]$ if $\omega_0\leq\max\{\hat\omega,\omega_\mathtt P\}$.
\end{proposition}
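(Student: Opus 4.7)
The plan is to use Prop.~\ref{prop:outer1} to squeeze the feasible region down to the claimed interval and then invoke the extreme value theorem. The feasible set of \eqref{eq:prob_con} is $[\omega_0,\infty)$, and $\xi$ is continuous on it. The hypothesis $\omega_0 \le \max\{\hat\omega,\omega_\mathtt P\}$ ensures that the candidate interval $I := [\max\{\omega_0,\min\{\hat\omega,\omega_\mathtt P\}\},\max\{\hat\omega,\omega_\mathtt P\}]$ is nonempty (and contained in the feasible set).

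First I would handle the upper tail: by Prop.~\ref{prop:outer1}, $\xi'(\omega)>0$ for every $\omega>\max\{\hat\omega,\omega_\mathtt P\}$, so $\xi$ is strictly increasing on $(\max\{\hat\omega,\omega_\mathtt P\},\infty)$. Hence for any such $\omega$ one has $\xi(\omega)>\xi(\max\{\hat\omega,\omega_\mathtt P\})$, and no minimizer can lie strictly to the right of $I$.

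Next I would rule out the lower portion $[\omega_0,\max\{\omega_0,\min\{\hat\omega,\omega_\mathtt P\}\})$. If $\omega_0\ge\min\{\hat\omega,\omega_\mathtt P\}$ this interval is empty and there is nothing to prove. Otherwise $\omega_0<\min\{\hat\omega,\omega_\mathtt P\}$, and the portion in question is $[\omega_0,\min\{\hat\omega,\omega_\mathtt P\})$. By Prop.~\ref{prop:outer1}, $\xi'(\omega)<0$ for every $\omega$ in this set, so $\xi$ is strictly decreasing there; consequently $\xi(\min\{\hat\omega,\omega_\mathtt P\})<\xi(\omega)$ for any such $\omega$, ruling out the minimum on that portion as well.

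Finally, $I$ is a nonempty closed bounded interval on which the continuous function $\xi$ attains its minimum by the extreme value theorem, and the two steps above show that this minimizer is in fact a global minimizer over the whole feasible set, establishing the claim. The main obstacle is purely bookkeeping: making the case split on whether $\omega_0$ sits below or above $\min\{\hat\omega,\omega_\mathtt P\}$ explicit enough that the lower-tail argument remains valid under both regimes. No new analytical input beyond Prop.~\ref{prop:outer1} and continuity of $\xi$ is required.
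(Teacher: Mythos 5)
Your proof is correct and takes essentially the same route as the paper: both rely solely on the sign information from Prop.~\ref{prop:outer1} to exclude minimizers of \eqref{eq:prob_con} on the lower portion $[\omega_0,\min\{\hat\omega,\omega_\mathtt P\})$ (when nonempty) and on the upper tail $\omega>\max\{\hat\omega,\omega_\mathtt P\}$. Your explicit appeal to the extreme value theorem on the compact interval merely spells out the existence step that the paper's terser wording leaves implicit.
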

\begin{proof}
A feasible solution $\omega<\max\{\omega_0,\min\{\hat \omega,\omega_\mathtt P\}\}$ exists only if $\omega_0\leq \omega < \min\{\hat \omega,\omega_\mathtt P\}$. 
If this is the case, $\xi'(\omega)<0$ holds in the latter interval by Prop.~\ref{prop:outer1}. Accordingly, for $\omega\leq \min\{\hat \omega,\omega_\mathtt P\}$ the best objective is attained at $\min\{\hat \omega,\omega_\mathtt P\}$.
Similarly by Prop.~\ref{prop:outer1}, $\xi'(\omega)>0$ if $\omega>\max\{\hat \omega,\omega_\mathtt P\}$ and, therefore, for $\omega\geq\max\{\hat \omega,\omega_\mathtt P\}$ the best objective is attained at $\max\{\hat \omega,\omega_\mathtt P\}$. 
Hence, a solution to \eqref{eq:prob_con} exists in the required interval.
\end{proof}

\begin{proposition}\label{prop:caseA}
	If $\max\{\omega_0,\hat\omega\}<\omega_\mathtt P$ then a solution to~\eqref{eq:prob_con} exists in $[\max\{\omega_0, \hat \omega\},\omega_\mathtt P]$ and there $\xi'$ is strictly increasing.
\end{proposition}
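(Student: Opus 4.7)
My plan is to dispatch the two claims in sequence. For existence, the hypothesis $\max\{\omega_0,\hat\omega\}<\omega_\mathtt P$ immediately gives both $\hat\omega<\omega_\mathtt P$ and $\omega_0<\omega_\mathtt P$, so that $\min\{\hat\omega,\omega_\mathtt P\}=\hat\omega$, $\max\{\hat\omega,\omega_\mathtt P\}=\omega_\mathtt P$, and $\omega_0\leq\max\{\hat\omega,\omega_\mathtt P\}$. Invoking Prop.~\ref{prop:valid} then produces a solution inside the interval $[\max\{\omega_0,\min\{\hat\omega,\omega_\mathtt P\}\},\max\{\hat\omega,\omega_\mathtt P\}]=[\max\{\omega_0,\hat\omega\},\omega_\mathtt P]$, settling the first claim.

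For the strict monotonicity of $\xi'$ on that interval I would work directly from~\eqref{eq:deriv1}, showing that its first summand is non-decreasing while its second summand is strictly increasing, so that their sum is strictly increasing. Since $\omega\geq\hat\omega$ throughout, the first summand $\tfrac12\ell_\beta'\bigl(\tfrac{\omega-\hat\omega}{2}\bigr)$ has the form $\min\bigl(\tfrac{\omega-\hat\omega}{4\beta},\tfrac12\bigr)$, which is manifestly non-decreasing. For the second summand $\tfrac{1}{\omega}\ell_\beta'(\log\omega-\log\omega_\mathtt P)$, the argument of $\ell_\beta'$ is non-positive on the interval, so I would split along the kink of $\ell_\beta'$: on the clipped sub-region $\omega\leq\omega_\mathtt P e^{-\beta}$ the summand reduces to $-1/\omega$, which is strictly increasing; on the linear sub-region $\omega_\mathtt P e^{-\beta}<\omega\leq\omega_\mathtt P$ it becomes $\tfrac{\log(\omega/\omega_\mathtt P)}{\beta\omega}$, whose derivative is $\tfrac{1+\log(\omega_\mathtt P/\omega)}{\beta\omega^2}$ and hence strictly positive on $(0,\omega_\mathtt P]$. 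Continuity at the kink $\omega=\omega_\mathtt P e^{-\beta}$ promotes piecewise strictness to strict monotonicity across the whole interval, and summing with the non-decreasing first summand preserves strict monotonicity.

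The only step that is not routine bookkeeping is the linear sub-region of the second summand: the prefactor $1/\omega$ is strictly decreasing while the Huber derivative is non-decreasing and non-positive, so a priori the sign of the product's derivative is ambiguous and it is conceivable that the $1/\omega$ factor wins. Unwinding the derivative to $1+\log(\omega_\mathtt P/\omega)$ and observing that this quantity is $\geq 1$ on $(0,\omega_\mathtt P]$ is the one honest calculation the argument relies on; everything else reduces to an invocation of Prop.~\ref{prop:valid} or a one-line monotonicity check on $\ell_\beta'$ and $-1/\omega$.
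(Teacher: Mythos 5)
Your proposal is correct and takes essentially the same route as the paper: existence follows from Prop.~\ref{prop:valid} specialized to $\hat\omega<\omega_\mathtt P$, and the strict increase of $\xi'$ on $[\max\{\omega_0,\hat\omega\},\omega_\mathtt P]$ is obtained by showing the first summand $\tfrac12\ell_\beta'\bigl(\tfrac{\omega-\hat\omega}{2}\bigr)$ is non-decreasing (its argument being non-negative there) while $\tfrac{1}{\omega}\ell_\beta'(\log\omega-\log\omega_\mathtt P)$ is strictly increasing on $(0,\omega_\mathtt P]$. The only cosmetic difference is in that last step: the paper uses a derivative-free sign/ordering comparison of the two factors (and proves the related monotonicity of $\eta$ in Prop.~\ref{prop:eta} separately for the other case), whereas you split at the Huber kink and differentiate, which is equally valid.
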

\begin{proof}
	For all $\hat\omega\leq \omega<\omega '$ we have that $0\leq \ell_\beta'(\frac{\omega-\hat \omega}{2})\leq \ell_\beta'(\frac{\omega'-\hat \omega}{2})$. Moreover, for all $0<\omega<\omega'\leq \omega_\mathtt P$,
	both $\ell_\beta'(\log(\omega)-\log(\omega_\mathtt P))\leq \ell_\beta'(\log(\omega')-\log(\omega_\mathtt P))\leq 0$ and $\frac{1}{\omega}>\frac{1}{\omega'}>0$ hold, which imply $\frac{1}{\omega}\ell_\beta'(\log(\omega)-\log(\omega_\mathtt P))<\frac{1}{\omega'}\ell_\beta'(\log(\omega')-\log(\omega_\mathtt P))\leq 0$.
	It follows that $\xi'(\omega)<\xi'(\omega')$ holds in the required interval. 
\end{proof}

\begin{proposition}\label{prop:eta}
	$\eta(\omega)=\frac{1}{\omega}\ell_\beta'(\log(\omega)-\log(\omega_\mathtt P))$ is
	\begin{itemize}
	\item strictly increasing in $(0,e^{\min\{\beta,1\}}\omega_\mathtt P]$, and
	\item strictly decreasing for $\omega\geq e\omega_\mathtt P$.
	\end{itemize}
\end{proposition}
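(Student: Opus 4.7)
The plan is to exploit the piecewise-linear structure of $\ell_\beta'$ and reduce the claim to a direct monotonicity check on each piece. Setting $u=\log(\omega)-\log(\omega_\mathtt{P})$, the derivative $\ell_\beta'(u)$ equals $-1$ when $u\leq-\beta$, $u/\beta$ when $|u|\leq\beta$, and $1$ when $u\geq\beta$. Translating back to $\omega$, this partitions $(0,\infty)$ at the breakpoints $\omega=e^{-\beta}\omega_\mathtt{P}$ and $\omega=e^{\beta}\omega_\mathtt{P}$, and I would analyze $\eta$ on each of the three resulting pieces separately.

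On the leftmost piece $(0,e^{-\beta}\omega_\mathtt{P}]$, the formula reduces to $\eta(\omega)=-1/\omega$, which is strictly increasing. On the rightmost piece $[e^{\beta}\omega_\mathtt{P},\infty)$, it reduces to $\eta(\omega)=1/\omega$, which is strictly decreasing. On the middle piece $[e^{-\beta}\omega_\mathtt{P},e^{\beta}\omega_\mathtt{P}]$ one has $\eta(\omega)=\frac{\log(\omega)-\log(\omega_\mathtt{P})}{\beta\omega}$, and a routine differentiation yields
\[
\eta'(\omega)=\frac{1-\bigl(\log(\omega)-\log(\omega_\mathtt{P})\bigr)}{\beta\,\omega^{2}},
\]
which is strictly positive for $\omega<e\omega_\mathtt{P}$ and strictly negative for $\omega>e\omega_\mathtt{P}$.

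It remains to assemble the pieces, which amounts to comparing the breakpoint $e\omega_\mathtt{P}$ coming from the middle region against the breakpoint $e^{\beta}\omega_\mathtt{P}$ between the middle and right regions. For the first claim, if $\beta\leq 1$ then $e^{\beta}\leq e$ and the middle piece is entirely in its increasing regime, so combining with the left piece gives strict increase on $(0,e^{\beta}\omega_\mathtt{P}]$; if $\beta>1$ then $e<e^{\beta}$ and the middle piece is increasing precisely on $[e^{-\beta}\omega_\mathtt{P},e\omega_\mathtt{P}]$, again combining with the left piece gives strict increase on $(0,e\omega_\mathtt{P}]$. Both bounds coincide with $e^{\min\{\beta,1\}}\omega_\mathtt{P}$. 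For the second claim, a symmetric analysis shows that $\eta$ is strictly decreasing on $[e\omega_\mathtt{P},e^{\beta}\omega_\mathtt{P}]$ (from the middle piece when $\beta>1$, trivially otherwise since this interval degenerates or lies inside the right piece) and on $[e^{\beta}\omega_\mathtt{P},\infty)$ (from the right piece), hence on their union $[e\omega_\mathtt{P},\infty)$.

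The only subtle point is upgrading strict monotonicity on each open subinterval to strict monotonicity across the breakpoints $e^{-\beta}\omega_\mathtt{P}$ and $e^{\beta}\omega_\mathtt{P}$. This is handled by noting that $\eta$ is continuous on $(0,\infty)$ as a composition of continuous functions (the Huber derivative $\ell_\beta'$ is continuous by construction), so strict monotonicity on each closed sub-interval glues together to give strict monotonicity on the whole claimed interval. This gluing step is essentially the only bookkeeping obstacle; the rest is a direct calculation.
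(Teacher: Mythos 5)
Your proposal is correct and follows essentially the same route as the paper's proof: split $(0,\infty)$ at $e^{-\beta}\omega_\mathtt{P}$ and $e^{\beta}\omega_\mathtt{P}$ using the piecewise form of $\ell_\beta'$, compute $\eta'$ on the middle piece to locate the sign change at $e\omega_\mathtt{P}$, and glue the pieces by continuity. Your explicit case distinction $\beta\leq 1$ versus $\beta>1$ is a slightly tidier bookkeeping of the same argument, but there is no substantive difference in approach.
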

\begin{proof}
	$\eta(\omega)$ is strictly increasing for $0<\omega<e^{-\beta}\omega_\mathtt P$ because in this case $\eta(\omega)=-\frac{1}{\omega}$ and strictly decreasing for $\omega>\omega_\mathtt P e^\beta$ because in this case $\eta(\omega)=\frac{1}{\omega}$. 
	For $e^{-\beta}\omega_\mathtt P \leq \omega \leq e^\beta \omega_\mathtt P$ we have $\eta(\omega)=\frac{1}{\omega\beta}(\log(\omega)-\log(\omega_\mathtt P))$ and 
	\[
		\eta'(\omega)=\frac{1}{\omega^2\beta}\left[1-\log(\omega)+\log(\omega_\mathtt P)\right]\,.
	\]
	Since $\eta'(e^{-\beta}\omega_\mathtt P)>0$, $\eta'(e^{\beta}\omega_\mathtt P)<0$ and $\eta'(\omega)=0$ only at $\omega=e \omega_\mathtt P $, it follows
	by continuity of $\eta'$ that $\eta'(\omega)>0$ for $\omega<e\omega_\mathtt P$ and $\eta'(\omega)<0$ for $\omega>e\omega_\mathtt P$.
	Accordingly, $\eta(\omega)$ is strictly increasing in $[e^{-\beta}\omega_\mathtt P,e^{\min\{\beta,1\}}\omega_\mathtt P]$. Since the same holds for $0<\omega<e^{-\beta}\omega_\mathtt P$ as shown before, by continuity of $\eta$,  we conclude that $\eta(\omega)$ is strictly increasing along the whole interval $(0,e^{\min\{\beta,1\}}\omega_\mathtt P]$. Similarly, 	
	we have that $\eta(\omega)$ is strictly decreasing in $[e\omega_\mathtt P ,  e^\beta\omega_\mathtt P]$ and for $\omega>e^\beta\omega_\mathtt P$ as shown before. Hence, by continuity of $\eta$, we conclude that $\eta(\omega)$ is strictly decreasing for $\omega\geq e\omega_\mathtt P $.
\end{proof}

\begin{proposition}\label{prop:caseB}
	If $\max\{\omega_0,\omega_\mathtt P\} < \hat \omega$ then a solution to~\eqref{eq:prob_con} is either  $\omega_0$ or $\hat\omega$, 
	or it belongs to one of the following intervals:
	\begin{enumerate}[label=(\roman*)]
	\item $J_1=[\max\{\omega_0,\omega_\mathtt P\},\min\{e^{\min\{\beta,1\}}\omega_\mathtt P,\hat \omega\}]$,
	\item $J_2=[\max\{\omega_0,2\sqrt{\beta},\hat\omega-2\beta,e^\beta\omega_\mathtt P \},\hat\omega]$,
	\item $J_3=[\max\{\omega_0,\hat\omega-2\beta,e\omega_\mathtt P\},\min\{e^\beta\omega_\mathtt P,\hat\omega\}]$ if $\hat\omega\leq 4\sqrt{2}$
	\item $J_4=[\max\{\omega_0,\hat\omega-2\beta,e\omega_\mathtt P\}, \min\{e^\beta\omega_\mathtt P,\hat\omega, \nu_1\}]$ if $\hat\omega>4\sqrt{2}$,
	\item $J_5=[\max\{\omega_0,\hat\omega-2\beta, e\omega_\mathtt P, \nu_2\}, \min\{e^\beta\omega_\mathtt P,\hat\omega\}]$ if $\hat\omega>4\sqrt{2}$,
	\end{enumerate}
	where $\nu_{1,2}=\frac{\hat \omega}{4}\left(1\pm\sqrt{1-\frac{32}{\hat \omega^2}}\right)$.

	Moreover, $\xi'$ is strictly increasing in (i)-(ii) and $\sigma(\omega)=\omega\xi'(\omega)$ is strictly increasing in $(iii)-(v)$.
\end{proposition}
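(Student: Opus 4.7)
The plan is to first invoke Prop.~\ref{prop:valid} to restrict any solution of~\eqref{eq:prob_con} to the interval $I=[\max\{\omega_0,\omega_\mathtt P\},\hat\omega]$, since in the present case $\min\{\hat\omega,\omega_\mathtt P\}=\omega_\mathtt P$ and $\max\{\hat\omega,\omega_\mathtt P\}=\hat\omega$. I would then partition $I$ according to the breakpoints of the Huber derivatives appearing in the two summands of $\xi'$: the breakpoint $\hat\omega-2\beta$ from the first summand, together with $e^{-\beta}\omega_\mathtt P$, $e^{\beta}\omega_\mathtt P$ and the monotonicity transition $e\omega_\mathtt P$ of $\eta$ identified in Prop.~\ref{prop:eta}. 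On each resulting sub-interval, either $\xi'$ or the auxiliary quantity $\sigma(\omega)=\omega\xi'(\omega)$ admits an explicit expression whose monotonicity can be analysed directly; since $\omega>0$ on $I$, zeros of $\xi'$ and $\sigma$ coincide, so a strict monotonicity statement for $\sigma$ transfers the ``at most one zero'' conclusion back to $\xi'$.

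For $J_1$, I would combine the strict increase of $\eta$ on $(0,e^{\min\{\beta,1\}}\omega_\mathtt P]$ given by Prop.~\ref{prop:eta} with the fact that $\frac{1}{2}\ell_\beta'(\frac{\omega-\hat\omega}{2})$ is non-decreasing in $\omega$ (constant at $-1/2$ for $\omega<\hat\omega-2\beta$, linear with slope $\frac{1}{4\beta}$ otherwise), which makes $\xi'$ strictly increasing. For $J_2$, both summands have explicit closed forms, namely $\frac{\omega-\hat\omega}{4\beta}$ and $\frac{1}{\omega}$, and $\xi''(\omega)=\frac{1}{4\beta}-\frac{1}{\omega^{2}}$ is strictly positive precisely when $\omega>2\sqrt{\beta}$, which is exactly the extra cutoff built into the definition of $J_2$.

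For $J_3$, $J_4$ and $J_5$ we are in the middle (logarithmic) regime of $\eta$ combined with the linear regime of the first summand. Here $\xi'$ need not be monotone, but $\sigma(\omega)=\frac{\omega(\omega-\hat\omega)}{4\beta}+\frac{\log\omega-\log\omega_\mathtt P}{\beta}$ is easier to handle: a direct differentiation yields $\sigma'(\omega)=\frac{2\omega^{2}-\hat\omega\,\omega+4}{4\omega\beta}$, whose sign is controlled by the quadratic $q(\omega)=2\omega^{2}-\hat\omega\,\omega+4$ with discriminant $\hat\omega^{2}-32$. The case split in the proposition then falls out: when $\hat\omega\leq 4\sqrt{2}$ the quadratic is positive throughout, yielding monotonicity on $J_3$; when $\hat\omega>4\sqrt{2}$ the quadratic is positive exactly outside its two roots $\nu_{1,2}=\frac{\hat\omega}{4}\bigl(1\pm\sqrt{1-32/\hat\omega^{2}}\bigr)$, isolating the two monotone pieces $J_4$ and $J_5$.

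To close the argument I would show that every point of $I$ that does \emph{not} belong to any $J_i$ lies in a region where $\xi'$ (equivalently $\sigma$) is strictly \emph{decreasing}, so any zero of $\xi'$ there corresponds to a local maximum of $\xi$ and cannot be the minimizer; the remaining candidates are then precisely the boundaries $\omega_0$ and $\hat\omega$ together with the (unique) zero of $\xi'$ extractable by bisection inside each $J_i$. I expect the main obstacle to be the bookkeeping: verifying that every gap region between consecutive $J_i$'s really does correspond to a strictly decreasing phase of $\xi'$ or $\sigma$, handling the case distinction between $\beta<1$ and $\beta\geq 1$ (which controls whether the interval $(e^{\min\{\beta,1\}}\omega_\mathtt P,e\omega_\mathtt P)$ is empty and which of $J_3,J_4,J_5$ are non-empty), and threading the $2\sqrt{\beta}$ and $\nu_{1,2}$ thresholds through the various $\max$/$\min$ operations so that the interval endpoints match those stated exactly.
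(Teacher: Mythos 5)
Your proposal follows essentially the same route as the paper's proof: restrict candidates to $I=[\max\{\omega_0,\omega_\mathtt P\},\hat\omega]$ via Prop.~\ref{prop:valid}, split $I$ at the Huber breakpoints, obtain strict monotonicity of $\xi'$ on $J_1$ and $J_2$ from Prop.~\ref{prop:eta} and the sign of $\xi''$ around $2\sqrt{\beta}$, and pass to $\sigma(\omega)=\omega\xi'(\omega)$ in the logarithmic regime, where $\sigma'$ reduces to the quadratic $2\omega^2-\hat\omega\,\omega+4$ with roots $\nu_{1,2}$ and discriminant threshold $\hat\omega=4\sqrt{2}$, exactly as in the paper. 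The ``bookkeeping'' you defer---verifying that the complementary pieces of $I$ are strictly decreasing phases of $\xi'$ or $\sigma$, so that no minimizer of \eqref{eq:prob_con} can lie in their interior and any boundary candidate is either an endpoint of $I$ or shared with an increasing interval---is precisely how the paper closes the argument, so your plan is correct and matches the paper's proof.
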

\begin{proof}
	By Prop.~\ref{prop:valid} a solution to~\eqref{eq:prob_con} exists in  $I=[\max\{\omega_0,\omega_\mathtt P\}, \hat\omega]$. 
	We partition $I$ into sections where $\xi'$ or $\sigma$ are either strictly increasing or strictly decreasing. We work by cases:
	\begin{itemize}
		\item $J_1$. In this interval $\ell_\beta'(\frac{\omega-\hat \omega}{2})$ is increasing in $\omega$ and $\eta$ is strictly increasing by Prop.~\ref{prop:eta}. Hence, $\xi'(\omega)=\frac{1}{2}\ell_\beta'(\frac{\omega-\hat \omega}{2})+\eta(\omega)$ is strictly increasing as well. 
		\item $[\max\{\omega_0,e\omega_\mathtt P\},\hat \omega-2\beta]$. In this interval $\ell_\beta'(\frac{\omega-\hat \omega}{2})$ is constant and $\eta$ is strictly decreasing by Prop.~\ref{prop:eta}. Hence, $\xi'$ is strictly decreasing as well. 
		%in the interval $[\max\{\omega_0,\omega_\mathtt P\},\min\{e^{\min\{\beta,1\}}\omega_\mathtt P,\hat \omega\}]$. 
	\item $[\max\{\omega_0,\hat\omega-2\beta,e^{\beta}\omega_\mathtt P\},\hat \omega]$. In this interval, $\xi'(\omega)=\frac{1}{4\beta}(\omega-\hat\omega)+\frac{1}{\omega}$ and $\xi''(\omega)=0$ holds only in the feasible point $\omega=2\sqrt{\beta}$, while $\xi''(\omega)<0$ for $0<\omega<2\sqrt{\beta}$ and $\xi''(\omega)>0$ for $\omega>2\sqrt{\beta}$. Accordingly $\xi'(\omega)$ is strictly decreasing in the interval $[\max\{\omega_0,\hat\omega-2\beta,e^{\beta}\omega_\mathtt P\},\min\{2\sqrt{\beta},\hat\omega\}]$ and strictly increasing in the interval $J_2$.
	\item $J_3$. In this interval, $\xi'(\omega)=\frac{1}{4\beta}(\omega-\hat \omega)+\frac{1}{\omega\beta}(\log(\omega)-\log(\omega_\mathtt P))$ and by setting $\sigma'(\omega)=0$ we find at most two solutions, namely $\nu_{1,2}$, which are distinct and real for $\hat\omega> 4\sqrt{2}$. Both solutions might potentially belong to the interval under consideration. 
	The sign of $\sigma'(\omega)$ is negative for $\nu_1<\omega<\nu_2$ and positive for $\omega<\nu_1$ and $\omega>\nu_2$. Accordingly $\sigma(\omega)$ is strictly increasing in the intervals 
	$J_4$ and $J_5$, while it is strictly decreasing in the interval $[\max\{\omega_0,\hat\omega-2\beta,e\omega_\mathtt P,\nu_1\}, \min\{e^\beta\omega_\mathtt P,\hat\omega, \nu_2\}]$. If $\hat\omega\leq 4\sqrt{2}$ then $\sigma'(\omega)\geq 0$ in the whole interval $J_3$, with equality only if $\hat\omega=4\sqrt{2}$ and $\omega=\nu_1=\nu_2$. Accordingly, if $\hat\omega\leq 4\sqrt{2}$ we have that $\sigma(\omega)$ is strictly increasing in $J_3$. 
\end{itemize}

Since $\sigma(\omega)$ and $\xi'(\omega)$ share the same sign, 
given an interval $J$ where $\xi'$ or $\sigma$ is strictly decreasing, we have one of the following cases: a) $\xi'$ is strictly positive, b) $\xi'$ is strictly negative or c) $\xi'$ transitions once from a positive to a negative sign. In all three cases, a solution to~\eqref{eq:prob_con} cannot exist in the interior of $J$ but can be at most at one endpoint of $J$. For the same reason, no solution can be at the junction of two intervals where $\xi'$ or $\sigma$ are strictly decreasing. Hence, the endpoint has to be either an endpoint of $I$ or be in common with an interval where either $\xi'$ or $\sigma$ are strictly increasing, which proves the result.
\end{proof}

\section{Optimization Algorithms}

\begin{algorithm}[h]
	\caption{Solves the optimization problem~\eqref{eq:prob_con}.
	}
	\algrenewcommand\algorithmicindent{0.8em}%
	\begin{algorithmic}[1]
	\Function{Solve\_\ref{eq:prob_con}}{$\omega_\mathtt P, \hat \omega, a_1,b_1$}
	\State $\omega_0=b_1-a_1$
	\State $S=\{\omega_0\}$\Comment{Used to collect potential solutions}
	\If{$\max\{\omega_0,\hat \omega\}<\omega_\mathtt P$}\Comment{Prop.~\ref{prop:caseA}}\label{line:caseA_cond}
	\State \Return  \Call{find\_min}{[$\max\{\omega_0,\hat\omega\},\omega_\mathtt P],\xi'$}\label{line:caseA}
	\ElsIf{$\max\{\omega_0,\omega_\mathtt P\}<\hat\omega$}\Comment{Prop.~\ref{prop:caseB}}\label{line:caseB_cond}
		\State $S=S\cup\{\hat \omega,$ \Call{find\_min}{$ J_1,\xi'$}$,$ \Call{find\_min}{$ J_2,\xi'$}$\}$
		\If{$\hat\omega\leq 4\sqrt 2$}
		\State $S=S\cup\{$\Call{find\_min}{$J_3,\sigma$}$\}$
		\Else{}
		\State $S=S\cup\{$\Call{find\_min}{$J_4,\sigma$}$,$ \Call{find\_min}{$J_5,\sigma$}$\}$

		\EndIf
	\EndIf
	\State\Return$\argmin_{\omega\in S} \xi(\omega)$\label{line:end}
	\EndFunction
\end{algorithmic}
	\label{alg:prob_con}
\end{algorithm}

\begin{algorithm}[h]
	\caption{Solves the optimization problem~\eqref{eq:prob}.
	}
	\algrenewcommand\algorithmicindent{0.8em}%
	\begin{algorithmic}[1]
	\Function{Solve\_\ref{eq:prob}}{$\delta_\mathtt P,\omega_\mathtt P, a_2, b_2$}
	\State $\hat\omega_1=2(\delta_\mathtt P-a_2)$
	\State $\hat\omega_2=2(b_2-\delta_\mathtt P)$
	\If{$\omega_\mathtt P\geq \max\{\hat\omega_1,\hat\omega_2\}$}
	\State\Return $(\delta_\mathtt P,\omega_\mathtt P)$ \label{line:unconstrained}
	\EndIf
	\State $\omega_1=$\Call{Solve\_\ref{eq:prob_con}}{$\omega_\mathtt P, \hat\omega_1, a_2,b_2$}\label{line:con1}
	\State $\omega_2=$\Call{Solve\_\ref{eq:prob_con}}{$\omega_\mathtt P, \hat\omega_2, a_2,b_2$}\label{line:con2}
	\If{$\xi(\omega_1)\leq \xi(\omega_2)$}
	\State\Return $(a_2+\frac{\omega_1}{2},\omega_1)$\label{line:out1}
	\Else{}
	\State\Return $(b_2-\frac{\omega_2}{2},\omega_2)$\label{line:out2}

	\EndIf
	\EndFunction
\end{algorithmic}
\label{alg:prob}
\end{algorithm}

\begin{algorithm}[h]
	\caption{
		Finds the minimum of an objective $\xi$ in a given interval $[u,v]$ by leveraging an increasing, continuous function $\varphi$, whose sign agrees with the sign of $\xi'$.
	}
	\algrenewcommand\algorithmicindent{0.8em}%
	\begin{algorithmic}[1]
	\Function{find\_min}{$[u,v], \varphi$}
	\If{ $\varphi(u)\geq 0$}
	\State\Return $u$
	\ElsIf{$\varphi(v)\leq 0$}
	\State\Return $v$
	\Else
	\State $m=\frac{u+v}{2}$
	\If{$v-u<\epsilon$}
	\Comment $\epsilon$ is a tolerance
	\State \Return $m$
	\ElsIf{$\varphi(m)\geq 0$}
	\State \Return \Call{find\_min}{$[u,m],\varphi$}
	\Else{}
	\State \Return \Call{find\_min}{$[m,v],\varphi$}
	\EndIf
	\EndIf
	\EndFunction
\end{algorithmic}
\label{alg:find}
\end{algorithm}

\label{sec:algos}
In this section we provide the optimization algorithms used to solve~\eqref{eq:prob} and \eqref{eq:prob_con}, which exploit the theoretical results given in Sec.~\ref{sec:proofs}.

\paragraph{Algorithm for~\eqref{eq:prob}.}
Alg.~\ref{alg:prob} provides a solution to the optimization problem~\eqref{eq:prob}.
The idea of the algorithm is sketched also in Sec.~\ref{sec:comp_aspects} of the main paper.
The global, unconstrained solution to~\eqref{eq:prob} is attained at $(\delta_\mathtt P,\omega_\mathtt P)$.
Accordingly, if this solution is feasible it is also the global solution to the constrained version of the problem (line \ref{line:unconstrained}).
If it's not feasible, we have two options, the solution is in the interior of the feasible set, or at the boundary.
However, by Prop.~\ref{prop:unconstrained} the former case is not possible, because the only solution would be the one 
we excluded already, namely  $(\delta_\mathtt P,\omega_\mathtt P)$. Hence, the solution has to lie at the boundary 
of the feasible set. Since we have only two constraints, we can apply a brute force approach, and explore two cases where we assume that the solution activates the first constraint (line \ref{line:con1}) or the second one (line \ref{line:con2}). In both cases, we boil down to solving an instance of the optimization problem~\eqref{eq:prob_con} where $\hat\omega=2(\delta_\mathtt P-a_2)$ and $\hat\omega=2(b_2-\delta_\mathtt P)$, respectively. The solution to each of those problems, denoted by $\omega_1$ and $\omega_2$ in the algorithm, is given by applying Alg.~\ref{alg:prob_con}, which is discussed later. Among those two solutions, we retain the one minimizing the objective of~\eqref{eq:prob_con}, where the objective is denoted by $\xi$ in the algorithm. If $\omega_1$ is the best one then the solution to~\eqref{eq:prob} is given by $(\delta_1,\omega_1)$ in line~\ref{line:out1}, where $\delta_1=a_2+\frac{\omega_1}{2}$ is obtained by substituting $\omega_1$ in the first constraint. Otherwise, the solution is given by $(\delta_2,\omega_2)$ in line~\ref{line:out2}, where $\delta_2=b_2-\frac{\omega_2}{2}$ is obtained similarly from the second constraint.

\paragraph{Algorithm for~\eqref{eq:prob_con}.} 
Alg.~\ref{alg:prob_con} provides a solution to the optimization problem~\eqref{eq:prob_con}.
According to Prop.~\ref{prop:trivial}, if $\max\{\hat \omega, \omega_\mathtt P\}\leq \omega_0$ then $\omega_0$ is the solution. Indeed, in this case we return $\omega_0$ in line~\ref{line:end} since it is the only element of $S$. If condition in line~\ref{line:caseA_cond} is hit, then by Prop.~\ref{prop:caseA} we can search a solution in the interval $[\max\{\omega_0,\hat\omega\},\omega_\mathtt P]$ by leveraging the monotonicity of $\xi'$. We do so by exploiting Alg.~\ref{alg:find} in line~\ref{line:caseA}, which will be discussed later. If condition in line~\ref{line:caseB_cond} is hit instead, according to Prop.~\ref{prop:caseB}, we need to search for the best solution within the intervals $J_i$ with $i\in\{1,\ldots,5\}$ eventually satisfying the given conditions. Moreover, we need to include in the pool also $\omega_0$ and $\hat\omega$. The search over each interval $J_i$ is performed via Alg.~\ref{alg:find}, by leveraging the monotonicity of $\xi'$ or $\sigma$. All potential solutions are collected into $S$ and the best one in terms of the objective is retained in line~\ref{line:end}.

\paragraph{Alg.~\ref{alg:find}.} 
Finds the minimum of an objective $\xi$ in a given interval $[u,v]$ by leveraging an increasing, continuous function $\varphi$, whose sign agrees with the sign of $\xi'$.
This can be done by searching the element in $[u,v]$ that is the closest one to a zero of $\varphi$.
Since the function is increasing, if $\varphi(u)$ is non-negative then the closest element to a zero is $u$, while if $\varphi(v)$ is non-positive then the closest element is $v$. Otherwise, we perform a dichotomic search on the half-interval where we have discording signs of $\varphi$ at the extremes until we reach the zero with sufficient accuracy.

\section{Evaluation Metrics}
\label{sec:metrics}

Panoptic Quality (PQ), originally described in~\cite{Kirillov18}, is the most commonly adopted metric to evaluate panoptic segmentation results.
We report it together with semantic Intersection over Union (mIoU) and mask mean Average Precision (mAP), in order to detailedly measure our network's segmentation and detection performance, respectively.
Some recent works~\cite{Por+19_cvpr,Yang_Google_2019} have proposed alternatives to PQ aimed at highlighting different aspects of the panoptic predictions, or overcoming potential pitfalls of PQ.
Note that we denote by PQ$^\text{th}$ and PQ$^\text{st}$ the PQ scores computed only on thing and stuff classes, respectively.\footnote{A similar notation is also used for PC.}

\paragraph{Parsing Covering.}
PQ assigns equal importance to all image segments, a choice which is not always desirable, \eg autonomous driving systems might care more about objects closer to the vehicle, and thus appear bigger in the image, than far away ones.
Motivated by this observation,~\cite{Yang_Google_2019} proposed Parsing Covering (PC) as an alternative panoptic metric that weights image segments in proportion to their areas.
Since our \CABB loss focuses on improving detection results of large objects, PC helps highlighting its impact.

\paragraph{PQ$^\dagger$.}
Porzi \etal~\cite{Por+19_cvpr} discussed a potential limitation of PQ, as it handles all classes in a uniform way, imposing a hard 0.5 threshold on IoUs of both things and stuff.
While this is strictly necessary to obtain a unique matching between thing segments and their respective ground truth, it can result in strong over-penalization of stuff segments.
To solve this, they propose PQ$^\dagger$ as a direct modification of PQ which avoids the thresholding issue, giving a more faithful representation of the quality of stuff predictions.

\section{Training hyper-parameters}
\label{sec:params}

All our networks are trained using stochastic gradient descent with momentum 0.9 and weight decay $10^{-4}$.
The training schedule starts with a warm-up phase, where the learning rate is increased linearly from 0 to a value $\text{lr}_0$ in the first 200 training steps.
Then, the learning rate follows a linear decay schedule given by $\text{lr}_i = \text{lr}_0 (1 - \frac{i}{\text{\#steps}})$, where lr$_i$ is the value at training step $i$.
In all of our experiments we augment the data with random horizontal flipping, and in those involving \Su we fix the maximum ``things'' scale augmentation range to $r_\text{th}=[0.25, 4]$.
The scale augmentation range used in \Cu always matches the $r_\text{st}$ of the corresponding \Su experiments on the same dataset.
In the following we list the dataset-specific hyper-parameters.
Note that all schedules used for a particular dataset result in approximately the same number of training iterations.

\paragraph{Mapillary Vistas.}
All MVD experiments use a ``stuff'' scale augmentation range of $r_\text{st}=[0.8, 1.25]$.
When utilizing full images we set $s_0=1344$, $\text{lr}_0=0.02$, and we train for 75 epochs on batches including a single image per GPU.
In all other experiments we set $s_0=2400$, $\text{lr}_0=0.04$, take crops of size $1024\times 1024$, and train for 300 epochs using batches of 4 crops per GPU.

\paragraph{Indian Driving Dataset.}
In the IDD experiments we fix $s_0=1080$ and $r_\text{st}=[0.5, 2]$. 
We train for 75 epochs with batch size of 1 per GPU and $\text{lr}_0=0.02$ when using full images, and for 600 epochs with batch size of 8 per GPU, $\text{lr}_0=0.08$ and crop size $512\times 512$ when using crops.

\paragraph{Cityscapes.}
Finally, in the Cityscapes experiments we pre-train our networks on Mapillary Vistas, and fix $s_0=1024$ and $r_\text{st}=[0.5, 2]$.
When using full images, we train for 20 epochs with batch size of 1 per GPU and $\text{lr}_0=0.01$.
When using crops, we train for 150 epochs with batch size of 8 per GPU, $\text{lr}_0=0.04$ and crop size $512\times 512$.

\section{Additional \Su ablations}
\label{sec:su_ablation}

In order to validate the efficacy of \Su, we perform an additional ablation experiment where we train our \Crop network variant (with \Cu) using standard scale augmentation in the range $[0.25, 4]$.
Note that this is the same range as the $r_\text{th}$ used in the \Su experiments.
The aim here is to verify whether the instance-aware scale sampling in \Su has any impact on detection compared to a uniform sampling in the same range.
When training on MVD, we obtain the following results: PQ$^\text{th}$=42.3, mAP=22.8.
Compare these to the corresponding \Crop + \Su results: PQ$^\text{th}$=43.1, mAP=23.0.

\section{Qualitative Results}
\label{sec:qual}

In the following we visualize sample outputs of our best performing \Crop + \CABB + \Su networks on Mapillary Vistas (Fig.~\ref{fig:qual-mvd}), Cityscapes (Fig.~\ref{fig:qual-cityscapes}) and the Indian Driving Dataset (Fig.~\ref{fig:qual-idd}).

\begin{figure*}
  \centering
  \includegraphics[width=0.25\textwidth]{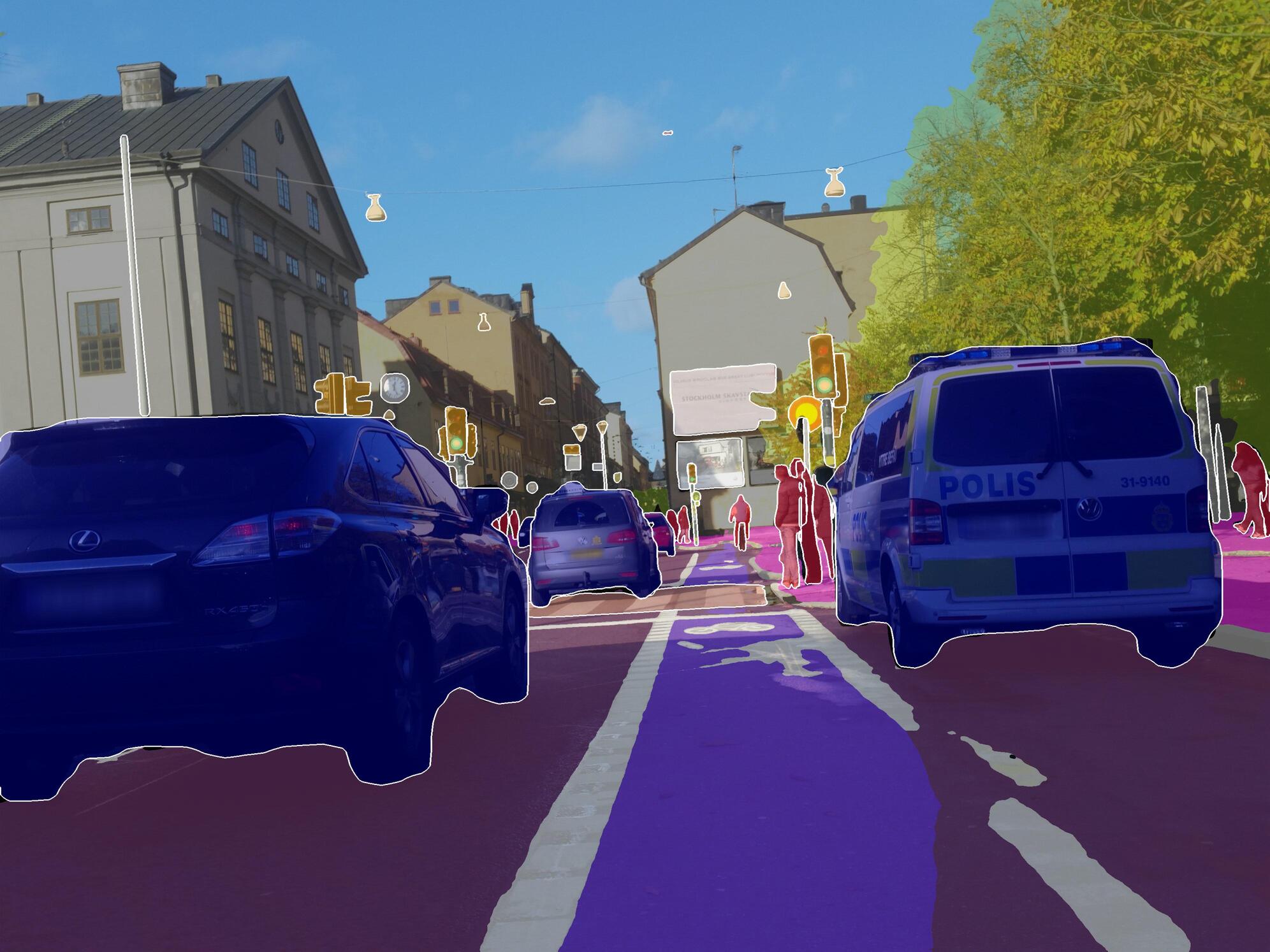}%
  \includegraphics[width=0.25\textwidth]{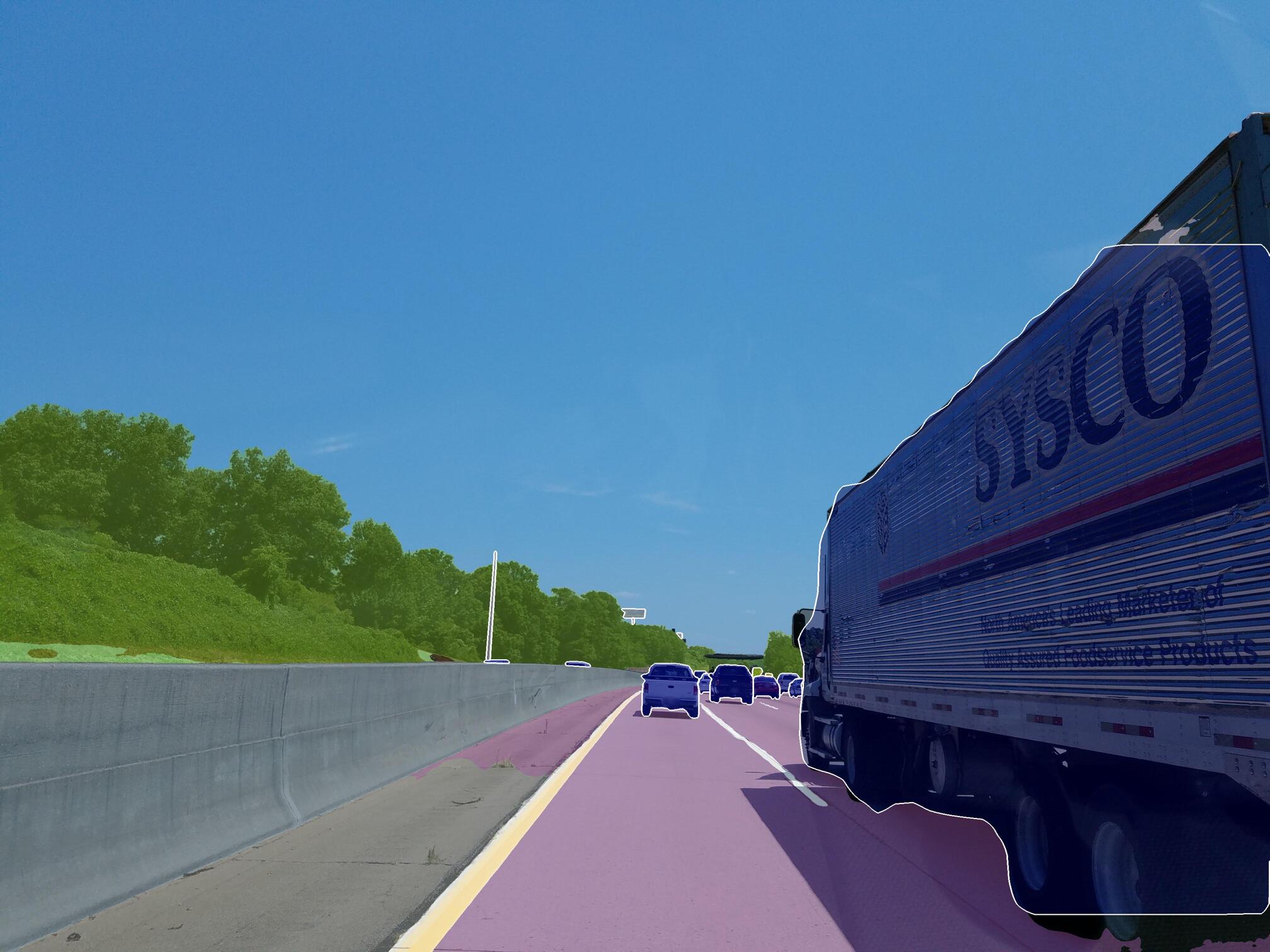}%
  \includegraphics[width=0.25\textwidth]{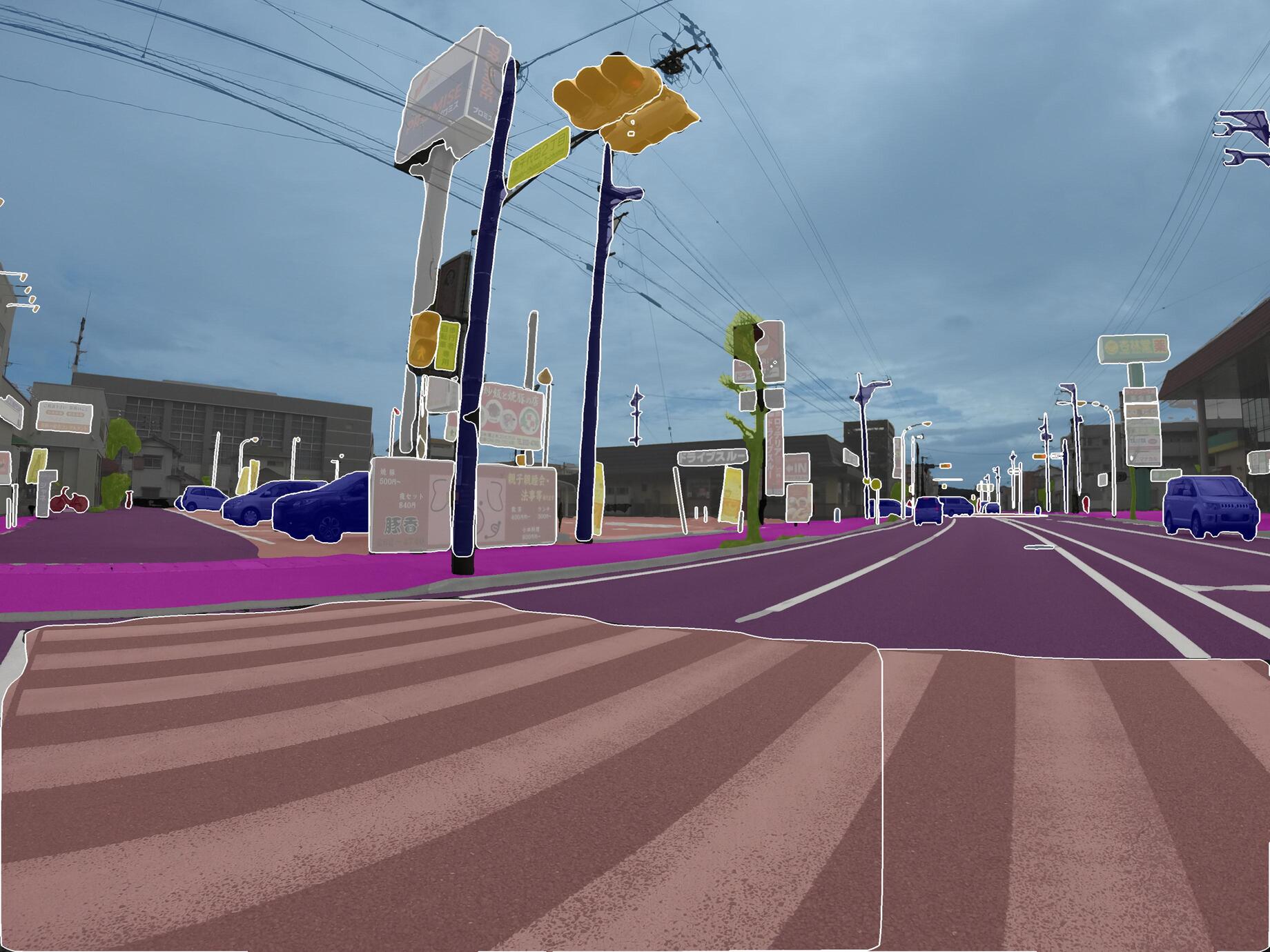}%
  \includegraphics[width=0.25\textwidth]{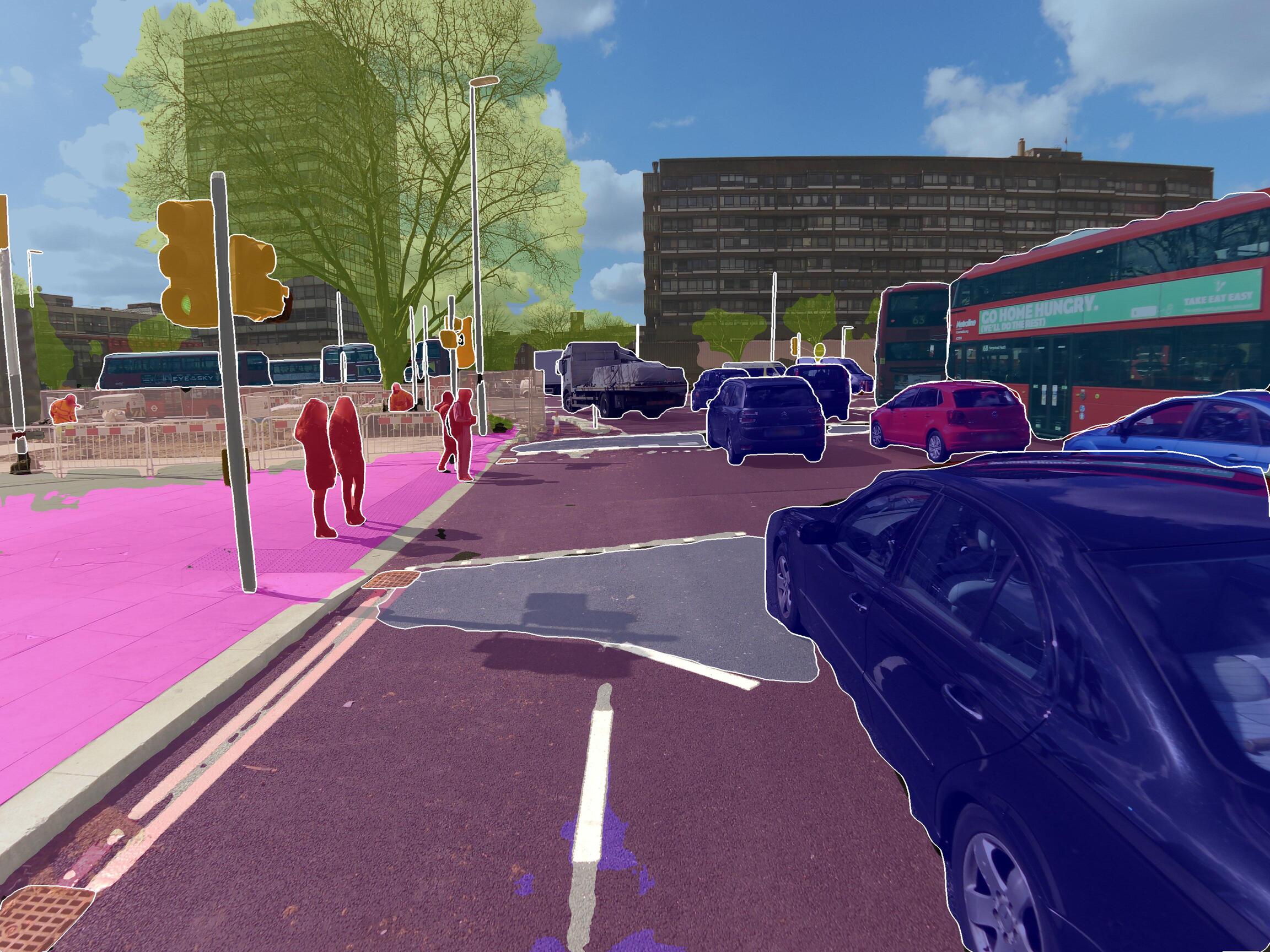}\\
  \includegraphics[width=0.25\textwidth]{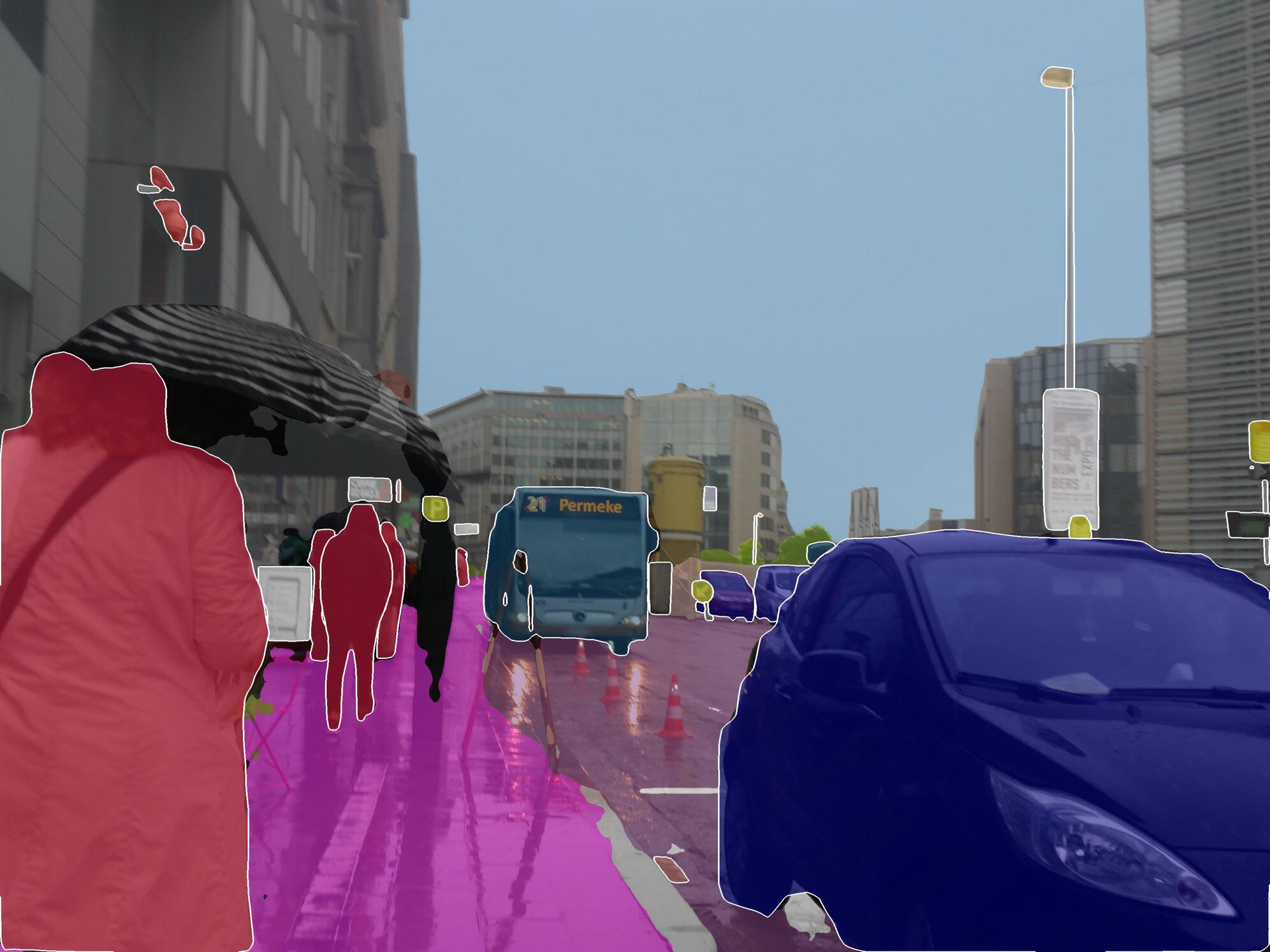}%
  \includegraphics[width=0.25\textwidth]{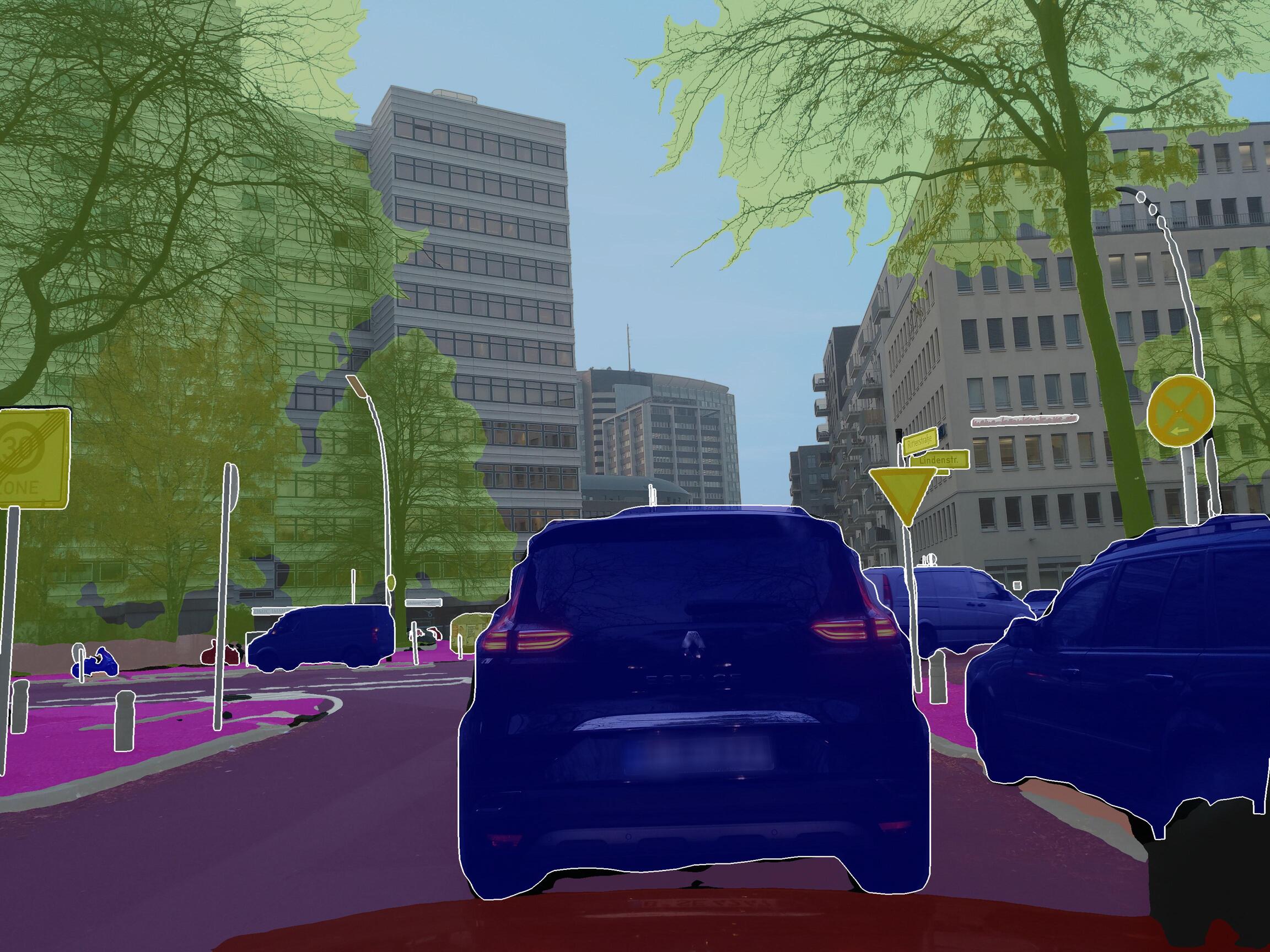}%
  \includegraphics[width=0.25\textwidth]{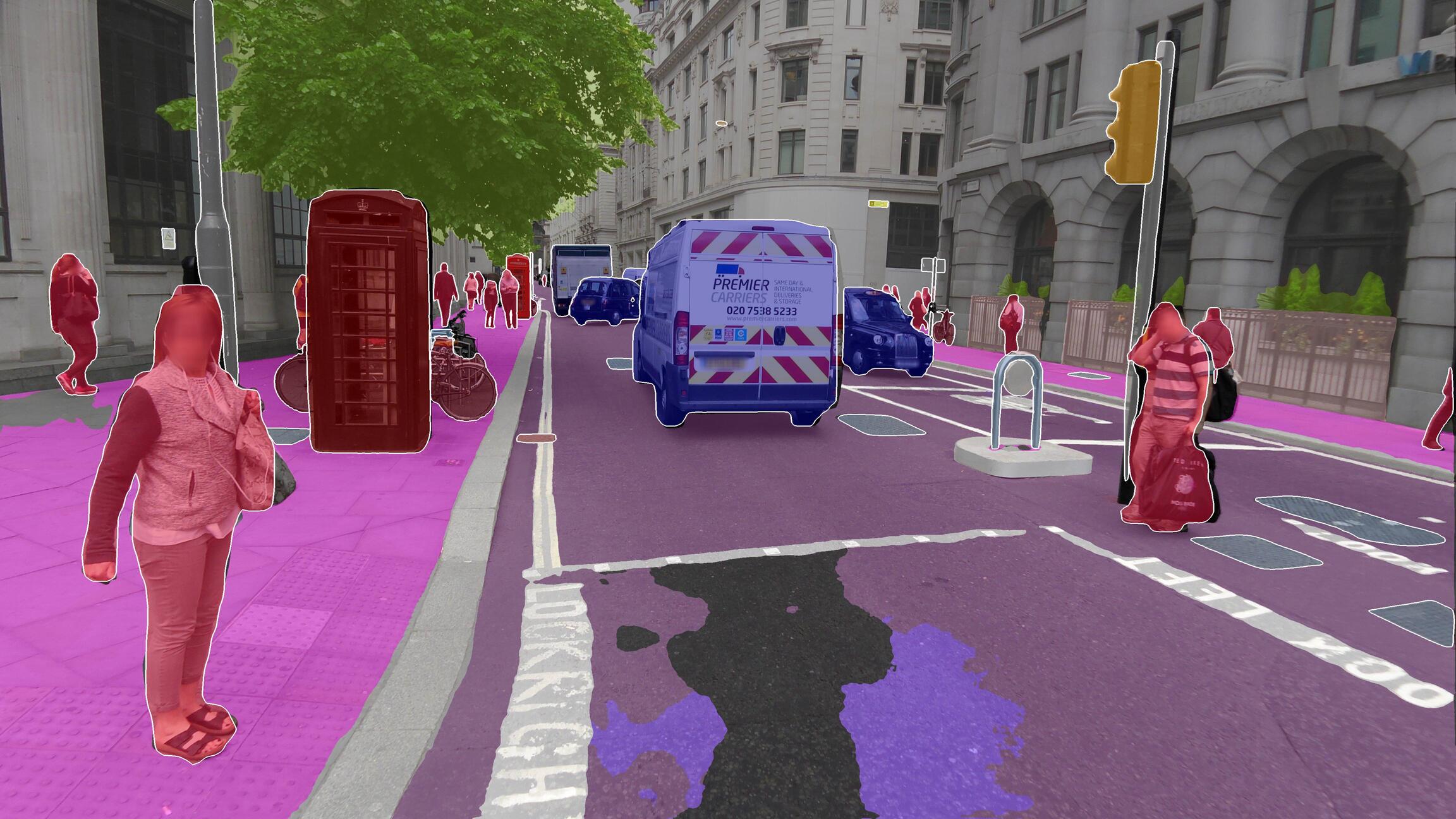}%
  \includegraphics[width=0.25\textwidth]{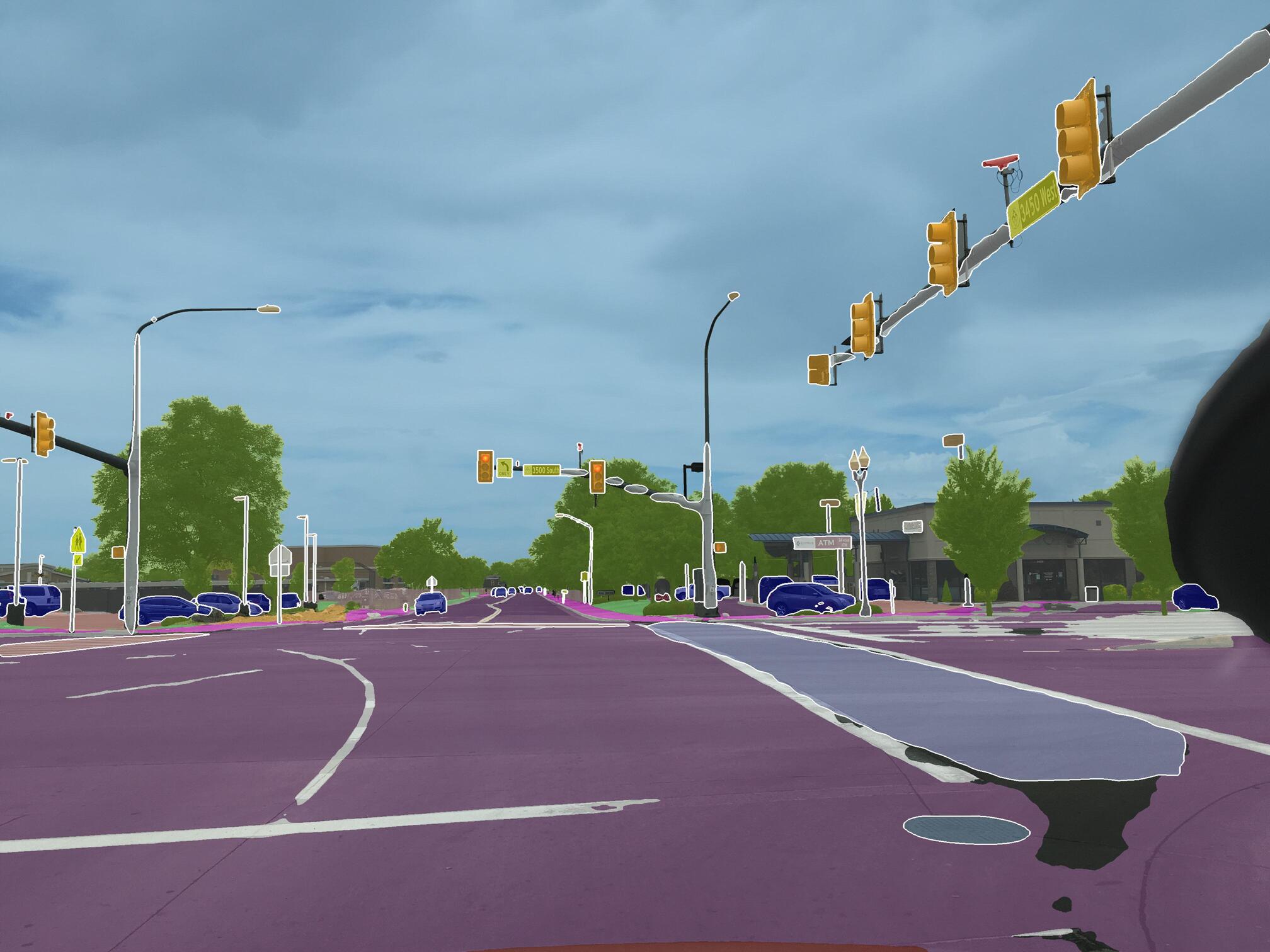}\\
  \includegraphics[width=0.25\textwidth]{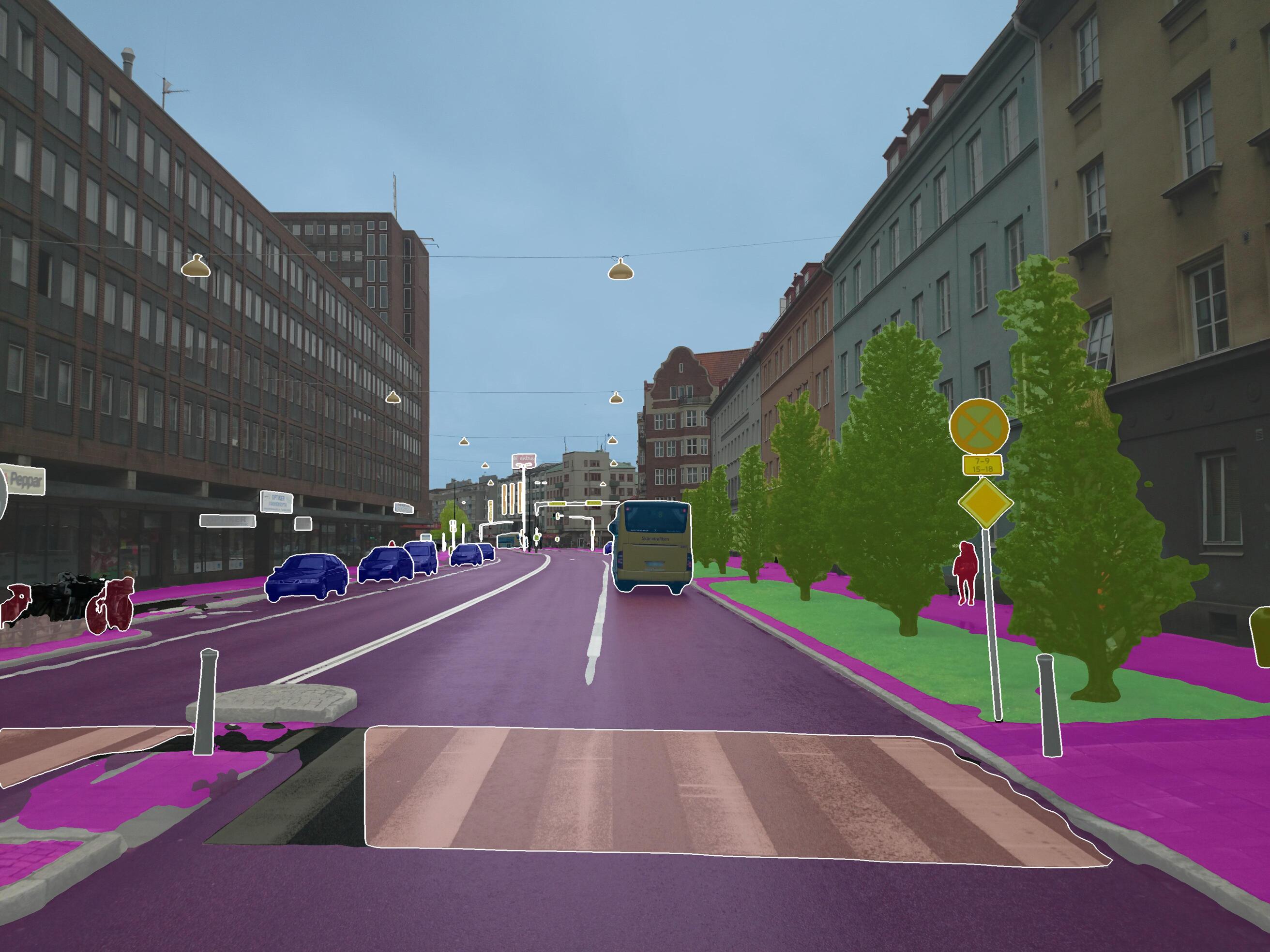}%
  \includegraphics[width=0.25\textwidth]{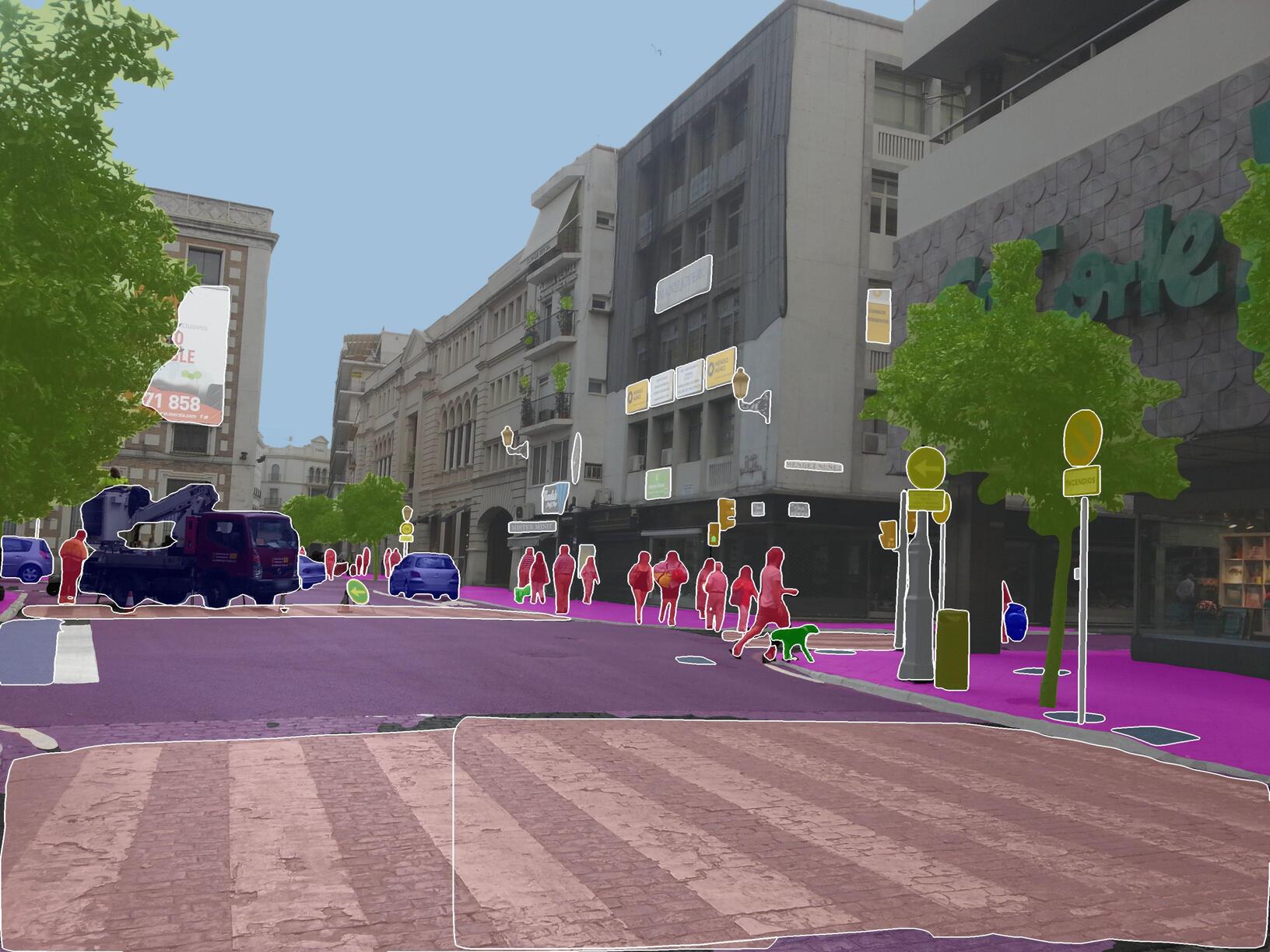}%
  \includegraphics[width=0.25\textwidth]{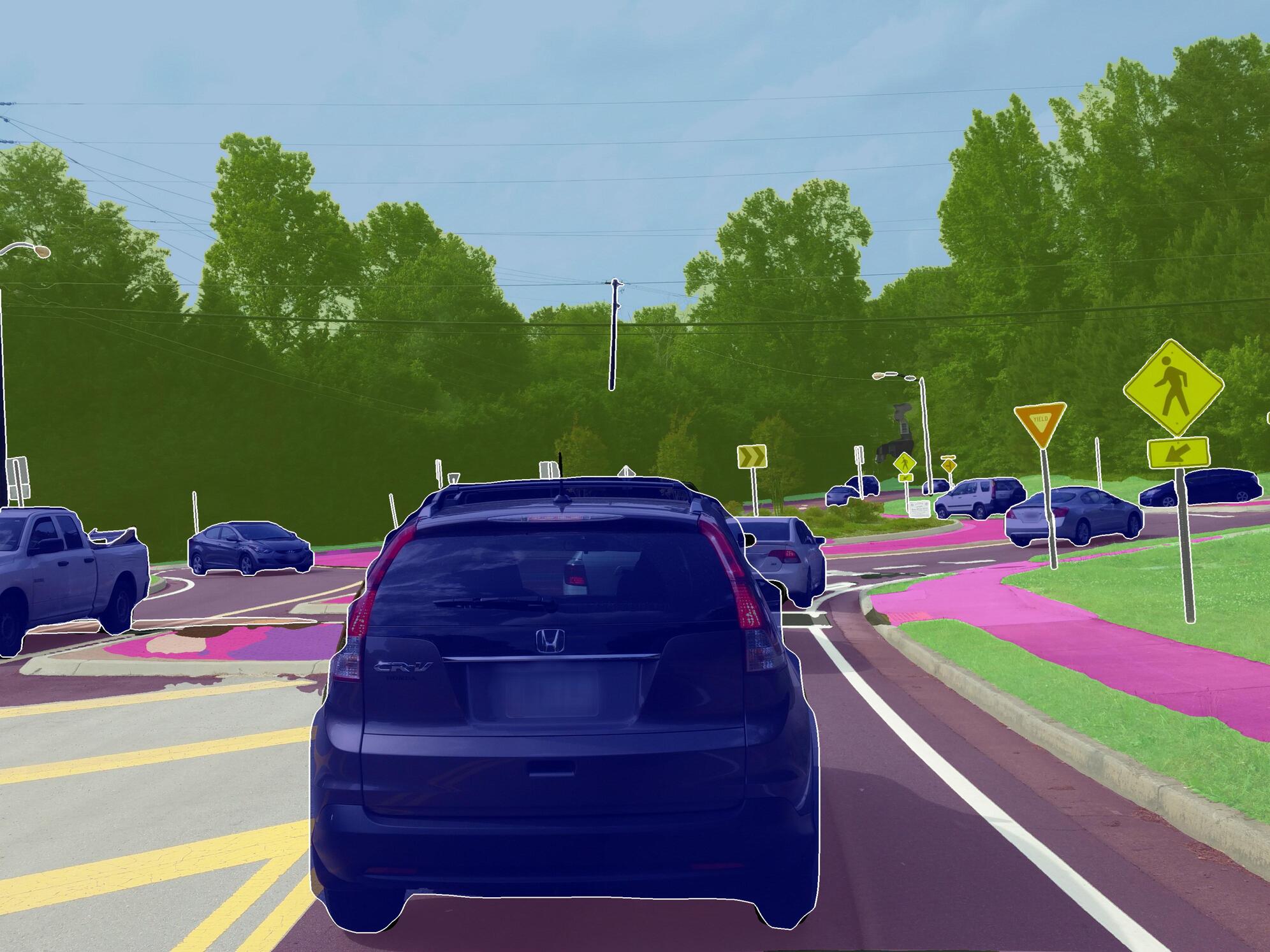}%
  \includegraphics[width=0.25\textwidth]{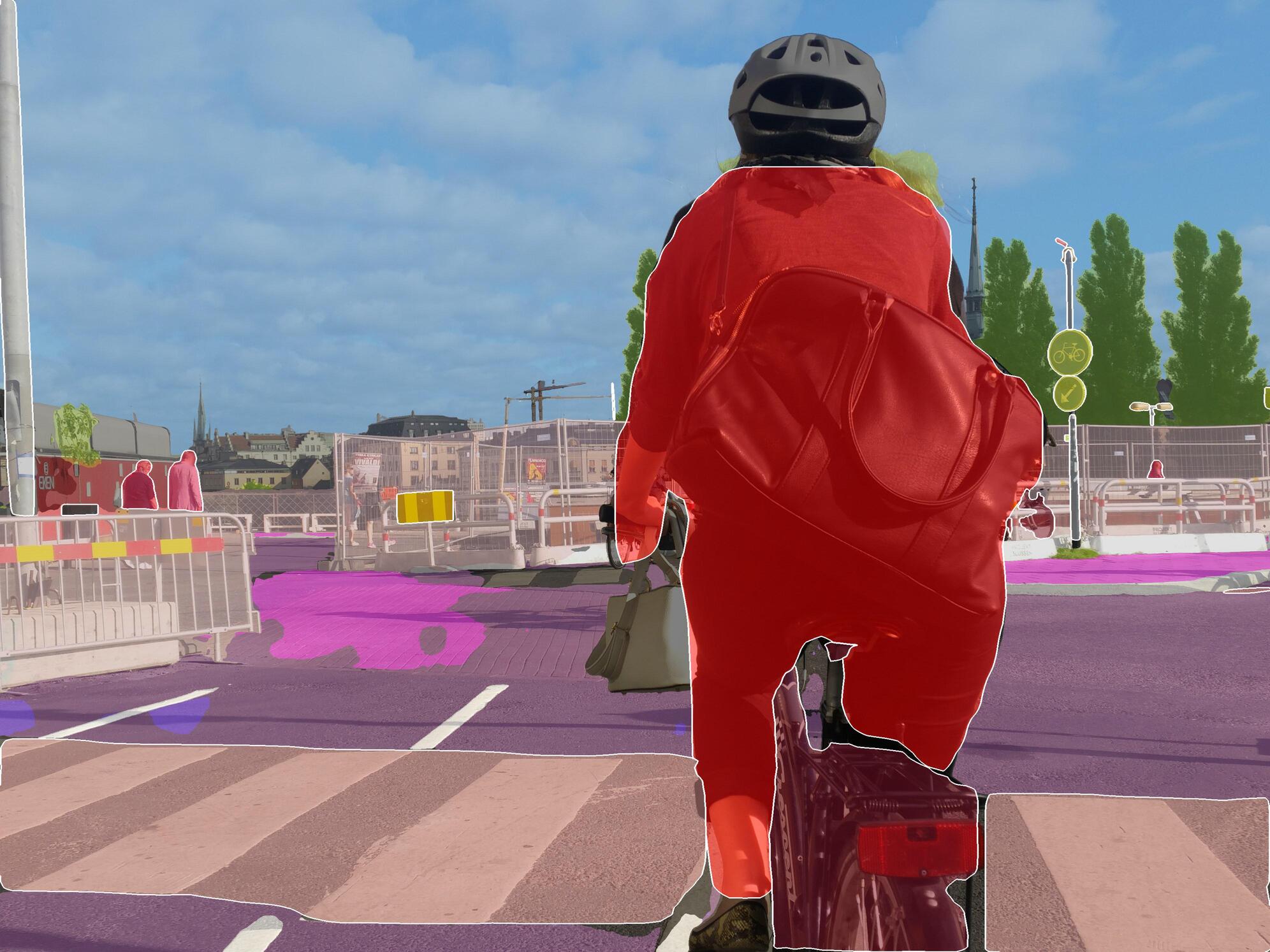}\\
  \caption{Sample outputs of \Crop + \CABB + \Su on Mapillary Vistas. Best viewed on screen.}
  \label{fig:qual-mvd}
\end{figure*}

\begin{figure*}
  \centering
  \includegraphics[width=0.33\textwidth]{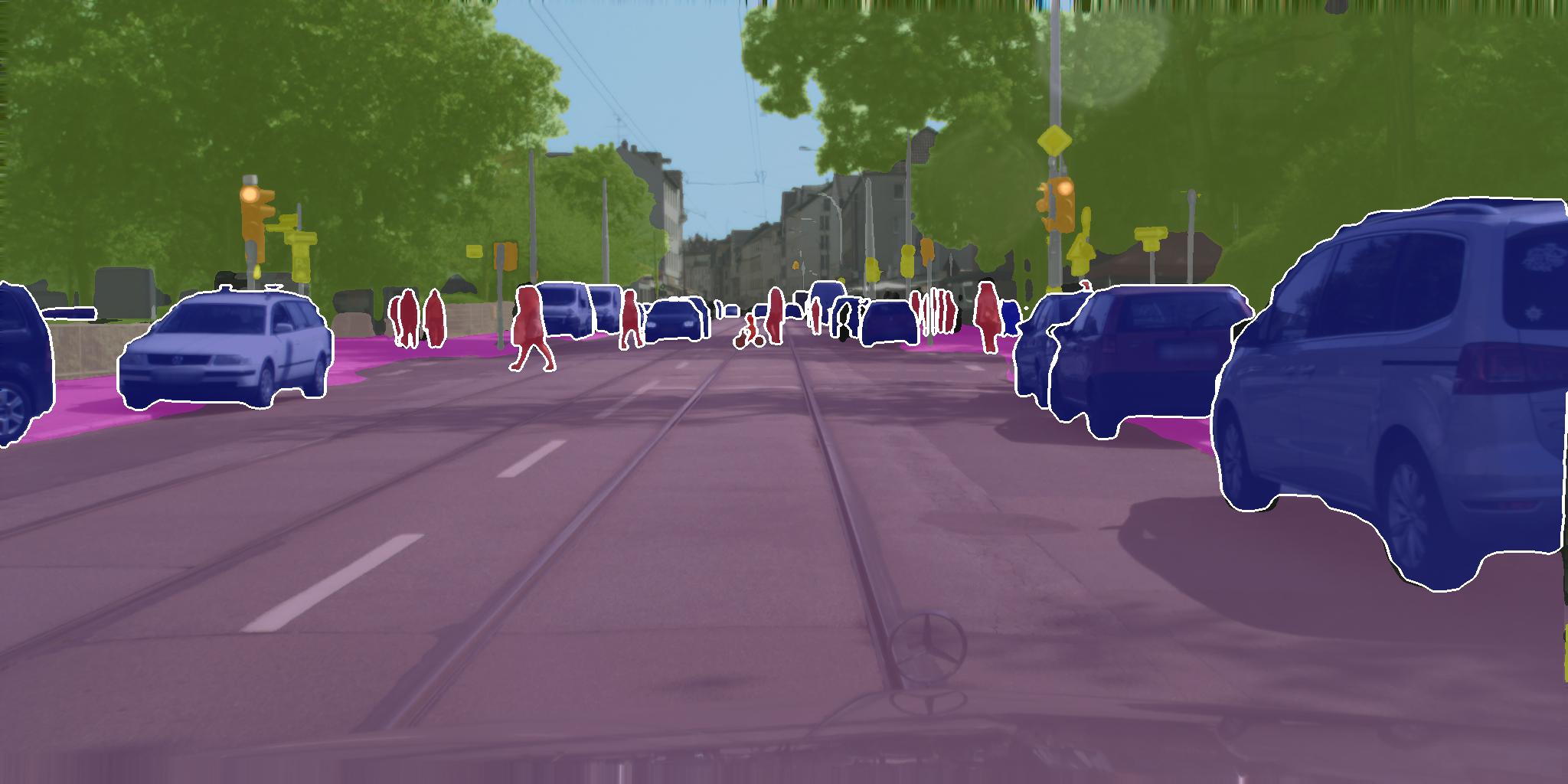}%
  \includegraphics[width=0.33\textwidth]{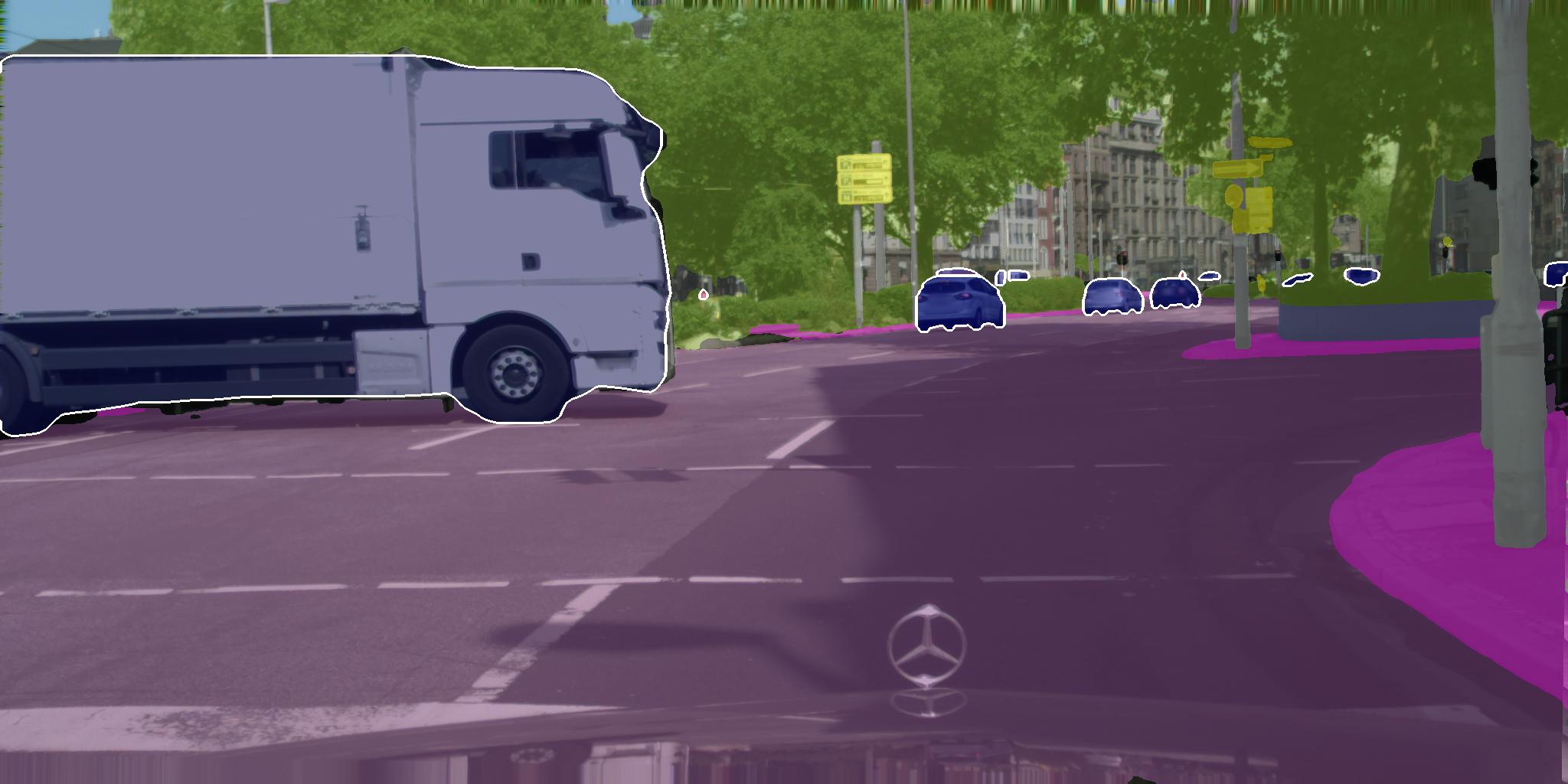}%
  \includegraphics[width=0.33\textwidth]{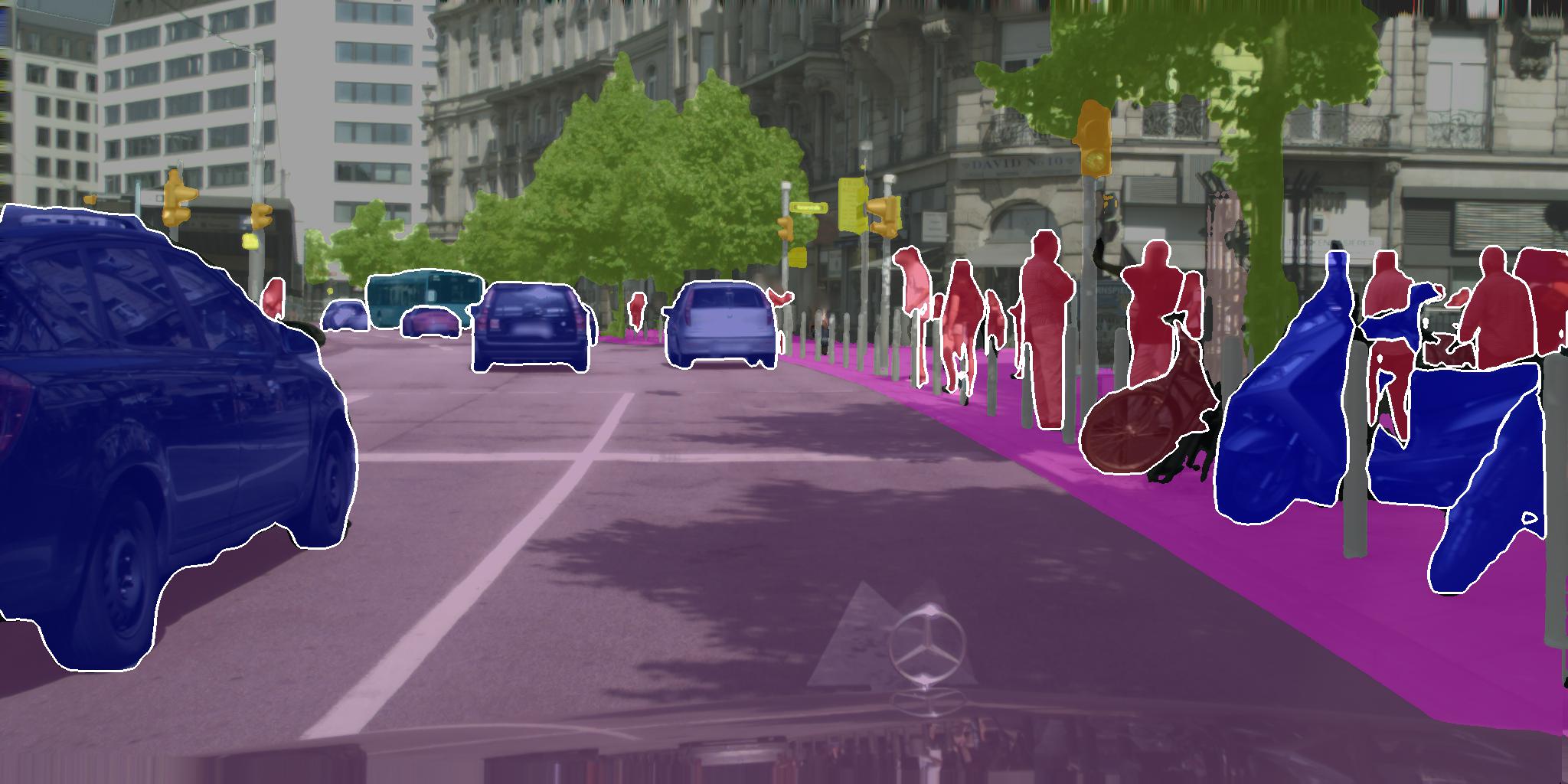}\\
  \includegraphics[width=0.33\textwidth]{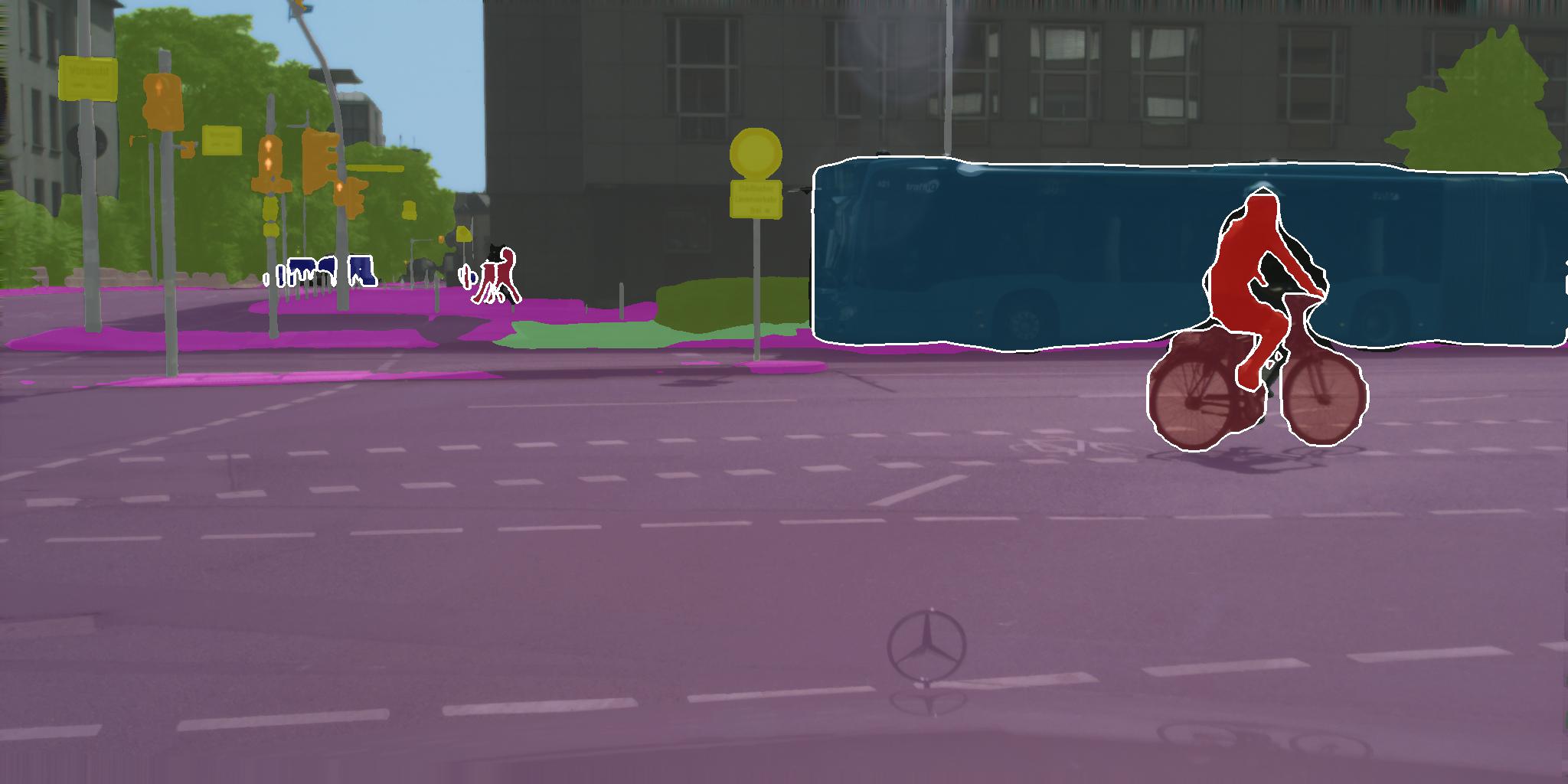}%
  \includegraphics[width=0.33\textwidth]{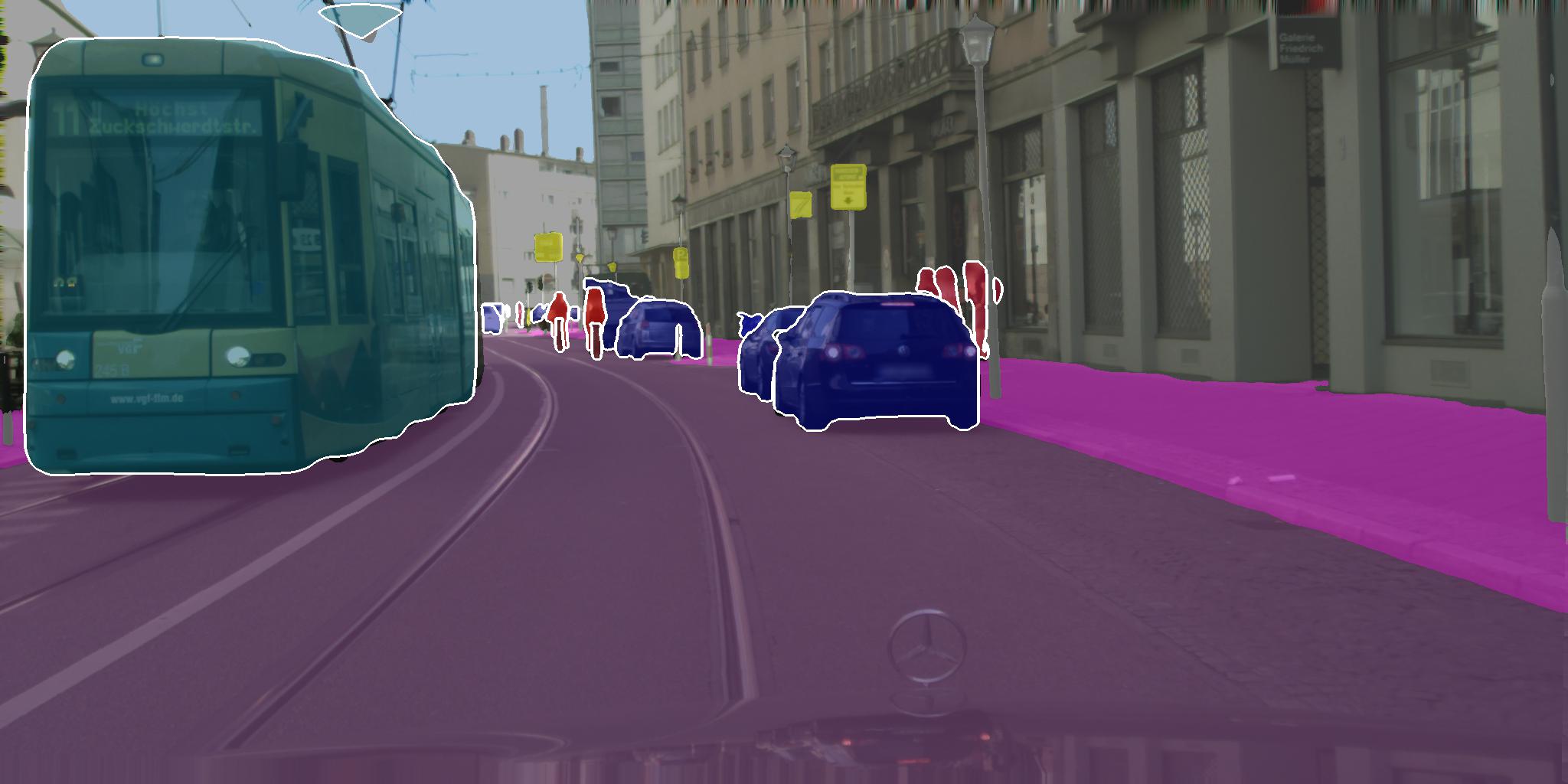}%
  \includegraphics[width=0.33\textwidth]{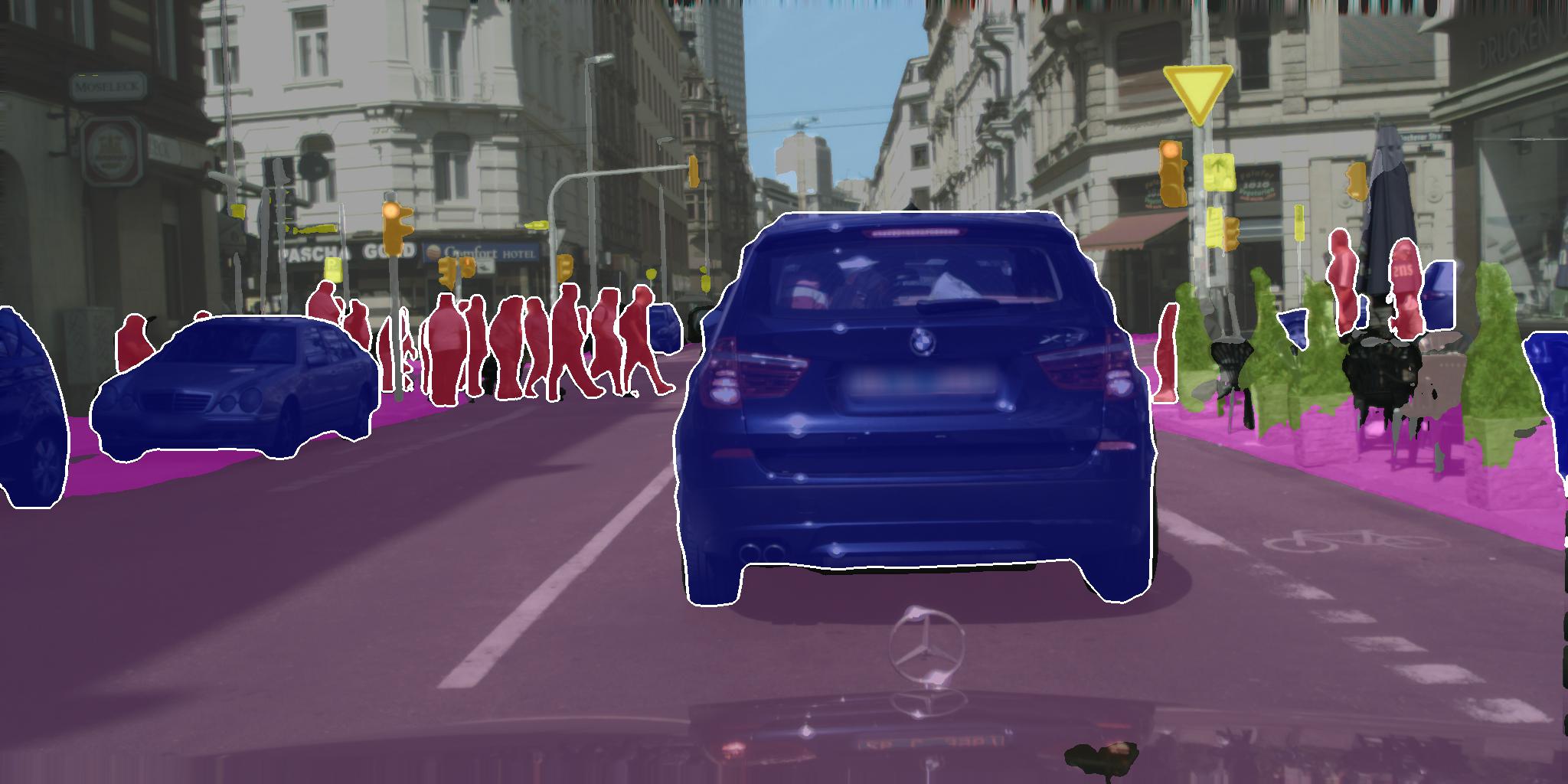}\\
  \includegraphics[width=0.33\textwidth]{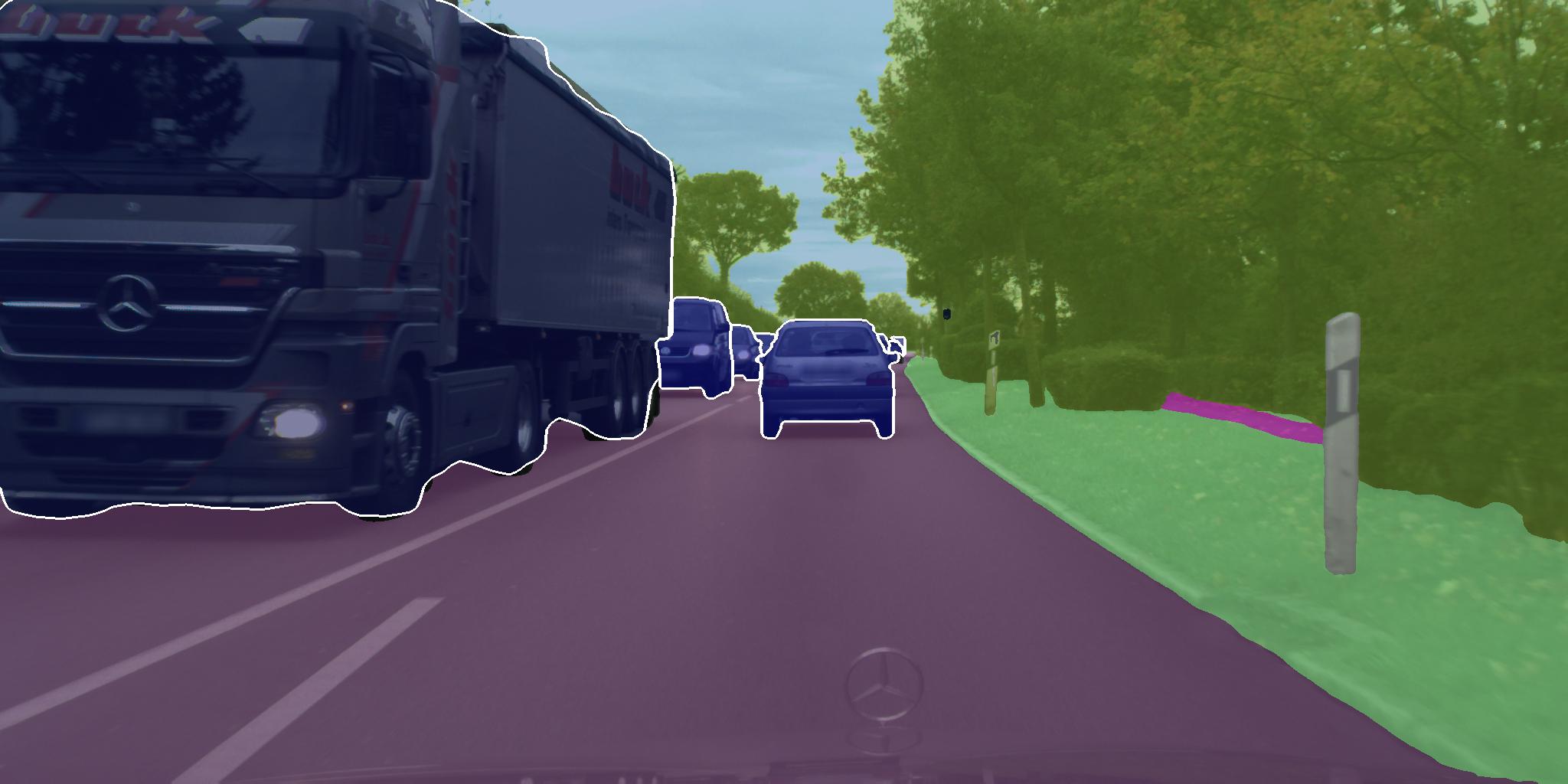}%
  \includegraphics[width=0.33\textwidth]{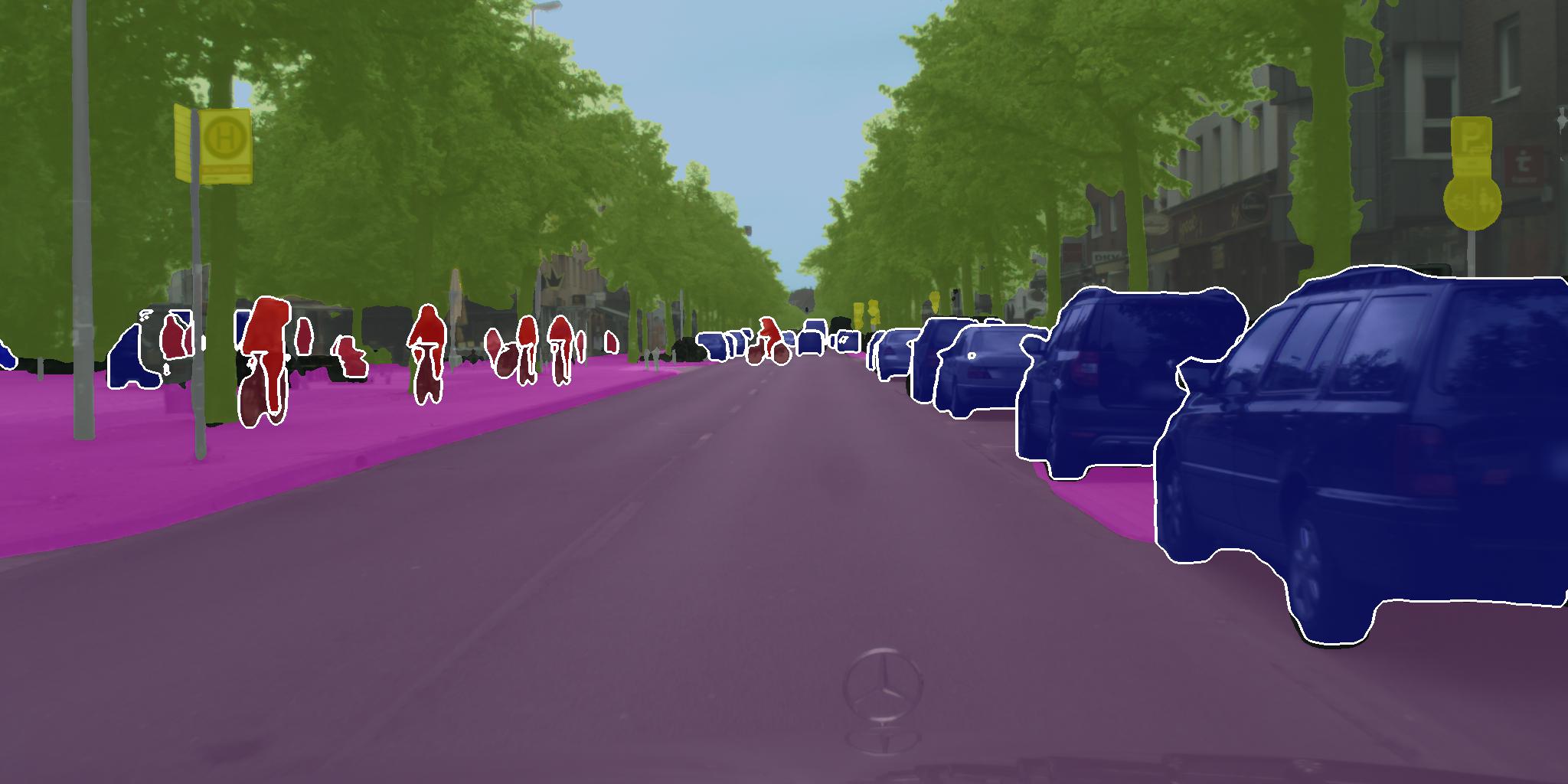}%
  \includegraphics[width=0.33\textwidth]{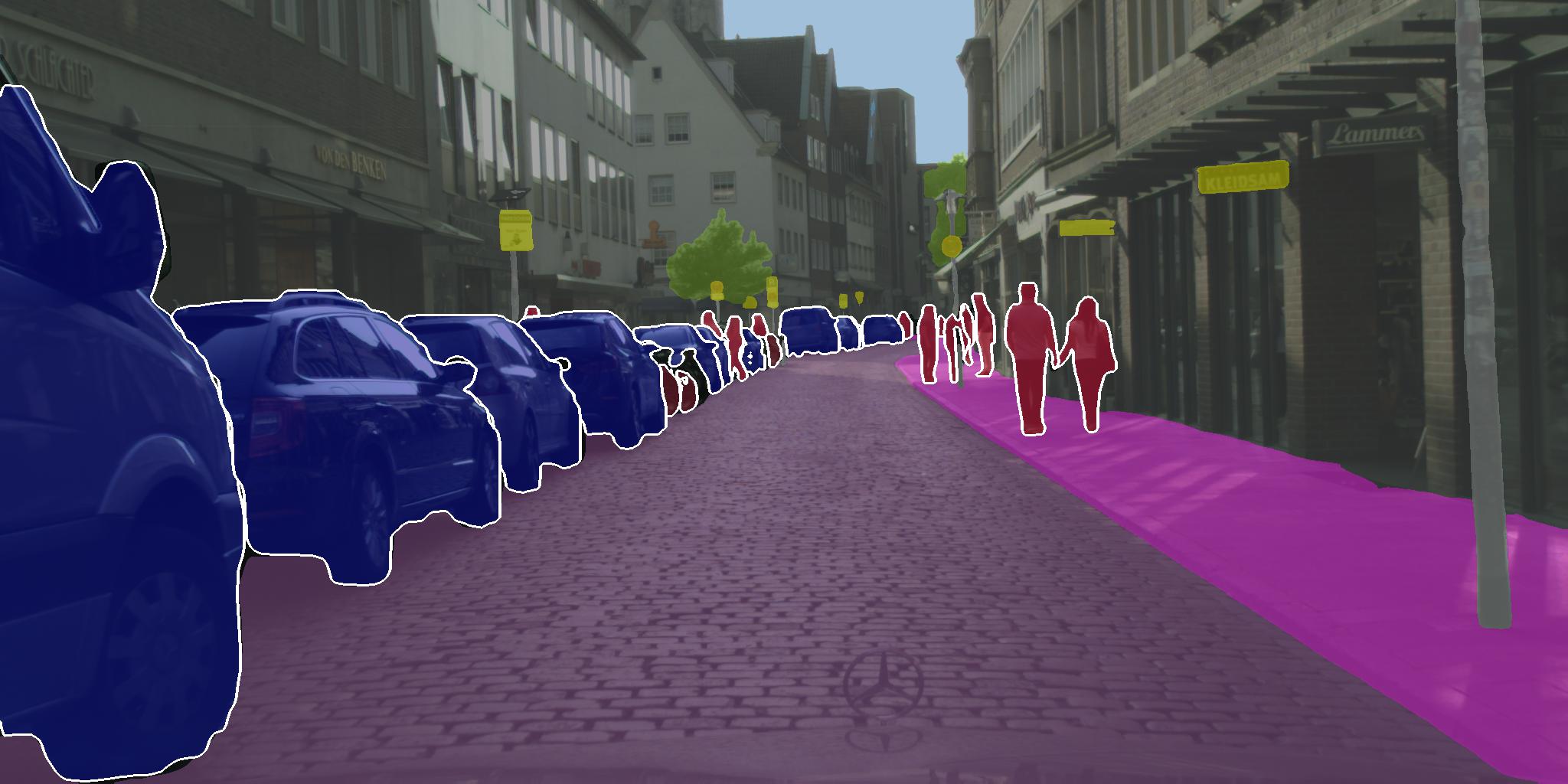}\\
  \caption{Sample outputs of \Crop + \CABB + \Su on Cityscapes. Best viewed on screen.}
  \label{fig:qual-cityscapes}
\end{figure*}

\begin{figure*}
  \centering
  \includegraphics[width=0.33\textwidth]{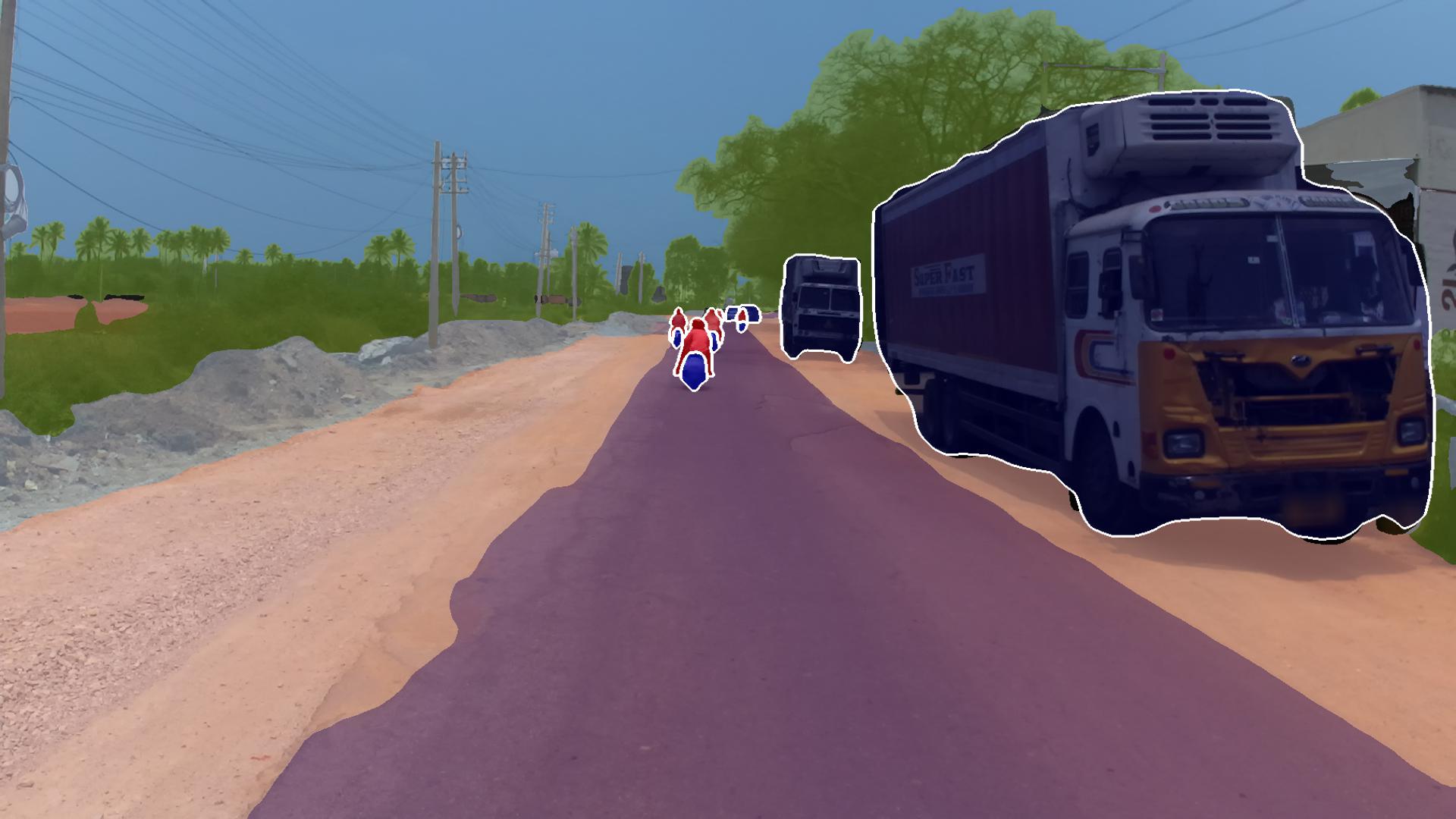}%
  \includegraphics[width=0.33\textwidth]{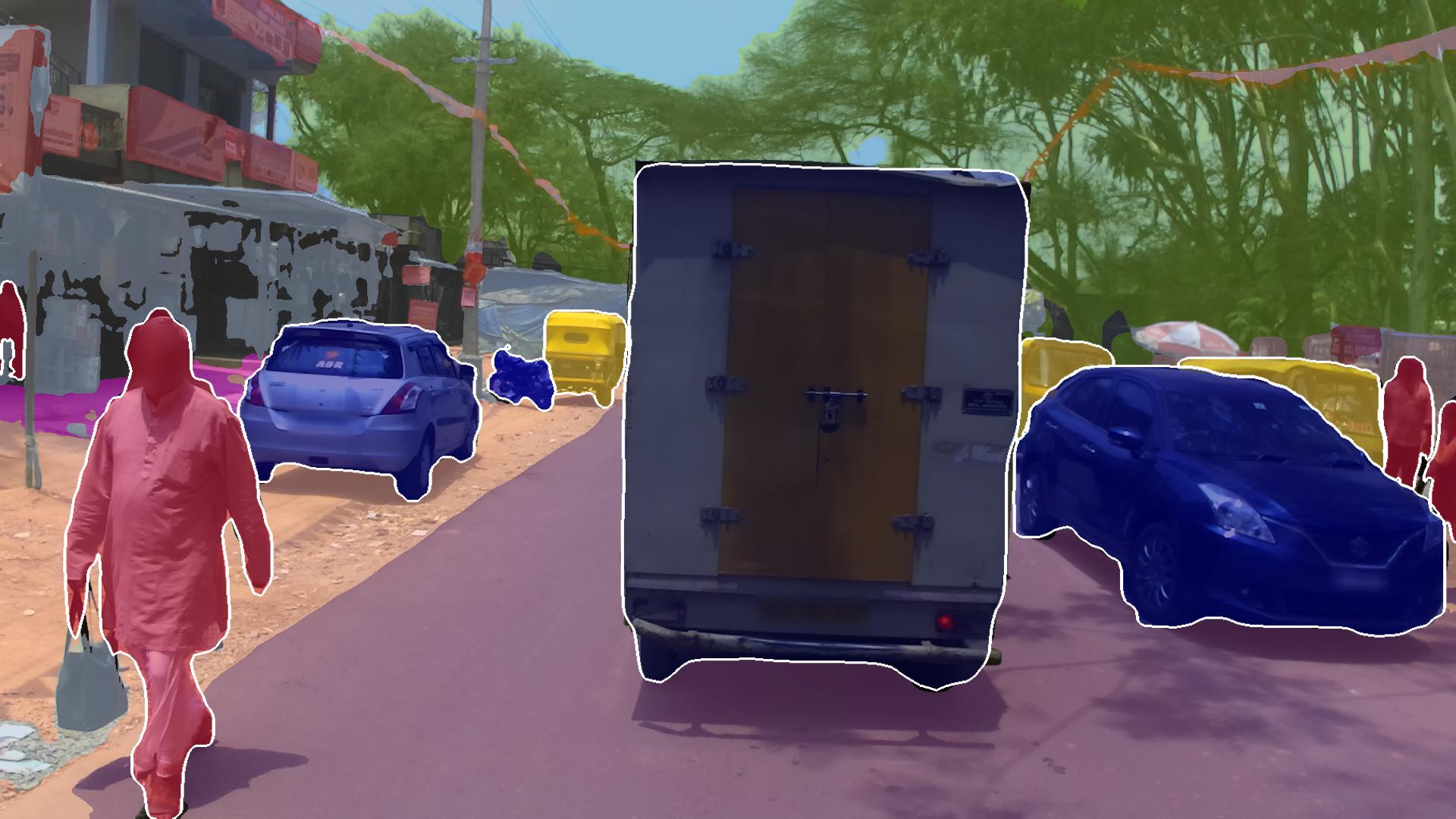}%
  \includegraphics[width=0.33\textwidth]{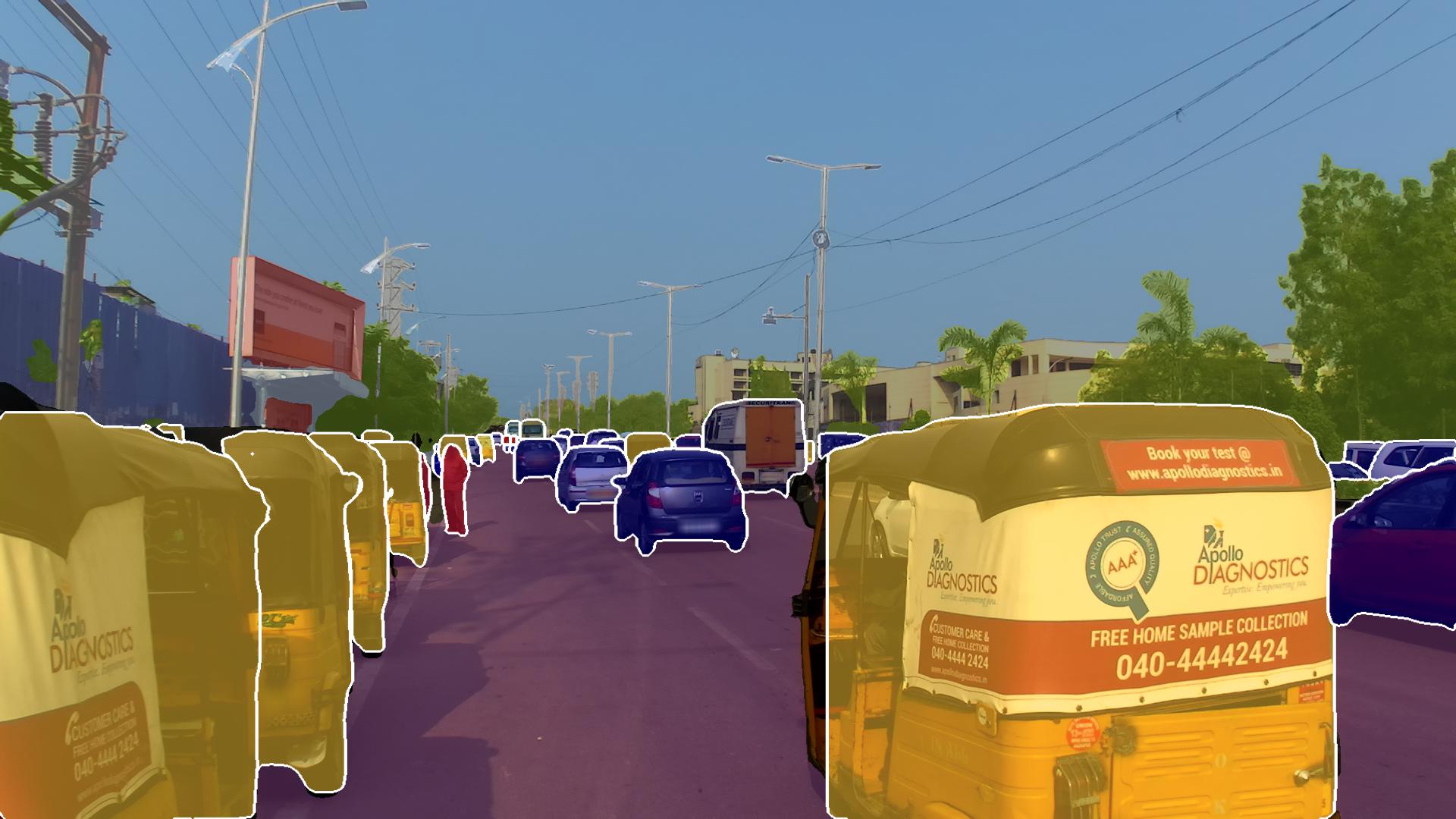}\\
  \includegraphics[width=0.33\textwidth]{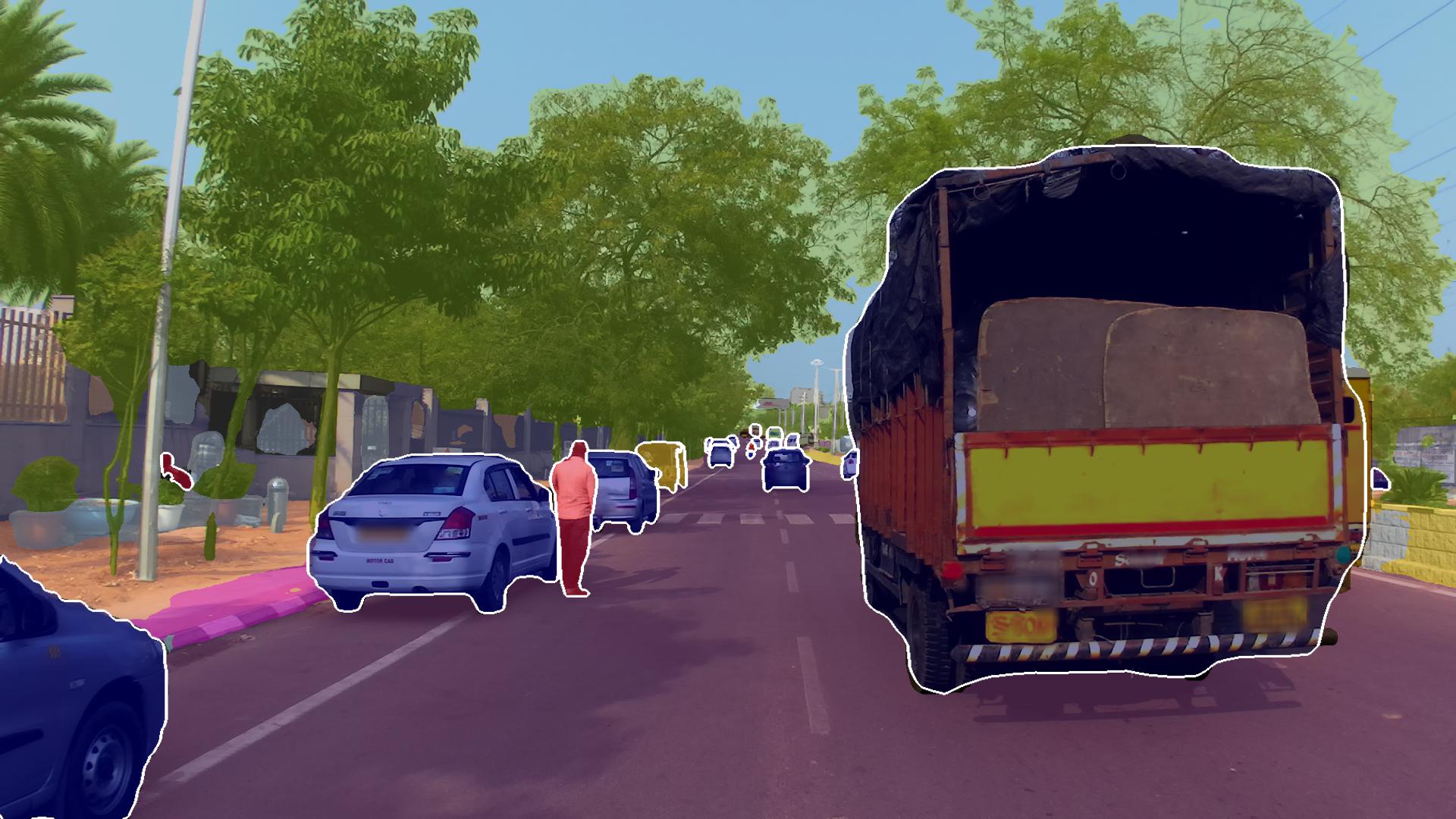}%
  \includegraphics[width=0.33\textwidth]{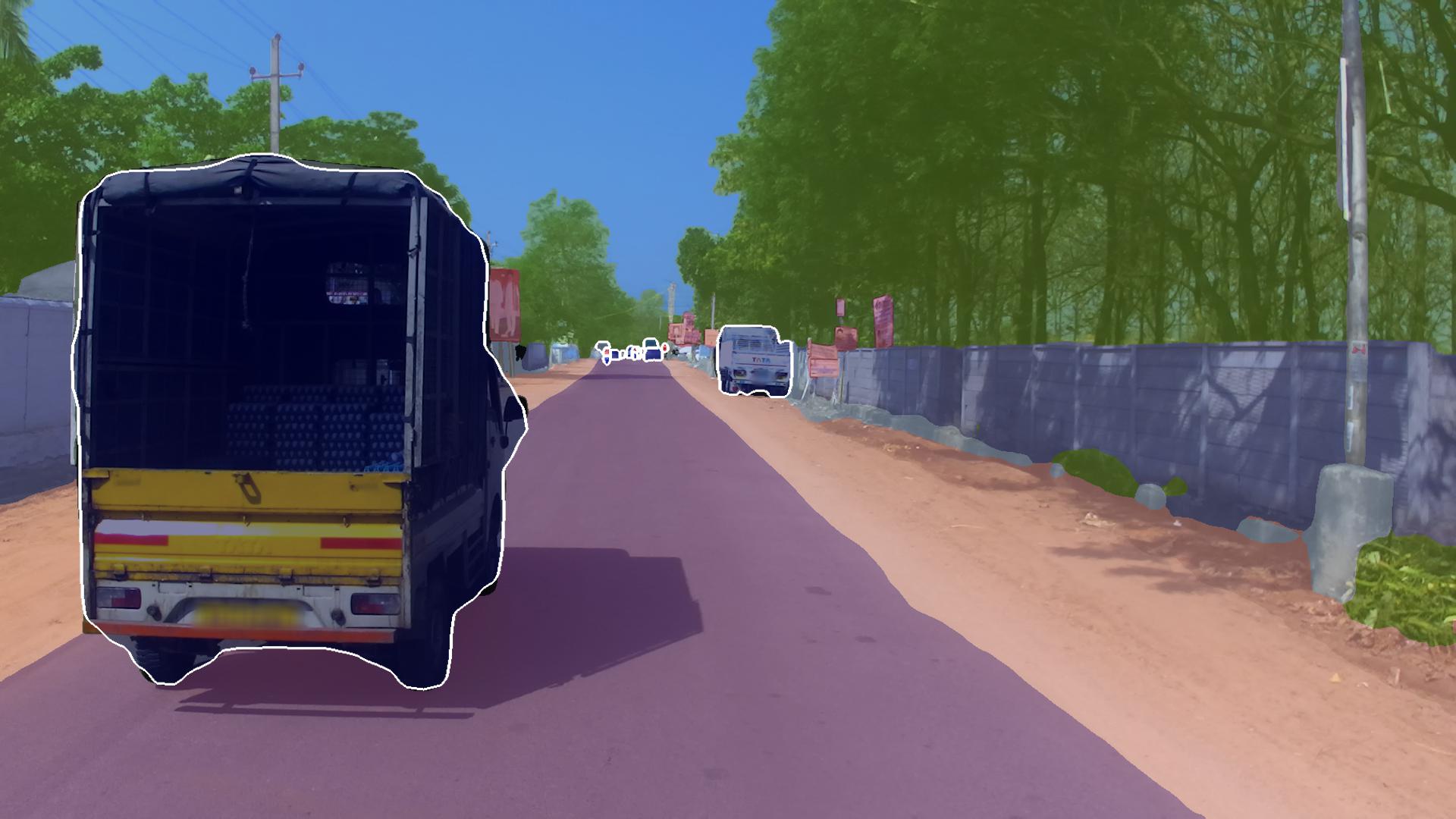}%
  \includegraphics[width=0.33\textwidth]{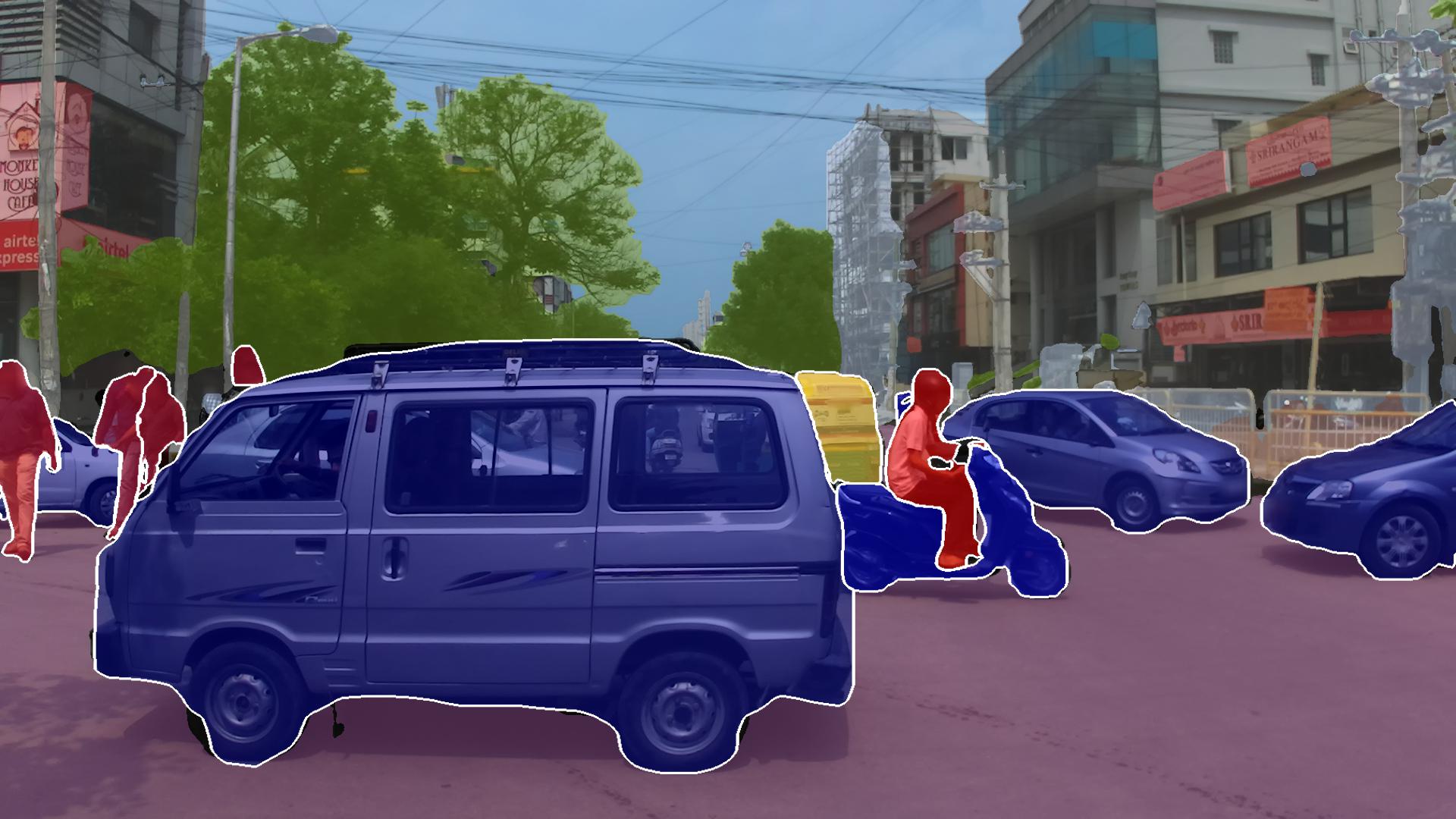}\\
  \includegraphics[width=0.33\textwidth]{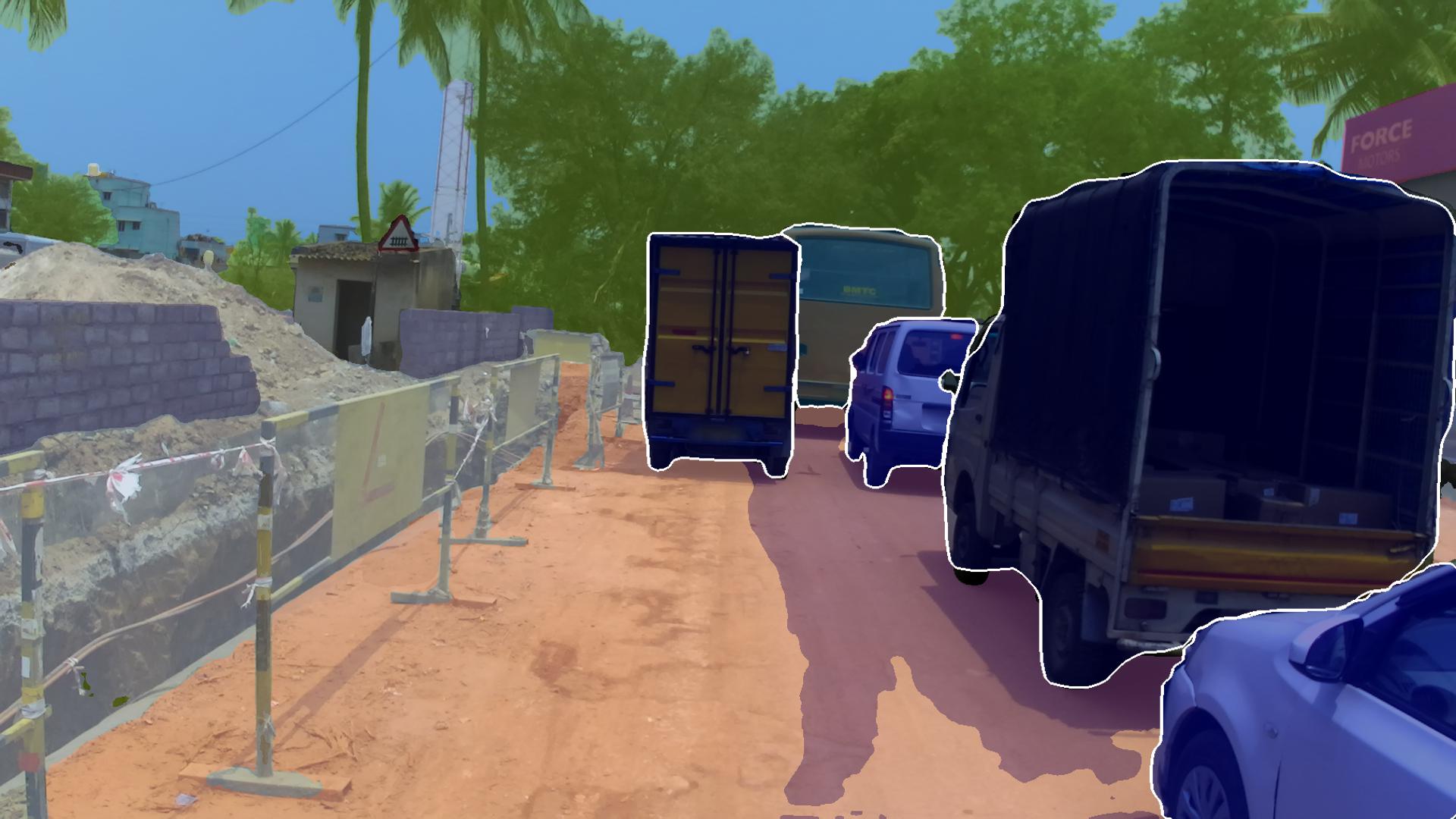}%
  \includegraphics[width=0.33\textwidth]{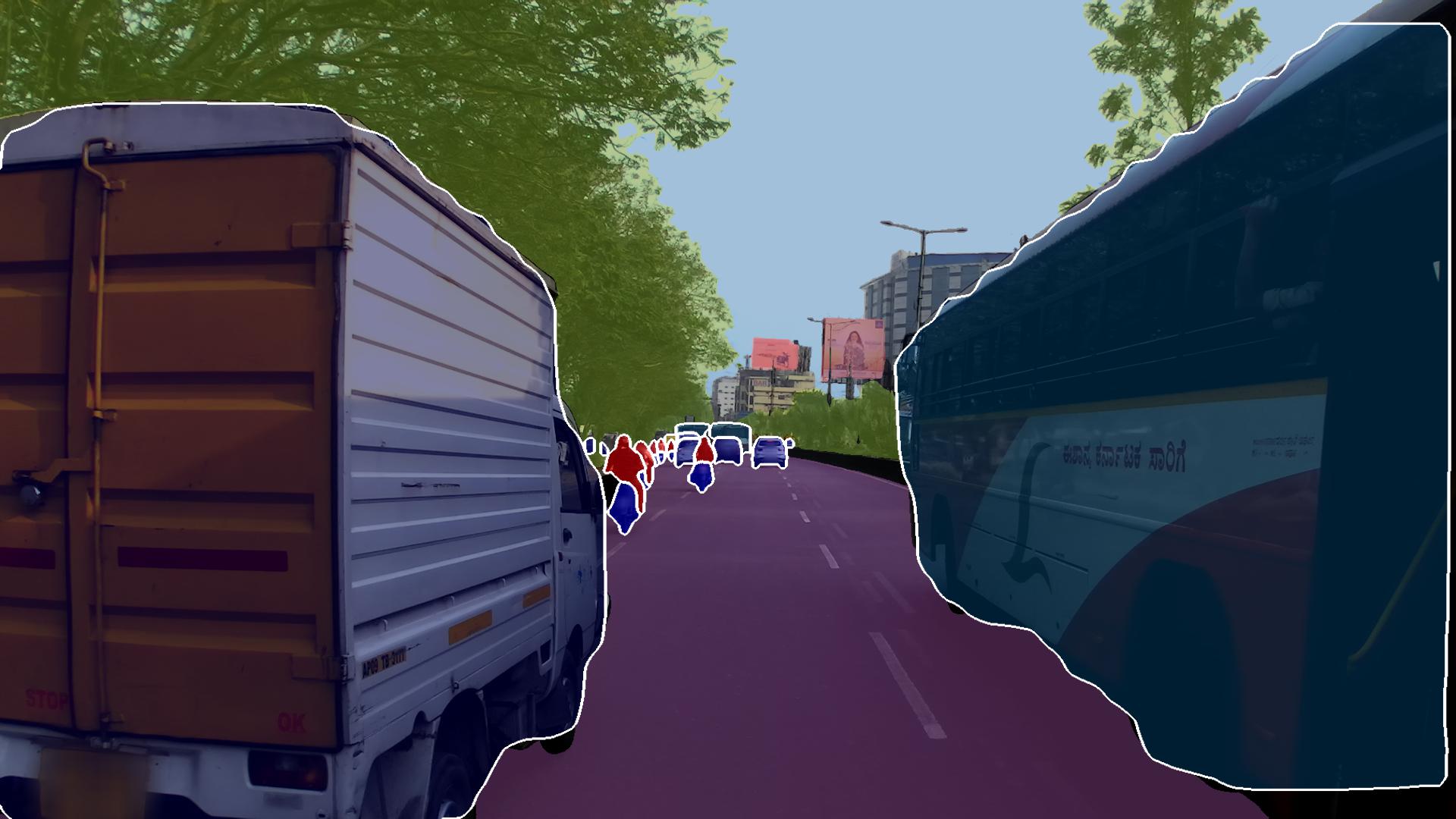}%
  \includegraphics[width=0.33\textwidth]{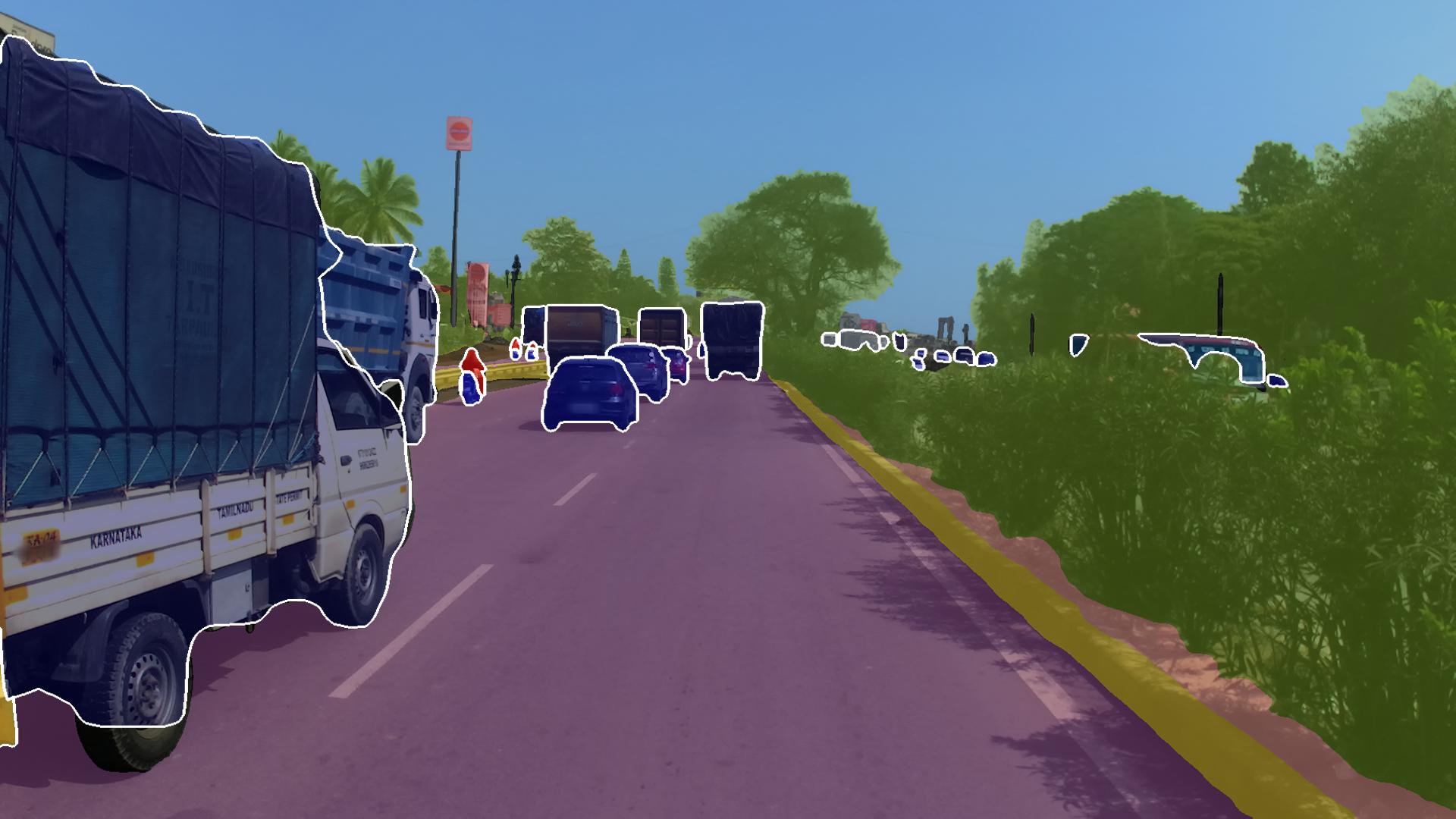}\\
  \caption{Sample outputs of \Crop + \CABB + \Su on the Indian Driving Dataset. Best viewed on screen.}
  \label{fig:qual-idd}
\end{figure*}

{\small

}

\end{document}